\renewcommand*{\backref}[1]{}
\renewcommand*{\backrefalt}[4]{%
    \ifcase #1 (Not cited)%
    \or        (Cited on page~#2)%
    \else      (Cited on pages~#2)%
    \fi}
\date{}
\newcommand{\cmark}{\text{\ding{51}}}
\newcommand{\xmark}{\text{\ding{55}}}
\newtheorem{theorem}{Theorem}[section]
\newtheorem{lemma}[theorem]{Lemma}
\newtheorem{proposition}{Proposition}
\DeclareMathOperator*{\argmax}{arg\,max}
\def\TM{\mathcal{T}}
\def\dNN{\text{NN}}
\def\RR{\mathbb{R}}
\def\NN{\mathbb{N}}
\def\SS{\mathbb{S}}
\def\Pp{\mathcal{P}}
\def\Ss{\mathcal{S}}
\def\Rr{\mathcal{R}}
\def\Tt{\mathcal{T}}
\def\bx{\mathbf{x}}
\def\by{\mathbf{y}}
\def\bz{\mathbf{z}}
\def\dOperation{W_{d_{\Tt_{o}}}}
\def\dIn{W_{d_{\Tt_{\!-}}}}
\def\dOut{W_{d_{\Tt_{\!+}}}}
\global\long\def\bx{\mathbf{x}}
\global\long\def\bz{\mathbf{z}}
\global\long\def\bX{\mathbf{X}}
\global\long\def\by{\mathbf{y}}
\global\long\def\bK{\boldsymbol{K}}
\global\long\def\and{\cap}
\global\long\def\var{\text{Var}}
\global\long\def\cov{\text{Cov}}
\global\long\def\idenmat{\mathbf{I}}
\global\long\def\argmax#1{\underset{_{#1}}{\text{argmax}} }
\icmltitlerunning{Optimal Transport Kernels for Sequential and Parallel Neural Architecture Search}
\begin{document}

\twocolumn[

\icmltitle{Optimal Transport Kernels for Sequential and Parallel \\ Neural Architecture Search}

\begin{icmlauthorlist}

\icmlauthor{Vu Nguyen$\text{}^*$} {amz}
\icmlauthor{Tam Le$\text{}^*$} {riken}
\icmlauthor{Makoto Yamada} {riken,kyoto}
\icmlauthor{Michael A. Osborne } {ox}

\end{icmlauthorlist}

\icmlaffiliation{amz}{Amazon Adelaide (work done prior to joining Amazon)}
\icmlaffiliation{riken}{RIKEN AIP}
\icmlaffiliation{kyoto}{Kyoto University}
\icmlaffiliation{ox}{University of Oxford}

\icmlcorrespondingauthor{Vu Nguyen}{vu@ieee.org}

\icmlkeywords{Machine Learning, ICML}

\vskip 0.3in
]

\printAffiliationsAndNotice{\icmlEqualContribution} % otherwise use the standard text.

% \maketitle

% This is as opposed to the existing optimal transport distance \cite{kandasamy2018neural}.

% The performance of deep neural network depends on its architecture. 

% A complex and non-continuous structure of the networks raises challenges for modeling that the conventional Euclidean space may fail to capture the similarity across the architectures.

% \begin{abstract}
% Neural architecture search (NAS) automates the engineering process for identifying the best design for deep neural network. One of the challenges in searching the complex and non-continuous architectures is to measure their similarity that the conventional Euclidean space may fail to capture. Optimal transport (OT) is resilient to such complex structure by taking the similarity as the minimal cost for transporting a network into another. However, the OT is generally not negative definite, an important property for constructing positive semi definite kernels required in common machine learning frameworks. Therefore, we develop a negative definite variant of OT, tree-Wasserstein (TW), for measuring similarity across architectures. We demonstrate the proposed TW on the Gaussian process surrogate model for sequential and parallel NAS setting. As a secondary contribution, we present a novel parallel NAS approach using quality k-determinantal point process on the GP posterior to select diverse and high-performing architectures from a discrete set of candidates. We empirically demonstrate that our approaches with TW geometry outperform other baselines in both sequential and parallel NAS.
% \end{abstract}

\vspace{-10pt}

\begin{abstract}
Neural architecture search (NAS) automates the design of deep neural networks. One of the main challenges in searching complex and non-continuous architectures is to compare the similarity of networks that the conventional Euclidean metric may fail to capture. Optimal transport (OT) is resilient to such complex structure by considering the minimal cost for transporting a network into another. However, the OT is generally not negative definite which may limit its ability to build the positive-definite kernels required in many kernel-dependent frameworks. Building upon tree-Wasserstein (TW), which is a negative definite variant of OT, we develop a novel discrepancy for neural architectures, and demonstrate it within a Gaussian process (GP) surrogate model for the sequential NAS settings. Furthermore, we derive a novel parallel NAS, using quality k-determinantal point process on the GP posterior, to select diverse and high-performing architectures from a discrete set of candidates. We empirically demonstrate that our TW-based approaches outperform other baselines in both sequential and parallel NAS.
\end{abstract}

%  for transporting not only layer operations, but also network structure across neural architectures.  

% While preserving the elegant properties of OT, TW leads to a positive semi-definite kernel, a required property for modeling with Gaussian process (GP).

% in which we draw a connection to use .
%One of the challenges in optimizing neural architecture is how to characterize similarity.

% Our distance finds the minimal cost of transporting the one network to another. 
%NASBOW := NAS + BO + TW

%valid kernel in Wasserstein geometry, capturing both global (e.g., network structure) and local (e.g., operations in network) information in neural networks

%{Revisiting Bayes Opt NAS with Tree-Wasserstein and Batch Extension}
%{Batch NAS with Tree-Wasserstein and kDPP}

%Permutation invariant.

%\Tam{@Vu: you may need to clarify between similarity and distance (dissimilarity).}

%%%%%%%%%%%%%%%%%%%%%%%%%%%%%%%%%%%%%%%%%
%%%%%%%%%%%%%%%%%%%%%%%%%%%%%%%%%%%%%%%%%
\section{Introduction}
\label{sec:introduction}

% Where we need to provide a NAS system with a dataset and a task (classification, regression, etc), and it will give us the architecture.
Neural architecture search (NAS) is the process of automating architecture engineering to find the best design of our neural network model. This output architecture will perform well for a given dataset. With the increasing interest in deep learning in recent years, NAS has attracted significant research attention \cite{dong2019searching,elsken2019efficient,liu2018hierarchical,liu2019darts,luo2018neural,real2019regularized,real2017large,shah2018amoebanet,suganuma2017genetic,xie2017genetic,yao2020efficient}. We refer the interested readers to the survey \cite{elsken2019neural} for a detailed review of NAS and to the comprehensive list\footnote{\url{https://www.automl.org/automl/literature-on-neural-architecture-search}} for all of the related papers in NAS. %We also summarize some of the key related work in the Appendix \S \ref{sec:related_work}.
%Particularly, NAS aims to find the best architecture using the fewest number of evaluations by following a search strategy that will maximize the performance over the dataset.

%[to be rephrase] In the world of artificial intelligence, everyone is immersed in the field of deep learning. Day by day, deep learning takes place in the position of every machine learning models. The capacity of learning like a human brain made it the best fit to be used in the field of artificial intelligence. The last few years have seen much success of deep neural networks in many challenging applications such as speech recognition, computer vision, image recognition, machine translation etc., Along this success there is also a problem of designing the best architecture and features for the network. The designing procedure still requires a lot of experts of knowledge.

%There has been a surge of interest in methods for NAS [...] which fall into four categories:  evolutionary algorithms (EA), reinforcement learning (RL), gradient-based approach and Bayesian optimization (BO). We discuss them in detail in the Appendix due to space constraints. None of the above methods have been designed with a focus on the expense of evaluating a neural network, with an emphasis on being judicious in selecting which architecture to try next. 
Bayesian optimization (BO) utilizes a probabilistic model, particularly Gaussian process (GP) \cite{Rasmussen_2006gaussian}, for determining future evaluations and its evaluation efficiency makes it well suited for the expensive evaluations of NAS. However, the conventional BO approaches \cite{Shahriari_2016Taking,Snoek_2012Practical} are not suitable to capture the complex and non-continuous designs of neural architectures. Recent work \cite{kandasamy2018neural} has considered optimal transport (OT) for measuring neural architectures. This views two networks as logistical \textit{suppliers} and \textit{receivers}, then optimizes to find the minimal transportation cost as the distance, i.e., similar architectures will need less cost for transporting and vice versa. However, the existing OT distance for architectures, such as OTMANN \cite{kandasamy2018neural}, do not easily lend themselves to the creation of the positive semi-definite (p.s.d.) kernel (covariance function) due to the  indefinite property of OT \cite{PeyreCuturiBook} (\S8.3). It is critical as the GP is not a valid random process when the covariance function (kernel) is not p.s.d. (see  Lem. \ref{lem:GP_psd}). In addition, there is still an open research direction for \textit{parallel NAS} where the goal is to select multiple high-performing and diverse candidates from a \textit{discrete} set of candidates for parallel evaluations. This discrete property makes the parallel NAS interesting and different from the existing batch BO approaches \cite{Desautels_2014Parallelizing, Gonzalez_2015Batch}, which are typically designed to handle continuous observations.

We propose a negative definite tree-Wasserstein (TW) distance for neural network architectures based on a novel design which captures both global and local information via $n$-gram and indegree/outdegree representations for networks. In addition, we propose the k-determinantal point process (k-DPP) quality for selecting diverse and high-performing architectures from a \textit{discrete} set. This discrete property of NAS makes k-DPP ideal in sampling the choices overcoming the greedy selection used in the existing batch Bayesian optimization \cite{Desautels_2014Parallelizing,Gonzalez_2015Batch,wang2018batched}. At a high level, our contributions are three-fold as follows:
\begin{itemize}
    \item A TW distance with a novel design for capturing both local and global information from architectures which results in a p.s.d. kernel while the existing OT distance does not.
    \item A demonstration of TW as the novel GP covariance function for sequential NAS. 
    \item A parallel NAS approach using k-DPP for selecting diverse and high-quality architectures from a discrete set.
    
\end{itemize}

% Particularly, we formulate the kernel matrix for k-DPP by connecting k-DPP to a GP. This connection is useful in learning the suitable kernel matrix for k-DPP without the need of tuning hyperparameters for this kernel.

% We connect the k-DPP quality to the GP surrogate model in which the GP predictive uncertainty encodes k-DPP conditioning for exploration and the GP predictive mean encodes for the quality for k-DPP. This connection is useful in learning the kernel matrix for k-DPP.

% List all advantages:

% \textbf{Comparing to Bananas:}
% In term of GP vs NN: GP closed-form update without iterative approximation in neural network (via back-propagation).
% Closed-form uncertainty estimation. Generalising using a few observations.

%In term of architectures vs path-based encoding: can scale well to more number of nodes, layers.

% \textbf{Comparing to NASBOT.}: p.s.d. covariance matrix, less hyperparameters in defining the cost.

%\textbf{Comparing to graph kernel}: we have layer masses, operation type. Edit distance is not suitable for different sizes of architectures.

% OT:  To measure the difference between two neural network architectures, we estimate the minimal cost needed to transform from one to another.

%For a discrete probability measure $\mu$, denote $|\mu|$ for the number of supports of $\mu$. 

\section{Tree-Wasserstein for Neural Network Architectures}\label{sec:problem}

We first argue that the covariance matrices associated with a kernel function of Gaussian process (GP) and k-DPP need to be positive semi-definite (p.s.d.) for a valid random process in Lemma \ref{lem:GP_psd}. We then develop tree-Wasserstein (TW) \cite{do2011sublinear, le2019tree}, the negative definite variant of optimal transport (OT), for measuring the similarity of architectures. Consequently, we can build a p.s.d. kernel upon OT geometry for modeling with GPs and k-determinantal point processes (k-DPPs).

\begin{lemma}\label{lem:GP_psd}
If a covariance function $k$ of a Gaussian process is not positive semi-definite, the resulting GP is not a valid random process.
\end{lemma}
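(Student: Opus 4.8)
The plan is to argue by contradiction, using only the defining property of a Gaussian process --- that every finite-dimensional marginal is a multivariate Gaussian whose covariance is read off from $k$ --- together with the elementary fact that a variance cannot be negative.

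First I would recall the definition: if $f$ is a valid Gaussian process with mean function $\mu$ and covariance function $k$, then for every finite collection of inputs $x_1,\dots,x_n$ the random vector $(f(x_1),\dots,f(x_n))^\top$ is multivariate normal with mean $(\mu(x_1),\dots,\mu(x_n))^\top$ and covariance matrix $K\in\RR^{n\times n}$ given by $K_{ij}=k(x_i,x_j)$; in particular each such $K$ must be a genuine covariance matrix of a random vector.

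Next I would unpack the hypothesis: saying that $k$ is \emph{not} positive semi-definite means exactly that there exist a finite set of inputs $x_1,\dots,x_n$ and coefficients $a=(a_1,\dots,a_n)^\top\in\RR^n$ with $a^\top K a=\sum_{i,j}a_ia_j\,k(x_i,x_j)<0$ --- crucially, this witness lives on a finite index set, so we may reason about an ordinary random vector. Then comes the core step: if such an $f$ existed, form the scalar $Z=\sum_{i=1}^n a_i f(x_i)=a^\top(f(x_1),\dots,f(x_n))^\top$; by bilinearity of covariance, $\var(Z)=\sum_{i,j}a_ia_j\,\cov(f(x_i),f(x_j))=\sum_{i,j}a_ia_j\,k(x_i,x_j)=a^\top K a<0$. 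Since $\var(Z)=\EE[(Z-\EE Z)^2]\ge 0$ for every real-valued random variable, this is a contradiction, so no random process with covariance function $k$ can exist, i.e.\ the object that $k$ would define is not a valid random process.

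Finally I would note the equivalent, more foundational phrasing through the Kolmogorov extension theorem: a stochastic process indexed by an arbitrary set is obtained from a consistent family of finite-dimensional distributions, and for a Gaussian family the prescribed law with covariance $K$ fails to be a probability distribution whenever $K$ has a negative eigenvalue --- its would-be characteristic function $\exp(i t^\top\mu-\tfrac12 t^\top K t)$ is not a positive-definite function and hence, by Bochner's theorem, not the characteristic function of any random vector. The main (and admittedly mild) obstacle here is purely definitional: pinning down what ``valid random process'' should mean, and recording that non-positive-semi-definiteness of a kernel is always witnessed on some finite subset; once that is in place, the probabilistic content is just the one-line fact that variances are nonnegative.
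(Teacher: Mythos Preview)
Your proof is correct and follows essentially the same approach as the paper: both exploit that non-positive-semi-definiteness is witnessed on a finite set of points with coefficients $a_i$, form the linear combination $\sum_i a_i f(x_i)$, and derive a contradiction from its variance being $a^\top K a<0$. Your additional remarks on the Kolmogorov extension theorem and Bochner's theorem go beyond what the paper does, but the core argument is identical.
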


%If $k$ is not positive-semidefinite, we can construct a case where the value of the process at a particular point has negative variance. 
Proof of Lemma \ref{lem:GP_psd} is placed in the Appendix \S\ref{sec:proof_lem:GP_psd}. 

\subsection{Tree-Wasserstein} \label{sec:reviewTW}

We give a brief review about OT, tree metric, tree-Wasserstein (TW) which are the main components for our NAS framework. We denote $[n] = \{1, 2, \ldots, n \}$, $\forall n \in \mathbb{N}_{+}$. Let $(\Omega, d)$ be a measurable metric space. For any $x \in \Omega$, we use $\delta_x$ for the Dirac unit mass on $x$.

\paragraph{Optimal transport.} OT, a.k.a. Wasserstein, Monge-Kantorovich, or Earth Mover's distance, is the flexible tool to compare probability measures \cite{PeyreCuturiBook, villani2003topics}. Let $\omega$, $\nu$ be Borel probability distributions on $\Omega$ and $\Rr(\omega, \nu)$ be the set of probability distributions $\pi$ on $\Omega \times \Omega$ such that $\pi(B \times \Omega) = \omega(B)$ and $\pi(\Omega \times B') = \nu(B')$ for all Borel sets $B$, $B'$. The 1-Wasserstein distance $W_d$ \cite{villani2003topics} (p.2) between $\omega$ and $\nu$ is defined as:
\begin{equation}\label{equ:OTprob}
W_d(\omega, \nu) = \inf_{\pi \in \Rr(\omega, \nu)}\int_{\Omega \times \Omega} d(x, z) \pi(\textnormal{d}x, \textnormal{d}z),
\end{equation}
where $d$ is a ground metric (i.e., cost metric) of OT.

\paragraph{Tree metrics and tree-Wasserstein.} A metric $d:\Omega\times\Omega\rightarrow \RR_{+}$ is a \textit{tree metric} if there exists a tree $\Tt$ with positive edge lengths such that $\forall x \in \Omega$, then $x$ is a node of $\Tt$; and $\forall x, z \in \Omega$, $d(x, z)$ is equal to the length of the (unique) path between $x$ and $z$~\cite{semple2003phylogenetics} (\S7, p.145--182). 

Let $d_{\TM}$ be the tree metric on tree $\Tt$ rooted at $r$. For $x, z \in \Tt$, we denote $\Pp(x, z)$ as the (unique) path between $x$ and $z$. We write $\Gamma(x)$ for a set of nodes in the subtree of $\Tt$ rooted at $x$, defined as $\Gamma(x) = \bigl\{z \in \Tt \mid x \in \Pp(r, z) \bigr\}$. For edge $e$ in $\Tt$, let $v_e$ be the deeper level node of edge $e$ (the farther node to root $r$), and $w_e$ be the positive length of that edge.  %, as illustrated in Fig.~\ref{fg:TW}

Tree-Wasserstein (TW) is a special case of OT whose ground metric is a tree metric \cite{do2011sublinear, le2019tree}. Given two measures $\omega$, $\nu$ supported on tree $\Tt$, and setting the tree metric $d_{\TM}$ as the ground metric, then the TW distance $W_{d_{\TM}}$ between $\omega$ and $\nu$ admits a closed-form solution as follows:
%\vcom{Michael Cohen, do you know how to fix the /\ left and /\ right to make the brackets bigger in the equation? it doesnot work as expected here.}
\begin{align}
W_{d_{\TM}}(\omega, \nu) = \sum_{e \in \Tt} w_e \bigl| \omega \bigl(\Gamma(v_e) \bigr) - \nu \bigl( \Gamma(v_e) \bigr) \bigr|\label{equ:OT_LT},
\end{align}
where $\omega(\Gamma(v_e))$ is the total mass of the probability measure $\omega$ in the subtree $\Gamma(v_e)$ rooted at $v_e$. It is important to note that we can derive p.s.d. kernels on tree-Wasserstein distance $W_{d_{\TM}}$~\cite{le2019tree}, as opposed to the standard OT $W_d$ for general ground metric $d$~\cite{PeyreCuturiBook}.% This motivates us to develop a new (pseudo-)distance for measuring the similarity between neural networks based on TW. %On the other hand, the standard optimal transport $W_d$ is not negative definite for general $d$~\cite{PeyreCuturiBook} (\S8.3) that makes it restricted for modeling with Gaussian process or other machine learning techniques requiring a p.s.d. property. 

%\subsection{Tree-(Sliced)-Wasserstein for Neural Networks}\label{sec:TW4NN}
\subsection{Tree-Wasserstein for Neural Networks}\label{sec:TW4NN}
We present a new approach leveraging the tree-Wasserstein for measuring the similarity of neural network architectures. We consider a neural network architecture $\bx$ by $\left( \Ss^{o}, A \right)$ where $\Ss^{o}$ is a multi-set of operations  in each layer of $\bx$, and $A$ is an adjacency matrix, representing the connection among these layers (i.e., network structure) in $\bx$. We can also view a neural network as a directed labeled graph where each layer is a node in a graph, and an operation in each layer is a node label (i.e., $A$ represents the graph structure, and $\Ss^{o}$ contains a set of node labels). We then propose to extract information from neural network architectures by distilling them into three separate quantities as follows: 

% \footnote{For a technical issue, when neural network does not contain any operation, its $n$-gram equal to a zero vector, we set its normalization as a uniform unit vector.}

\paragraph{$\bullet$  $\boldsymbol{n}$-gram representation for layer operations.} Each neural network consists of several operations from the input layer to the output layer. Inspired by the $n$-gram representation for a document in natural language processing, we view a neural network as a document and its operations as words. Therefore, we can use $n$-grams (i.e., $n$-length paths) to represent operations used in the neural network. We then normalize the $n$-gram, and denote it as $\bx^{o}$ for a neural network $\bx$.

Particularly, for $n=1$, the $n$-gram representation is a frequency vector of operations, used in Nasbot \cite{kandasamy2018neural}. 
When we use all $n \le \ell$ where $\ell$ is the number of network layers, the $n$-gram representation shares the same spirit as the path encoding, used in Bananas~\cite{white2019bananas}.

%Let denote $\SS$ be the set of $n$-gram operations, we can represent the  empirical measures as

%\[
%\SS^n = \underbrace{\SS \times \cdots \times \SS}_{\text{$n$ times}}
%\]

% In our fixed-support case (i.e., supports in $\SS^n$), we can use histograms, frequency for each $n$-gram operation, in $\RR^{\left| \SS^n \right|}$ as an alternative representation for these empirical measures.

Let $\SS$ be the set of operations, and $\SS^n = \SS \times \SS \times \dotsm \times \SS$ ($n$ times of $\SS$), the $n$-gram can be represented as empirical measures in the followings
\begin{align}
\omega^{o}_{\bx} = \sum_{s \in \SS^n} \bx^{o}_{s} \delta_{s},
\qquad \qquad
\omega^{o}_{\bz} = \sum_{s \in \SS^n} \bz^{o}_{s} \delta_{s},
\end{align}
where $\bx_s^o$ and $\bz_s^o$ are the frequency of $n$-gram operation $s \in \SS^n$ in architecture $\bx$ and  $\bz$, respectively. %Since the supports of $\omega_{\bx}^{o}, \omega_{\bz}^{o}$ are in $\SS^n$, we can use histograms in $\RR^{\left| \SS^n \right|}$ where each support $s \in \SS^n$ is a dimension of the histograms, and their values are $\bx_{s}^{o}, \bz_{s}^{o}$ respectively, as an alternative representation for these empirical measures. 

We can leverage the TW distance to compare the $n$-gram representations $\omega_{\bx}^{o}$ and $\omega_{\bz}^{o}$ using Eq. (\ref{equ:OT_LT}), denoted as $\dOperation \!(\omega_{\bx}^{o}, \omega_{\bz}^{o})$.  To compute this distance, we utilize a predefined tree structure for network operations by hierarchically grouping similar network operations into a tree as illustrated in Fig.~\ref{fg:TW}. We can utilize the domain knowledge to define the grouping and the edge weights, such as we can have $\texttt{conv1}$ and $\texttt{conv3}$ in the same group and $\texttt{maxpool}$ is from another group.  Inspired by the partition-based tree metric sampling \cite{le2019tree}, we define the edge weights decreasing when the edge is far from the root. Although such design can be subjective, the final distance (defined later in Eq. (\ref{equ:dNN})) will be calibrated and normalized properly when modeling with a GP in \S \ref{sec:NASBOW}. We refer to Fig. \ref{fig:example_TW101} and Appendix \S \ref{sec:app:TW_Ex} for the example of TW computation for neural network architectures.

  \begin{figure*}
    %\centering
    \includegraphics[width=0.95\linewidth]{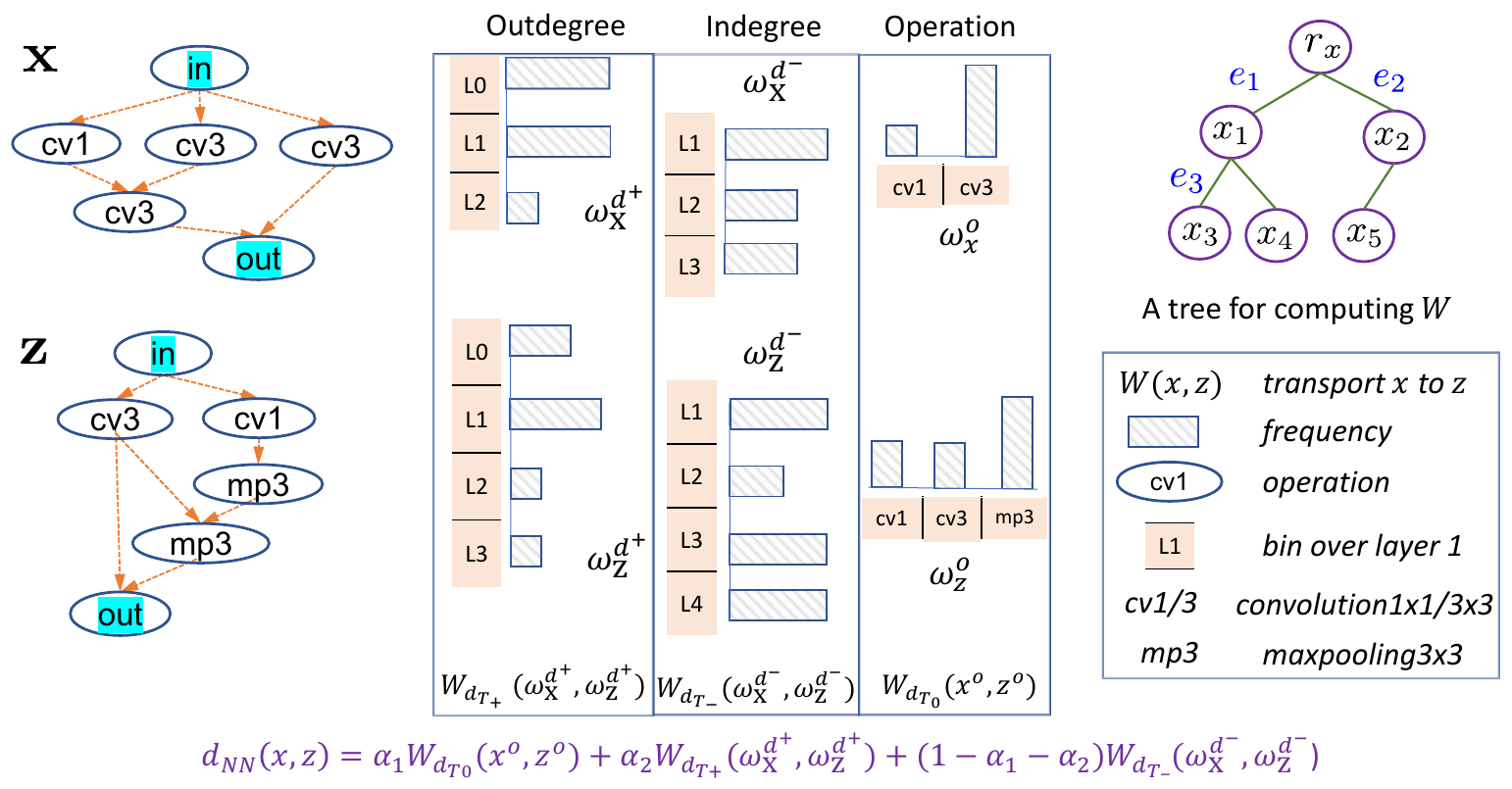}
     \vspace{-5pt}
    \caption{\label{fg:TW}
We represent two architectures $\bx$ and $\bz$ by network structure (via outdegree and indegree) and  network operation (using $1$-gram in this example). The similarity between each respective representation is estimated by tree-Wasserstein to compute the minimal cost of transporting one object to another. As a nice property of optimal transport, our tree-Wasserstein can handle \textit{different} layer sizes and \textit{different} operation types. The weights in each histogram are calculated from the architectures. The histogram bins in outdegree and indegree are aligned with the network structure in the left. See the Appendix \S \ref{sec:app:TW_Ex}  for detailed calculations.}%We will have three respective trees for these representations. 
%We can use the tree metric in \subref{fg:TW_TM} for $\dOperation$ on the set of network operations (cv: convolution, mp: max-pooling). Consequently, the tree metric will play a similar role as cost metric in \citep{kandasamy2018neural} (Table 1) for network operations. In case, one only uses $1$-gram (as in NASBOT\citep{kandasamy2018neural}), the representation of neural networks $\bx$, $\bz$ is the same, or $W(\bx^{o}, \bz^{o})=0$. However, for $n$-grams representation with $n \ge 2$, we have $W(\bx^{o}, \bz^{o})>0$. Moreover, although $\bx$ and $\bz$ have the same number of operation types, their structures are different, which are captured in $\dIn$ and $\dOut$ to distinguish $\bx$ and $\bz$.}
     \vspace{-5pt}

\end{figure*}

\paragraph{$\bullet$ Indegree and outdegree representations for network structure.} 

% vcom{temporarily remove to focus on the main point and for space}
% We can use the adjacency matrix $A$ directly to capture a network structure, and then may employ the Gromov-Wasserstein to compare those adjacency matrices. However, GW is not negative definite to build a p.s.d. kernel ~\cite{PeyreCuturiBook} (\S8.3), a required property for modeling with GP (see Lemma \ref{lem:GP_psd}). 

%By observing the common representation in neural network layers that 

We extract the \textit{indegree} and \textit{outdegree} of each layer, which are the number of ingoing and outgoing layers respectively, as an alternative way to represent a network structure.  We denote $L_{\bx}$ as the set of all layers which one can reach from the input layer for neural network $\bx$. Let $\eta_{x,\ell}$ and $M_x$ be lengths of the longest paths from an input layer to the layer $\ell$ and to the output layer respectively. Such paths interpret the order of layers in a neural network which starts with the input layer, connect with some middle layers, and end with the output layer, we represent the indegree and outdegree of network layers in $\bx$ as empirical measures $\omega^{d^-}_{\bx}$ and $\omega^{d^+}_{\bx}$, defined as 
\begin{align}
\hspace{-0.7 em}\omega^{d^-}_{\bx} = \sum_{\ell \in L_{\bx}} \bx^{d^{-}}_{\ell} \delta_{\frac{\eta_{x,\ell}+1}{M_x+1}}, \hspace{1.8 em} \omega^{d^+}_{\bx} = \sum_{\ell \in L_{\bx}} \bx^{d^{+}}_{\ell} \delta_{\frac{\eta_{x,\ell}+1}{M_x+1}},
\end{align}
where $\bx^{d^{-}}_{\ell}$ and $\bx^{d^{+}}_{\ell}$ are the normalized indegree and outdegree of the layer $\ell$ of $\bx$ respectively.

For indegree and outdegree information, the supports of empirical measures $\omega^{d^-}_{\bx}$, and $\omega^{d^-}_{\bz}$ are in one-dimensional space that a tree structure reduces to a chain of supports. Thus, we can use $\dIn \!\left( \omega^{d^-}_{\bx}, \omega^{d^-}_{\bz} \right)$ to compare those empirical measures.\footnote{Since the tree is a chain, the TW distance is equivalent to the univariate OT.} Similarly, we have $\dOut \!\left( \omega^{d^+}_{\bx}, \omega^{d^+}_{\bz} \right)$ for empirical measures $\omega^{d^+}_{\bx}$ and $\omega^{d^+}_{\bz}$ built from outdegree information.

\paragraph{Tree-Wasserstein distance for neural networks.} Given neural networks $\bx$ and $\bz$, we consider three separate TW distances for the $n$-gram, indegree and outdegree representations of the networks respectively. Then, we define $d_{\dNN}$ as a convex combination with nonnegative weights $\left\{\alpha_1, \alpha_2, \alpha_3 \mid \sum_i \alpha_i = 1, \alpha_i \ge 0\right\}$ for $\dOperation$, $\dIn$, and $\dOut$ respectively, to compare neural networks $\bx$ and $\bz$ as:
\begin{multline}\label{equ:dNN}
    d_{\text{NN}}(\bx, \bz) = \alpha_1 \dOperation (\bx^{o}, \bz^{o}) + \alpha_2 \dIn \left( \omega^{d^{-}}_{\bx}, \omega^{d^{-}}_{\bz} \right) \\
    + \left( 1- \alpha_1 -\alpha_2 \right) \dOut \left( \omega^{d^{+}}_{\bx}, \omega^{d^{+}}_{\bz} \right).
\end{multline}
The proposed discrepancy $d_{\dNN}$ can capture not only the frequency of layer operations, but also network structures, e.g., indegree and outdegree of network layers.

We illustrate our proposed TW for neural networks in Fig. \ref{fig:TW_distances} describing each component in Eq. (\ref{equ:dNN}). We also describe the detailed calculations in the Appendix \S \ref{sec:app:TW_Ex}. We highlight a useful property of our proposed $d_{\text{NN}}$: it can compare two architectures with \textit{different} layer sizes and/or operations sizes.

%\vcom{distance or measure @Answer: measure here is a kind of measurement (dissimilarity measure). Note the proposed one is not a distance, but only pseudo-distance --> so maybe we need more care when using "distance"}

\begin{proposition}\label{pro:dNN}
The $d_{\text{NN}}$ for neural networks is a pseudo-metric and negative definite. %\vcom{Thus, we can derive a p.s.d. kernel upon $d_{\dNN}$ @Answer: Imo, the main information of this proposition is about properties for $d_{\dNN}$, you may add the discussion about p.s.d. kernel in its following discussion paragraph.}.
\end{proposition}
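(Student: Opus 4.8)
The plan is to establish the two assertions separately, each by reducing to the analogous property of a single tree-Wasserstein distance together with two elementary closure facts: stability under pullback by an arbitrary map, and stability under non-negative linear combinations.

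\textbf{Pseudo-metric.} First I would note that each of the three summands in~(\ref{equ:dNN}) is a tree-Wasserstein distance $W_{d_{\TM}}$; for $\dIn$ and $\dOut$ the underlying tree degenerates to a chain, which is still a tree, so the closed form~(\ref{equ:OT_LT}) applies verbatim. Since $W_{d_{\TM}}$ is the $1$-Wasserstein distance built on the tree metric $d_{\TM}$, and $d_{\TM}$ is itself a genuine metric, $W_{d_{\TM}}$ is a metric on the set of probability measures supported on $\Tt$. Next, the assignments $\bx \mapsto \omega^{o}_{\bx}$, $\bx \mapsto \omega^{d^{-}}_{\bx}$, $\bx \mapsto \omega^{d^{+}}_{\bx}$ are well-defined maps from architectures to probability measures; composing a metric with an arbitrary map produces a pseudo-metric (non-negativity, symmetry, the triangle inequality and vanishing on the diagonal all survive, while injectivity of the map — which generally fails here, e.g., distinct architectures can share the same $n$-gram representation — is exactly what would be needed to upgrade to a true metric). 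Hence each summand of~(\ref{equ:dNN}) is a pseudo-metric on architectures, and since $\alpha_1,\alpha_2,1-\alpha_1-\alpha_2 \ge 0$, $d_{\text{NN}}$ is a non-negative combination of pseudo-metrics, which is again a pseudo-metric.

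\textbf{Negative definite} (in the sense that $\sum_{i,j} c_i c_j d_{\text{NN}}(\bx_i,\bx_j)\le 0$ whenever $\sum_i c_i = 0$). The key input is that $W_{d_{\TM}}$ is negative definite on probability measures supported on $\Tt$, which is the property established in~\cite{le2019tree}: the closed form~(\ref{equ:OT_LT}) exhibits $W_{d_{\TM}}(\omega,\nu)$ as a weighted $\ell^1$ distance between the vectors $\bigl(\omega(\Gamma(v_e))\bigr)_{e\in\Tt}$ and $\bigl(\nu(\Gamma(v_e))\bigr)_{e\in\Tt}$, and a weighted $\ell^1$ distance is negative definite (each coordinate map $(a,b)\mapsto|a-b|$ is negative definite on $\RR$, and a non-negative sum of negative definite kernels is negative definite). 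I would then invoke two standard facts: (i) if $f$ is negative definite on $Y$ and $g : X \to Y$ is any map, then $(\bx,\bz) \mapsto f(g(\bx),g(\bz))$ is negative definite on $X$; and (ii) a non-negative linear combination of negative definite kernels is negative definite. Fact~(i) applied with $g$ each of the three representation maps shows that $\dOperation(\bx^{o},\bz^{o})$, $\dIn(\omega^{d^{-}}_{\bx},\omega^{d^{-}}_{\bz})$ and $\dOut(\omega^{d^{+}}_{\bx},\omega^{d^{+}}_{\bz})$ are each negative definite on the set of architectures; fact~(ii) with the non-negative coefficients $\alpha_1,\alpha_2,1-\alpha_1-\alpha_2$ then gives that $d_{\text{NN}}$ is negative definite.

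I do not anticipate a genuine obstacle here: the proposition is essentially a bookkeeping assembly of the known metric and negative-definiteness properties of tree-Wasserstein with the two trivial closure properties above. The only points that need a little care are making explicit that each of the three representations ($n$-gram, indegree, outdegree) is a bona fide function of the architecture, so that the pullback step is legitimate, and checking that the degenerate chain arising for the indegree/outdegree measures is a genuine instance of~(\ref{equ:OT_LT}), so that the cited results apply directly rather than merely by analogy.
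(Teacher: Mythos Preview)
Your proposal is correct and follows the same line as the paper's proof: cite that each tree-Wasserstein component is a metric and negative definite (from~\cite{le2019tree}), then use closure of both properties under non-negative linear combination. If anything, your version is more careful than the paper's, which simply asserts that $\dOperation,\dIn,\dOut$ ``are also a metric and negative definite'' and then verifies the pseudo-metric axioms for the combination; you make explicit the pullback step through the representation maps $\bx\mapsto\omega^{o}_{\bx},\,\omega^{d^{-}}_{\bx},\,\omega^{d^{+}}_{\bx}$ and correctly point out that this is precisely where definiteness is lost (so each summand is only a pseudo-metric on architectures, not a metric), and you spell out why the pullback preserves negative definiteness.
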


Proof of Proposition \ref{pro:dNN} is placed in the Appendix \S\ref{sec:proof_pro:dNN}.

%, p.74),
Our discrepancy $d_{\dNN}$ is negative definite as opposed to the OT for neural networks considered in \citet{kandasamy2018neural} which is indefinite. Therefore, from Proposition~\ref{pro:dNN} and following Theorem 3.2.2 in \citet{berg1984harmonic},  we can derive a positive definite TW kernel upon $d_{\dNN}$ for neural networks $\bx, \bz$ as 
\begin{equation}\label{equ:TWKernel_NN}
k(\bx,\bz)=\exp\ \bigl(- d_{\dNN}(\bx, \bz)/\sigma^2_l \bigr),
\end{equation}
where the scalar $\sigma^2_l$ is the length-scale parameter. 
Our kernel has three hyperparameters including a length-scale $\sigma^2_l$ in Eq.~\eqref{equ:TWKernel_NN}; $\alpha_1$ and $\alpha_2$ in Eq.~\eqref{equ:dNN}. These hyperparameters will be estimated by maximizing the log marginal likelihood (see Appendix \S\ref{app:sec:opt_hyper}). We refer to the Appendix \S\ref{sec:app:distance_property} for a further discussion about the properties of the pseudo-distance $d_{\dNN}$.

\section{Neural Architecture Search with Gaussian Process and k-DPP} \label{sec:NASBOW}

%\subsection{Bayesian Optimization}\label{sec:BO}

\paragraph{Problem setting.}
We consider a noisy black-box function $f:\mathbb{R}^{d} \rightarrow \mathbb{R}$ over some domain $\mathcal{X}$ containing neural network architectures. As a black-box function, we do not have an explicit formulation for $f$ and it is expensive to evaluate. Our goal is to find the best architecture $\bx^* \in \mathcal{X}$ such that 
%\underset{x \in \mathcal{X}}
\begin{align}
    \bx^* = \argmax {\bx \in \mathcal{X}} \, f(\bx). 
    \label{Eq.maximizing_fx}
\end{align}

We view the black-box function $f$ as a machine learning experiment which takes an input as a neural network  architecture $\bx$ and produces an accuracy $y$. We can write $y=f(\bx)+\epsilon$ where we have considered Gaussian noise  $\epsilon \sim \mathcal{N}(0,\sigma^2_f)$ given the noise variance $\sigma^2_f$ estimated from the data.

Bayesian optimization (BO) optimizes the black-box function by sequentially evaluating it \cite{garnett2010bayesian,Shahriari_2016Taking,nguyen2019knowing}. Particularly, BO can speed up the optimization process by using a probabilistic model to guide the search \cite{Snoek_2012Practical}. BO has demonstrated impressive success for optimizing the expensive black-box functions across domains.% {\color{blue} As a black-box function, we have no closed-form expression for $f$ and it is expensive to evaluate @T: similar sentence as above}.

\paragraph{Surrogate models.}
Bayesian optimization reasons about $f$ by building a surrogate model, such as a Gaussian process (GP) \cite{Rasmussen_2006gaussian}, Bayesian deep learning \cite{Springenberg_2016Bayesian} or deep neural network \cite{Snoek_2015Scalable,white2019bananas}. Among these choices, GP is the most popular model, offering three key benefits: (i) closed-form uncertainty estimation, (ii) evaluation efficiency, and (iii) learning hyperparameters. GP imposes a normally distributed random variable at every point in the input space. The predictive distribution for a new observation also follows a Gaussian distribution \cite{Rasmussen_2006gaussian} where we can estimate the expected function value $\mu(\bx)$ and the predictive uncertainty $\sigma(\bx)$ as
\begin{align}
 \mu\left(\bx'\right)	&=\mathbf{k}(\bx',\bX) \left[ \bK + \sigma^2_f \idenmat \right]^{-1}\mathbf{y}
    \label{eq:GPmean} \\
    \sigma^{2}\left(\bx'\right)	&=k_{**}-\mathbf{k}(\bx',\bX)\left[ \bK + \sigma^2_f \idenmat \right]^{-1}\mathbf{k}^T(\bx',\bX) \label{eq:GPvar}
\end{align}

% \begin{align}
%     \mu\left(\bx'\right)	=\mathbf{k}(\bx',X) \left[ \mathbf{K}(X,X) + \sigma^2_n \idenmat \right]^{-1}\mathbf{y}
%     \label{eq:GPmean}
%     \end{align}
%     \begin{align}
%     \sigma^{2}\left(\bx'\right)	=k(\bx',\bx')-\mathbf{k}(\bx',X)\left[ \mathbf{K}(X,X) + \sigma^2_n \idenmat \right]^{-1}\mathbf{k}(\bx',X)^{T} \label{eq:GPvar}
% \end{align}
where  $\bX=[\bx_1,...\bx_N]$ and $\by=[y_1,..y_N]$ are the collected architectures and performances respectively; $K(U,V)$ is a covariance matrix whose element $(i,j)$ is calculated as $k(\bx_{i},\bx_{j})$ with $\bx_{i}\in U$ and $\bx_{j}\in V$; $k_{**}=k(\bx',\bx')$; $\bK:=\mathbf{K(X,X)}$; $\sigma^2_f$ is the measurement noise variance and $\idenmat$ is the identity matrix.

%$k_{i,j}$ is the $(i,j)$ element of covariance matrix $K$; $k^T$ is a transpose matrix of $\mathbf{k}$

\paragraph{Generating a pool of candidates $\mathcal{P}_t$.}
%We define the convolutional cell-based search space as used in \cite{ying2019bench,NASBench201}. 
We follow \citet{kandasamy2018neural,white2019bananas} to   generate a list of candidate networks using an evolutionary algorithm \cite{back1996evolutionary}. First, we stochastically select top-performing candidates with higher acquisition function values. Then, we apply a mutation operator to each candidate to produce modified architectures. Finally, we evaluate the acquisition given these mutations, add them to the initial pool, and repeat for several steps to get a pool of candidates $\mathcal{P}_t$. We design the ablation study in Fig. \ref{fig:ablation_pool} demonstrating that the evolution strategy  outperforms the random strategy for this task.
%  than those with lower values
%Particularly, we begin with an initial pool of networks and evaluate the acquisition $\alpha(\bx)$ on those networks. Then we generate a set of $N$ mutations of this pool as follows. 

\paragraph{Optimizing hyperparameters.}
We optimize the model hyperparameters by maximizing the log marginal likelihood. We present the derivatives for estimating the hyperparameters $\alpha_1$ and $\alpha_2$ of the tree-Wasserstein $d_{\text{NN}}$ for neural networks in the Appendix \S\ref{app:sec:opt_hyper}. We shall optimize these variables via multi-started gradient descent. 

% As opposed to NASBOT with 11 hyper-parameters of its own, our three hyperparameters are less vulnerable to overfitting when optimizing with the GP marginal likelihood.

\subsection{Sequential NAS using Bayesian optimization}
%Using the GP surrogate model above, we then construct an acquisition function $\alpha(\bx)$ to determine a next point to evaluate. 
We sequentially suggest a \textit{single} architecture for evaluation using a decision function $\alpha(\bx)$ (i.e., acquisition function) from the surrogate model. This acquisition function is carefully designed to trade off between exploration of the search space and exploitation of current promising regions. We utilize the GP-UCB \cite{Srinivas_2010Gaussian} as the main decision function $\alpha(\bx)=\mu(\bx)+\kappa\sigma(\bx)$ where $\kappa$ is the parameter controlling the exploration, $\mu$ and $\sigma$ are the GP predictive mean and variance in Eqs. (\ref{eq:GPmean}, \ref{eq:GPvar}). Empirically, we find that this GP-UCB generally performs better than expected improvement (EI) (see the Appendix \S \ref{sec:model_analysis}) and other acquisition functions (see \cite{white2019bananas}). We note that the GP-UCB also comes with a theoretical guarantee for convergence \cite{Srinivas_2010Gaussian}.

We maximize the acquisition function to select the next architecture $\bx_{t+1}=\arg\max_{\bx\in\mathcal{P}_t}\alpha_{t}\left(\bx\right)$. This maximization is done on the discrete set of candidate $\mathcal{P}_t$ obtained previously. The selected candidate is the one we expect to be the best if we are optimistic in the presence of uncertainty. %The selected candidate exhibits the highest potential of being the optimal architecture. 

%Compared to generic vector-input BO methods, our BO over architectures needs to find an architecture $\bx_t$ on $\mathcal{X}$, optimizing the acquisition function over a collection of architecture candidates.  %In this auxiliary maximisation problem, the acquisition function form is known and easy to evaluate and can be easily optimized by standard numerical toolboxes.

%\paragraph{Optimizing acquisition functions}
%Optimization an acquisition function  is one of the primary steps in BO to suggest the next architecture to evaluate. 

\begin{algorithm}[t]
	    \caption{Sequential and Parallel NAS using Gaussian process with tree-Wasserstein kernel}\label{alg:CoCaBO_batch_selection}
	\begin{algorithmic}[1]
		\vspace{0.5em}
		\STATE {\bfseries Input:} Initial data $\mathcal{D}_{0}$, black-box function $f(\bx)$.  {\bfseries Output:} The best architecture $\mathbf{x^*}$
	
        \FOR{$t=1, \ldots, T$}
        
        	%and build an acquisition function $\alpha$
        	\STATE Generate architecture candidates $\mathcal{P}_t$ by random permutation from the top architectures
    		\STATE Learn a GP (including hyperparameters) using TW from $\mathcal{D}_{t-1}$ to perform estimation over $\mathcal{P}_t$ including (i) covariance matrix $K_{\mathcal{P}_t}$, (ii) predictive mean $\mu_{ \mathcal{P}_t}$ and (iii) predictive variance $\sigma_{\mathcal{P}_t}$ 
        		
    		\STATE If \textbf{Sequential:} (i) select a next architecture $\mathbf{x}_t = \argmax {\forall \mathbf{x} \in \mathcal{P}_t}  \  \alpha (\mathbf{x} \mid \mu_{\mathcal{P}_t}, \sigma_{\mathcal{P}_t})$; then (ii) evaluate the new architecture $y_t = f(\mathbf{x}_t)$; after that, (iii) augment $\mathcal{D}_t \leftarrow \mathcal{D}_{t-1} \cup \left( \mathbf{x}_t, y_t \right)$.
    		
    		\STATE If \textbf{Parallel:} (i) select $B$ architectures $\mathbf{X}_t=\left[ \mathbf{x}_{t,1}, ,...\mathbf{x}_{t,B}\right] = \text{k-DPP}(\bK_{\mathcal{P}_t)}$ in Eq. (\ref{eq:kdpp_quality_gp}); then (ii) evaluate in parallel $Y_t = f(\mathbf{X}_t)$; after that, (iii) augment $\mathcal{D}_t \leftarrow \mathcal{D}_{t-1} \cup \left( \mathbf{X}_t, Y_t \right)$. 
    	\ENDFOR
	%\RETURN $\mathcal{B}_t$
	\end{algorithmic}
\end{algorithm}

\subsection{Parallel NAS using Quality k-DPP and GP}
The parallel setting speeds up the optimization process by selecting a \textit{batch} of architectures for parallel evaluations. We present the k-determinantal point process (k-DPP) with \textit{quality} to select from a discrete pool of candidate $\mathcal{P}_t$ for (i) high-performing and (ii) diverse architectures that cover the most information while avoiding redundancy. In addition,  diversity  is an important property for
not being stuck at a local optimal architecture.

% . Conditioning on sampling sets of fixed cardinality $k$, we obtains a k-DPP

% \footnote{To avoid the conflict in notation with the kernel, we then use $B$ as the batch size}
The DPP \cite{Kulesza_2012Determinantal} is an elegant probabilistic measure used to model negative correlations within a subset and hence promote  its diversity.  A k-determinantal point process (k-DPP) \cite{Kulesza_2011kDPP}  is a distribution over all subsets of a ground set $\mathcal{P}_t$ of cardinality k. It is determined by a positive semidefinite kernel $\bK_{\mathcal{P}_t}$.
Let $\bK_A$ be the submatrix of $\bK_{\mathcal{P}_t}$ consisting of the entries $\bK_{ij}$ with $i, j \in A \subseteq \mathcal{P}_t$. Then, the probability of observing $ A \subseteq P$ is proportional to $\det(\bK_A)$, % \centering
\begin{minipage}[t]{0.51\columnwidth}%
\vspace{-10pt}
\begin{align}
\hspace{-1 em} P(A \subseteq \mathcal{P}_t) \propto \det(\bK_A),
\end{align}
\end{minipage}%
\begin{minipage}[t]{0.48\columnwidth}%
\vspace{-10pt}
\begin{align}
\bK_{ij}=q_i \phi_i^T \phi_j q_j.\label{eq:kdpp_quality}
\end{align}
\end{minipage}

%The DPPs model aversion between items. The probability that items $i$ and $j$ are together included in $X$ is $P({i,j})=K_{ii}K_{jj} - \left( K_{ij} \right)^2$. This probability then decreases with similarity $K_{ij}$ between item $i$ and item $j$. This key aversion property makes DPPs useful for exploration purpose in NAS where we want to select architectures that cover the most information while avoiding redundancy.

\paragraph{k-DPP with quality.}
While the original idea of the k-DPP is to find a diverse subset, we can extend it to find a subset that is both diverse and high-quality. For this, we write a DPP kernel $k$ as a Gram matrix,
$\bK = \Phi ^T \Phi$, where the columns of $\Phi$ are vectors representing items in the set $S$. We now take this one step further, writing each column $\Phi$ as the product of a quality term $q_i \in \mathcal{R}^+$ and a vector of normalized diversity features $\phi_i$, $|| \phi_i|| = 1$. The entries of the kernel can now be written in Eq. (\ref{eq:kdpp_quality}).

% (While $D = N$ is sufficient to decompose any DPP, we keep $D$ arbitrary since in practice we may wish to use high-dimensional feature vectors.) 

As discussed in \cite{Kulesza_2012Determinantal}, this decomposition of $\bK$ has two main advantages. First, it implicitly enforces the
constraint that $\bK$ must be positive semidefinite, which can potentially simplify learning. Second, it allows us to independently model quality and diversity, and then combine them into a unified model. Particularly, we have
\[
P_K(A) \propto \left( \prod_{i\in A} q_i^2 \right) \det( \phi_i^T \phi_i),
\]
where the first term increases with the quality of the selected items, and the second term increases with the diversity of the selected items.  Without the quality component, we would get a very diverse set of architectures. However, we might fail to include the most
high-performance architectures in $\mathcal{P}_t$, focusing instead on low-quality outliers. By integrating the two models, we can achieve a more balanced result.

%Intuitively, the determinant of $K_Y$ is equal to the squared volume of the parallelepiped spanned by the vectors $q_i \phi_i$ for $i \in Y$. The magnitude of the vector representing item $i$ is $q_i$, and its direction is $\phi_i$.

   \begin{figure*} %[H]  
     \centering
    % trim={<left> <lower> <right> <upper>}
    
    %\vspace{2pt}
     \begin{subfigure}[t]{0.245\textwidth}
    \centering
        \includegraphics[width=\linewidth]{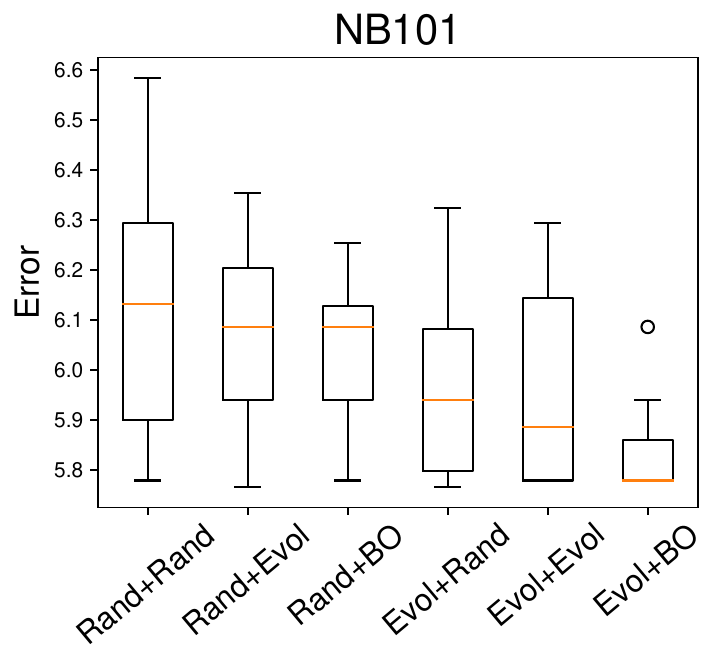} 
        %\caption{Corresponding time consumption of kernel matrices for $\TDA$ and document classification.} \label{fg:Time}
    \end{subfigure}
    \hfill
    \begin{subfigure}[t]{0.245\textwidth}
        \centering
        \includegraphics[width=\linewidth]{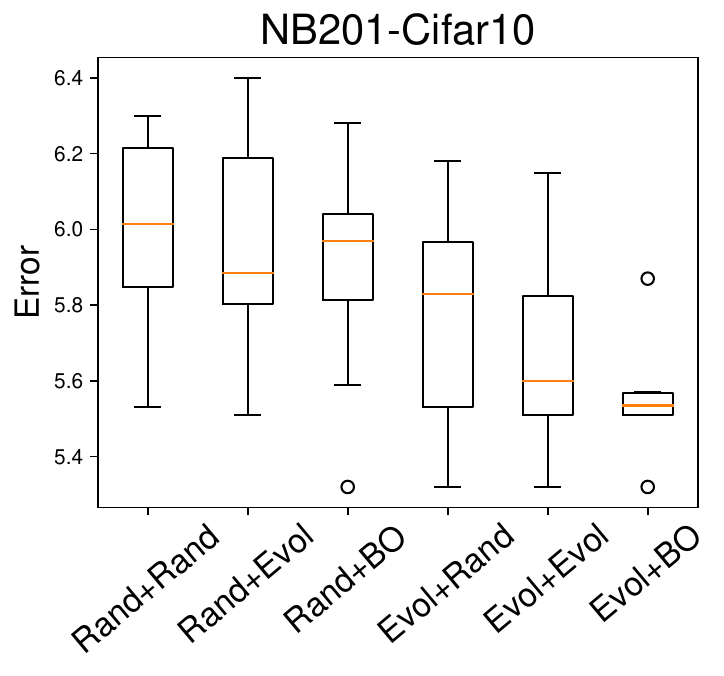} 
        %\caption{The KFDR graphs on granular packing system and SiO$_2$ datasets.} \label{fg:KFDR}
    \end{subfigure}
       % \vspace{2pt}
     \begin{subfigure}[t]{0.245\textwidth}
    \centering
        \includegraphics[width=\linewidth]{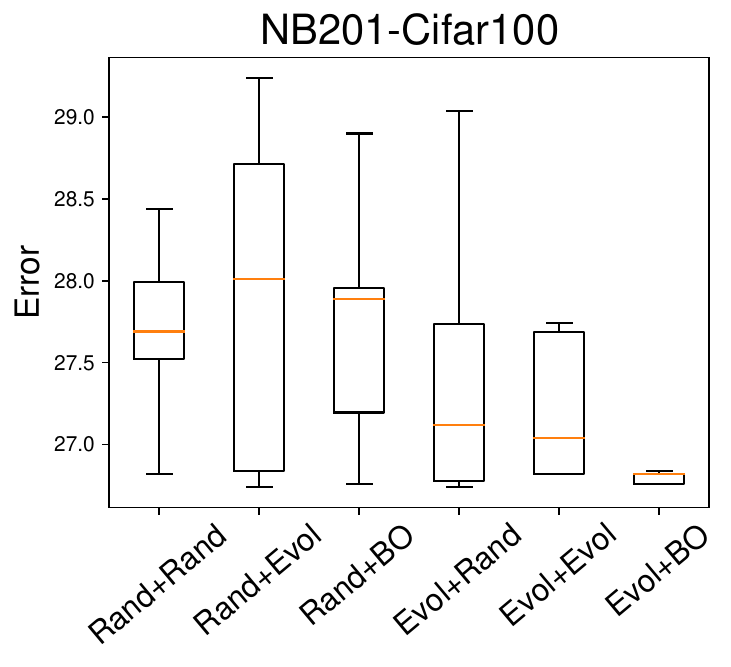} 
        %\caption{Corresponding time consumption of kernel matrices for $\TDA$ and document classification.} \label{fg:Time}
    \end{subfigure}
    \hfill
    \begin{subfigure}[t]{0.245\textwidth}
        \centering
        \includegraphics[width=\linewidth]{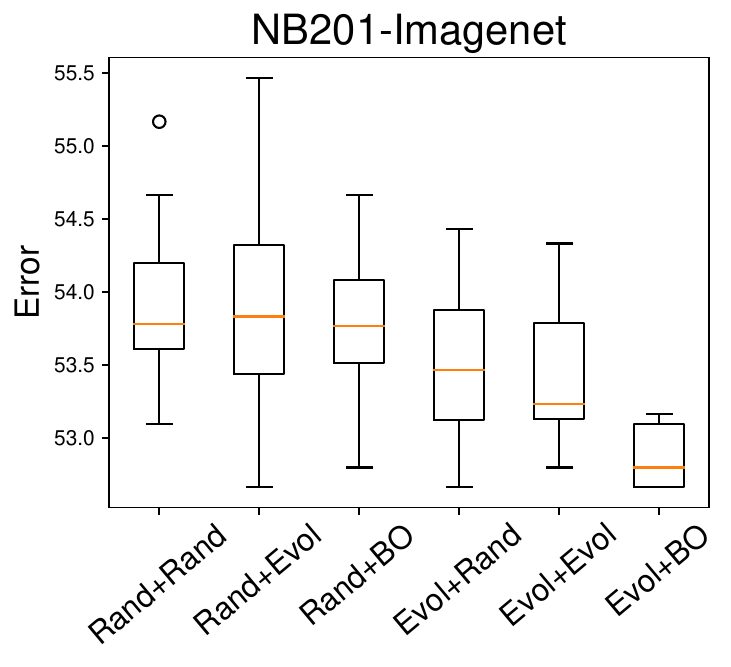} 
        %\caption{The KFDR graphs on granular packing system and SiO$_2$ datasets.} \label{fg:KFDR}
    \end{subfigure}

    \vspace{-10pt}
    \caption{We study the relative contribution of the strategies generating a pool of candidate $\mathcal{P}$ (Rand and Evolution) versus the main optimization algorithm (Rand, Evolution and BO). The result shows that Evolution can help to improve the performance than Rand in generating a pool of candidate $\mathcal{P}$. Given the same strategy for generating $\mathcal{P}$, BO is significantly better than Rand and Evolution for optimization. Evol+BO is the design in our approach that leads to the best performance.} \label{fig:ablation_pool}
    \vspace{-5pt}
\end{figure*}

\paragraph{Conditioning.}
%Suppose we want to condition a k-DPP on the inclusion of a particular set A. For $|A|+|B| = k$ we have

In the parallel setting, given the training data, we would like to select high quality and diverse architectures from a pool of candidate $\mathcal{P}_t$ described above. We shall condition on the training data in constructing the covariance matrix over the testing candidates from $\mathcal{P}_t$. We make the following proposition in connecting the k-DPP conditioning and GP uncertainty estimation. This view allows us to learn the covariance matrix using GP, such as we can maximize the GP marginal likelihood for learning the TW distance and kernel hyperparameters for k-DPP.

\begin{proposition} \label{pro:kDPP}
Conditioned on the training set, the probability of selecting new candidates from a pool $\mathcal{P}_t$ is equivalent to the determinant of the Gaussian process predictive covariance matrix.
\end{proposition}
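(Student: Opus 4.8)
The plan is to identify the $k$-DPP selection probability over the pool, \emph{after conditioning on the already-evaluated architectures}, with the determinant of the Gaussian process posterior covariance of Eq.~\eqref{eq:GPvar}. The bridge between the two is the Schur-complement identity for block determinants, which is exactly the algebra that turns a GP prior covariance into a GP posterior covariance and, at the same time, the algebra that governs conditioning of a determinantal point process.

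First I would set up a joint L-ensemble. Form the block matrix
\[
L = \begin{pmatrix} \bK + \sigma^2_f \idenmat & \mathbf{k}(\bX, \mathcal{P}_t) \\ \mathbf{k}(\mathcal{P}_t, \bX) & \bK_{\mathcal{P}_t} \end{pmatrix},
\]
i.e.\ the covariance matrix of the GP evaluated at the union of the training inputs $\bX$ and the candidate pool $\mathcal{P}_t$, with the observation-noise variance $\sigma^2_f$ added on the training block so that it matches the GP likelihood. Because the joint GP kernel matrix is p.s.d.\ (Lemma~\ref{lem:GP_psd}) and $\sigma^2_f \idenmat \succeq 0$ is added only to a principal block, $L \succeq 0$, so $L$ is a legitimate L-ensemble kernel, and $\bK + \sigma^2_f \idenmat \succ 0$ is invertible.

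Next I would condition the DPP on the training set. For an L-ensemble DPP with kernel $L$, conditioning on the event that every training input in $\bX$ is selected gives, for each candidate subset $A \subseteq \mathcal{P}_t$,
\[
\PP\!\left( \text{select } A \mid \bX \subseteq \text{selected set} \right) = \frac{\det\!\left( L_{A \cup \bX} \right)}{\sum_{A' \subseteq \mathcal{P}_t} \det\!\left( L_{A' \cup \bX} \right)} \propto \det\!\left( L_{A \cup \bX} \right),
\]
where the proportionality is with respect to $A$. Applying the block-determinant identity to the principal submatrix $L_{A \cup \bX}$, with the $\bX$-block as pivot,
\[
\det\!\left( L_{A \cup \bX} \right) = \det\!\left( \bK + \sigma^2_f \idenmat \right) \cdot \det\!\Big( \bK_A - \mathbf{k}(A, \bX) \left[ \bK + \sigma^2_f \idenmat \right]^{-1} \mathbf{k}(\bX, A) \Big),
\]
and the first factor is independent of $A$. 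Comparing the second factor with Eq.~\eqref{eq:GPvar}, the matrix inside the determinant is precisely the GP predictive covariance matrix over the test subset $A$ (the multi-point version of $\sigma^2(\bx')$, with the scalar $k_{**}$ replaced by the block $\bK_A$). Hence the conditional probability of selecting $A$ from the pool is proportional to the determinant of the GP posterior covariance matrix, which is the asserted equivalence; restricting the support to subsets of fixed cardinality $k$ specializes this to the $k$-DPP of Algorithm~\ref{alg:CoCaBO_batch_selection} (whose kernel we accordingly take to be the GP predictive covariance over $\mathcal{P}_t$), and reintroducing the quality scores $q_i$ as in Eq.~\eqref{eq:kdpp_quality} merely multiplies the determinant by $\prod_{i \in A} q_i^2$.

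I expect the main obstacle to be bookkeeping rather than any deep step: (i) keeping the \emph{noisy} training block $\bK + \sigma^2_f \idenmat$ inside $L$ so that the Schur complement reproduces the GP posterior of Eq.~\eqref{eq:GPvar} exactly, rather than its noiseless version; (ii) invoking the correct form of DPP conditioning — conditioning on the \emph{inclusion} of $\bX$ in the selected set, which is what yields the clean proportionality to $\det(L_{A \cup \bX})$ — and handling the $k$-DPP cardinality constraint consistently; and (iii) verifying that the conditional kernel, being a Schur complement of a p.s.d.\ matrix, is itself p.s.d., so that it defines a valid $k$-DPP and, by Lemma~\ref{lem:GP_psd}, a coherent probabilistic model.
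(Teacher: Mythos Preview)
Your proposal is correct and follows essentially the same route as the paper: both hinge on the Schur-complement/block-determinant identity to rewrite $\det(L_{A\cup \bX})/\det(L_{\bX})$ as the determinant of the GP predictive covariance over $A$, thereby identifying DPP conditioning with the GP posterior. Your version is in fact more carefully bookkept than the paper's brief argument (you explicitly carry the noise term $\sigma_f^2\idenmat$ so that the Schur complement matches Eq.~\eqref{eq:GPvar} exactly, and you spell out the DPP inclusion-conditioning and the $k$-DPP/quality refinements), but the underlying idea is the same.
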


Proof of Proposition \ref{pro:kDPP} is placed in the Appendix \S \ref{sec:proof_pro:kDPP}.

We can utilize  the GP predictive mean $\mu(\cdot)$ in Eq. (\ref{eq:GPmean}) to estimate the quality for any unknown architecture $q_i$ defined in Eq. (\ref{eq:kdpp_quality}). Then, we construct the covariance (kernel) matrix over the test candidates for selection by rewriting Eq. (\ref{eq:kdpp_quality}) as 
\begin{align}
%\centering
\hspace{-0.8 em} \bK_{\mathcal{P}_t} \!\left( \bx_i,\bx_j \right) \!=\! \exp \bigl( - \mu(\bx_i) \bigr) \sigma(\bx_i, \bx_j) \exp \bigl( - \mu(\bx_j) \bigr), \label{eq:kdpp_quality_gp}
\end{align}
for all $\bx_i,\bx_j \in \mathcal{P}_t$  where $\mu(\bx_i)$ and $\sigma(\bx_i,\bx_j)$ are the GP predictive mean and variance defined in Eqs. (\ref{eq:GPmean}, \ref{eq:GPvar}). Finally, we  sample $B$ architectures from the covariance matrix $\bK_{\mathcal{P}_t}$ which encodes both the diversity (exploration) and high-utility (exploitation). The sampling algorithm requires precomputing the eigenvalues \cite{Kulesza_2011kDPP}. Sampling from a k-DPP requires $\mathcal{O}(N B^2)$ time overall where $B$ is the batch size.
% Each term in Eq. (\ref{eq:kdpp_quality_gp}) are naturally in range $[0,1]$, thus it balances between diversity and quality.
\paragraph{Advantages.}
The connection between GP and k-DPP allows us to directly sample diverse and high-quality samples from the GP posterior. This leads to the key advantage that we can \textit{optimally} sample a batch of candidates without the need of greedy selection. On the other hand, the existing batch BO approaches  rely  either on greedy strategy  \cite{Contal_2013Parallel,Desautels_2014Parallelizing,Gonzalez_2015Batch}  to sequentially select the points in a batch or independent sampling \cite{falkner2018bohb,Hernandez_2017Parallel}. The greedy algorithm is non-optimal and the independent sampling approaches can not fully utilize the information across points in a batch. 

We note that our k-DPP above is related to \cite{Kathuria_NIPS2016Batched}, but different from two perspectives that \citet{Kathuria_NIPS2016Batched} considers k-DPP for batch BO (i) in the continuous setting and (ii) using pure exploration (without quality). We will consider this as the baseline in our experiments.

% \begin{figure*} %[t]  
%      \centering
%     % trim={<left> <lower> <right> <upper>}
%     \includegraphics[trim=0cm 0.cm 0cm  0.cm, clip, width=0.24\linewidth]{fig/ablation_pool_nb101.pdf}
%         \includegraphics[trim=0cm 0.cm 0cm  0.cm, clip, width=0.24\linewidth]{fig/ablation_pool_cf10.pdf}
%     \includegraphics[trim=0cm 0.cm 0cm  0.cm, clip, width=0.24\linewidth]{fig/ablation_pool_cf100.pdf}
%         \includegraphics[trim=0cm 0.cm 0cm  0.cm, clip, width=0.24\linewidth]{AISTATS2021/fig/ablation_pool_imagenet.pdf}

%     \caption{We study the relative contribution of the strategies generating a pool of candidate $\mathcal{P}$ (Rand and Evolution) versus the main optimization algorithm (Rand, Evolution and BO). The result shows that Evolution can help to improve the performance than Rand in generating a pool of candidate $\mathcal{P}$. Given the same strategy for generating $\mathcal{P}$, BO is significantly better than Rand and Evolution for optimization. Evol+BO is the design in our approach that leads to the best performance.} \label{fig:ablation_pool}
%     \vspace{-5pt}
% \end{figure*}

\section{Experiments}
\label{sec:Experiments}
We evaluate our proposed approach on both sequential and parallel neural architecture search (NAS).

\textbf{Experimental settings.} All experimental results are averaged over 30 independent runs with different random seeds. We set the number of candidate architecture $|\mathcal{P}_t|=100$. We utilize  the popular NAS  tabular datasets of Nasbench101 (NB101) \cite{ying2019bench} and Nasbench201 (NB201) \cite{NASBench201} for evaluations. TW and TW-2G stand for our TW using 1-gram and 2-gram representation, respectively. We 
release the Python code for our experiments at \url{https://github.com/ntienvu/TW_NAS}.

\subsection{Sequential NAS}

\textbf{Ablation study: different mechanisms for generating a pool of candidates $\mathcal{P}$.} We analyze the relative contribution of the process of generating architecture candidates versus the main optimization algorithm in Fig. \ref{fig:ablation_pool}. The result suggests that the evolutionary algorithm is better than a random strategy to generate a pool of candidates $\mathcal{P}$. Given this generated candidate set $\mathcal{P}$, BO is significantly better than Rand and Evolution approaches. Briefly, the combination of Evol+BO performs the best across datasets.

\begin{figure*} [t]  
     \centering
    % trim={<left> <lower> <right> <upper>}
        \includegraphics[trim=0cm 0.cm 0cm  0.cm, clip, width=0.325\linewidth]{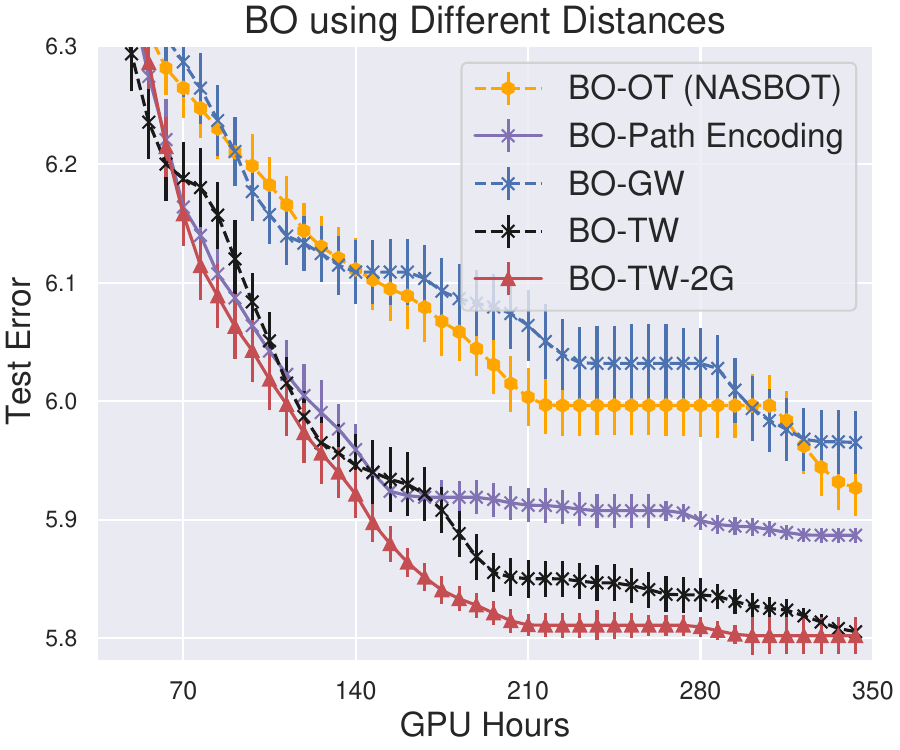}
    \includegraphics[trim=0cm 0.cm 0cm  0.cm, clip, width=0.325\linewidth]{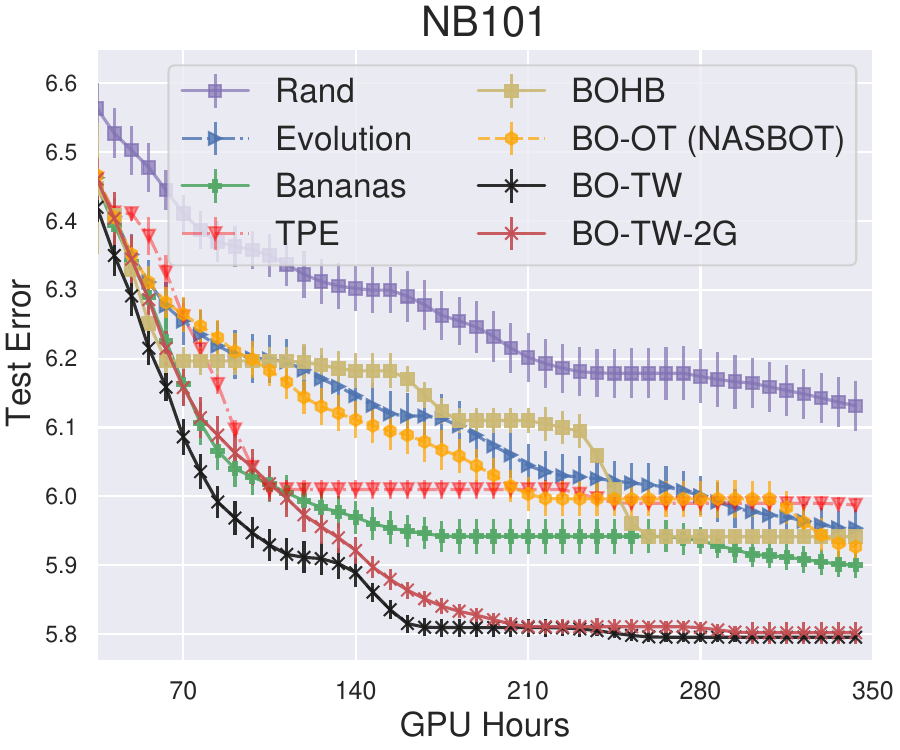}
        \includegraphics[trim=0cm 0.cm 0cm  0.cm, clip, width=0.325\linewidth]{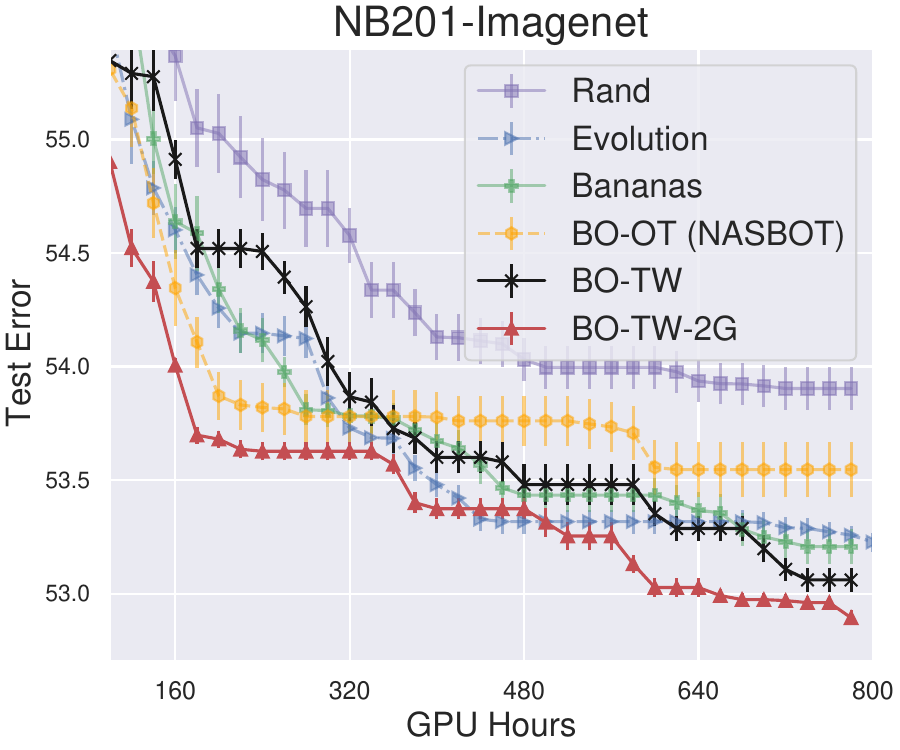}
    \vspace{-5pt}
    \caption{Sequential NAS on different distances for BO (\textit{left}) and different baselines (\textit{middle} and \textit{right}). Our approaches of BO-TW (\textbf{black} curve) and BO-TW 2G (\textcolor{red}{red} curve) for $1$-gram and $2$-gram representation consistently outperform the other baselines. We use $500$ iterations on NB101 and $200$ iterations on NB201.} \label{fig:NASBENCH101}
    \vspace{-5pt}
\end{figure*}

%We have demonstrated that BO is the better choice of searching for the best architecture against the different baselines in random search, evolutionary and other state-of-the-art methods. 
\textbf{Ablation study: different distances for BO.} We design an ablation study using different distances within a BO framework. Notably, we consider the vanilla optimal transport (Wasserstein distance) in which we follow \citet{kandasamy2018neural} to define the cost metric for OT. This baseline can be seen as the modified version of the Nasbot \cite{kandasamy2018neural}. In addition, we compare our approach with the BO using the Gromov-Wasserstein distance \cite{memoli2011gromov} (BO-GW) and path encoding (BO-Path Encoding) as used in \cite{white2019bananas}. The results in the left plot of Fig. \ref{fig:NASBENCH101} suggest that the proposed TW using 2-gram performs the best among the BO distance for NAS.
The standard OT and GW will result in (non-p.s.d.) indefinite kernels. For using OT and GW in our GP, we keep adding (``jitter'') noise to the diagonal of the kernel matrices until they become p.s.d. kernels. We make use of the POT library \cite{flamary2017pot} for the implementation of OT and GW. %We note that the optimal transport approaches, such as OT, GW, TW can scale well to more number of nodes, layers while the path-based encoding is limited to.

%\vcom{comment on n-gram, why dont we use all n-gram @Answer: In theory, you can consider n-gram as the extreme case in BANANAS. No problem. In case, $2$-gram may be enough, we may not need to go more. This is a similar question as the n-gram in NLP, when n grows, the number of candidates also grows exponentially, and in practice, we may have very sparse n-gram representation (and maybe some features from n-gram with big n are not so meaningful any more.).}
While our framework is able to handle $n$-gram representation, we learn that $2$-gram is empirically the best choice.  This choice is well supported by the fact that two convolution layers of $3\times3$ stay together can be used to represent for a special effect of $5\times5$ convolution kernel. In addition, the use of full $n$-gram may result in very sparse representation and some features are not so meaningful anymore. Therefore, in the experiment we only consider $1$-gram and $2$-gram. %This property can not be captured by a traditional $1$-gram representation. 

\textbf{Sequential NAS.} We  validate our GP-BO model using tree-Wasserstein on the sequential setting.  Since NB101 is somewhat harder than NB201, we allocate $500$ queries for NB101 and $200$ queries for NB201 including $10\%$ of random selection at the beginning of BO.

We compare our approach against common baselines including Random search, evolutionary search, TPE \cite{Bergstra_2011Algorithms}, BOHB \cite{falkner2018bohb}, Nasbot \cite{kandasamy2018neural} and Bananas \cite{white2019bananas}. We use the AutoML library for TPE and BOHB\footnote{\url{https://github.com/automl/nas_benchmarks}} including the results for NB101, but not NB201.  We do not compare with Reinforcement Learning approaches \cite{pham2018efficient} and AlphaX \cite{alphax} which have been shown to perform poorly in \cite{white2019bananas}. %It requires 3000 GPU days to train as shown in [].

\begin{figure} [t]%{r}{0.4\textwidth}
    %\vspace{-30pt}
       \vspace{-15pt}
  \begin{center}
    \includegraphics[width=0.44\textwidth]{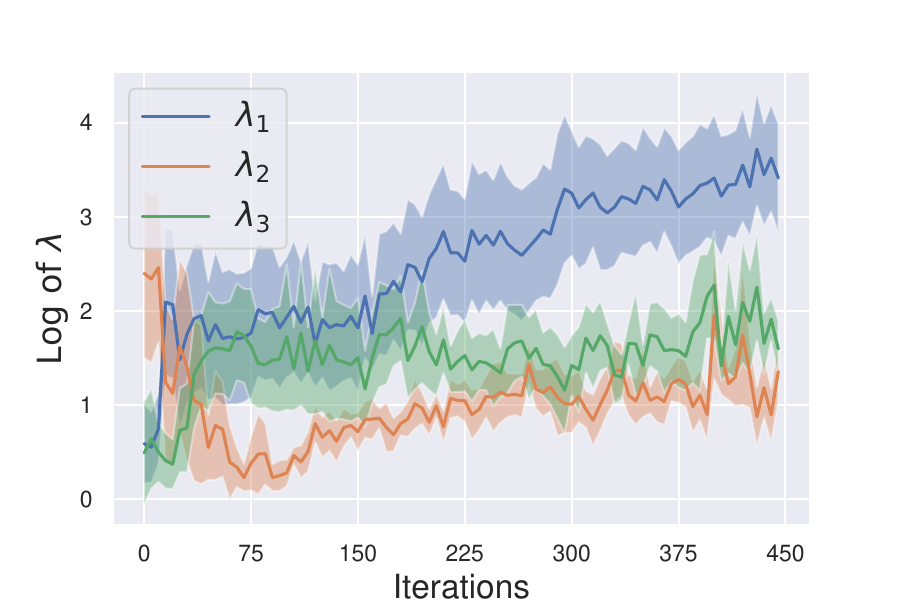}
  \end{center}
      \vspace{-10pt}
    \caption{Estimated hyperparameters on NB101.} \label{fig:estimated_lambdas}
    %\vspace{-35pt}
    \vspace{-5pt}
\end{figure}

We show in Fig. \ref{fig:NASBENCH101} that our tree-Wasserstein including 1-gram and 2-gram will result in the best performance with a wide margin to the second best -- the Bananas \cite{white2019bananas}, which needs to specify a meta neural network with extra hyperparameters (layers, nodes, learning rate). Random search performs poorly in NAS due to the high-dimensional and complex space. Our GP-based optimizer offers a closed-form uncertainty estimation without iterative approximation in neural network (via back-propagation). As a property of GP, our BO-TW can generalize well using fewer observations. This can be seen in the right plot of Fig. \ref{fig:NASBENCH101} that our approaches can outperform Bananas when the number of BO iteration (or the number of network architectures for training) is small. On the other hand, both Bananas and ours are converging to the same performance when the training data becomes abundant -- but this is not the case in practice for NAS.

%We have presented the performance comparison on the test set. We have additional results on comparison on the validation set in the appendix.
%Most NAS methods use the validation set to evaluate architectures after the architecture is optimized on the training set. The validation performance of the architectures serves as supervision signals to update the searching algorithm. The test set is to evaluate the performance of each searching algorithm by comparing the indicators (e.g., accuracy, model size, speed) of their selected architectures.

% Bohb \cite{falkner2018bohb},

\begin{figure*} [t]  
     \centering
    % trim={<left> <lower> <right> <upper>}
    \includegraphics[trim=0cm 0.cm 0cm  0.cm, clip, width=0.325\linewidth]{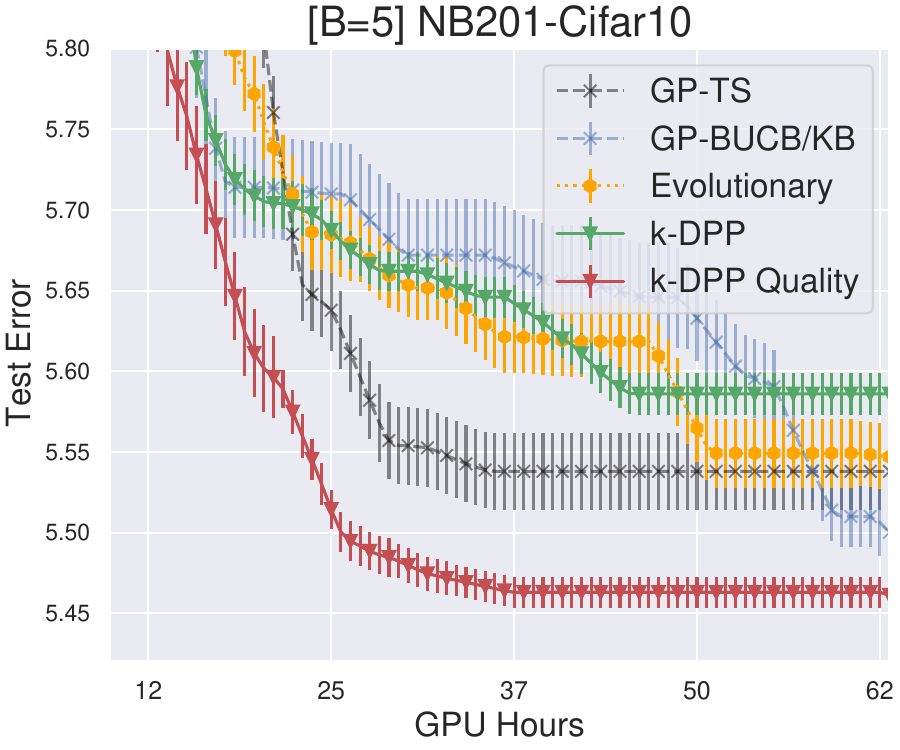}
  \includegraphics[trim=0cm 0.cm 0cm  0.cm, clip, width=0.325\linewidth]{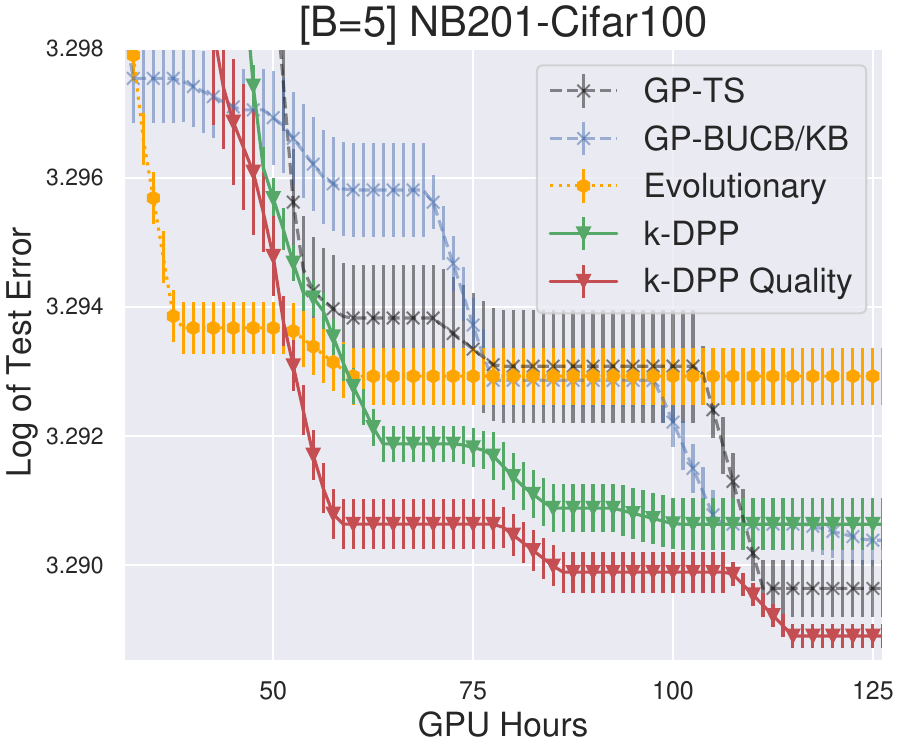}
 \includegraphics[trim=0cm 0.cm 0cm  0.cm, clip, width=0.325\linewidth]{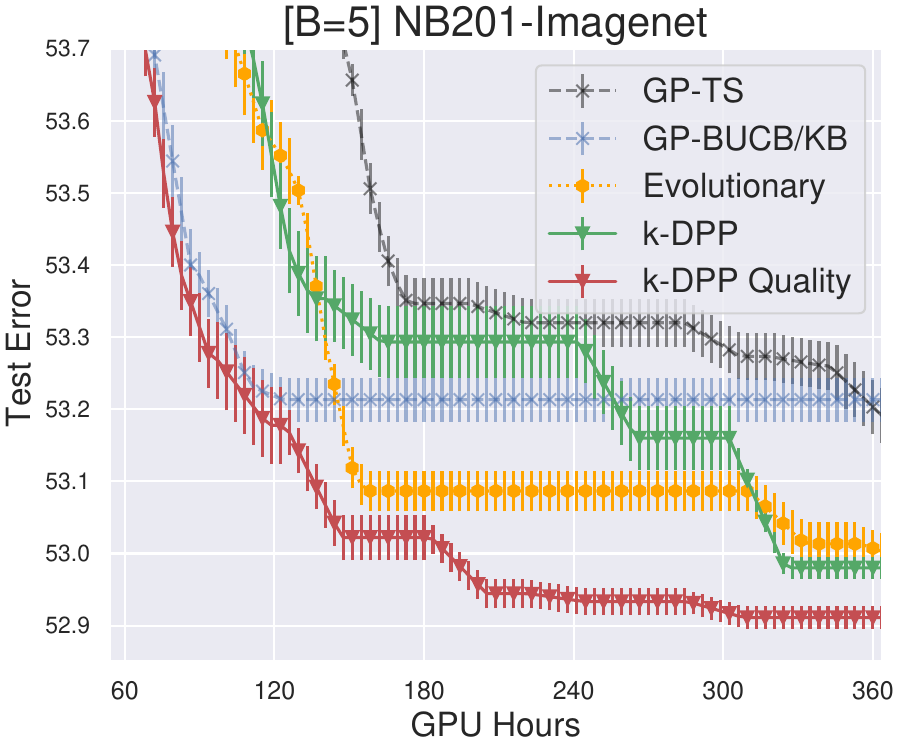}
     \vspace{-3pt}
    \caption{Batch NAS comparison using TW-2Gram and a batch size $B=5$. Our proposed k-DPP Quality (\textcolor{red}{red}) outperforms other baselines in all cases, especially when the number of training architecture (iterations) is low. This is the desirable property of NAS when the training cost is extremely expensive. The experiments are run over $100$ iterations.} \label{fig:batch_NAS}
    \vspace{-3pt}
\end{figure*}

% [trim=0cm 0.cm 0cm  0.cm, clip, width=0.4\linewidth]

\textbf{Estimating hyperparameters.} We plot the estimated hyperparameters 
\[
\lambda_1=\frac{\alpha_1}{ \sigma^2_l}, \quad \lambda_2=\frac{\alpha_2}{ \sigma^2_l}, \quad \lambda_3=\frac{1-\alpha_1-\alpha_2}{ \sigma^2_l},
\]
over iterations in Fig. \ref{fig:estimated_lambdas}. This indicates the relative contribution of the operation, indegree and outdegree toward the TW $d_{\text{NN}}$ for neural networks in Eq. (\ref{equ:dNN}). Particularly, the operation contributes receives higher weight and is useful information than either the individual indegree or outdegree.

\subsection{Parallel NAS}

%\paragraph{Batch NAS.}

We next demonstrate our model on selecting multiple architectures for parallel evaluation (i.e., parallel NAS) setting. There are fewer approaches for parallel NAS compared to the sequential setting. We select to compare our k-DPP quality against Thompson sampling \cite{Hernandez_2017Parallel},  GP-BUCB \cite{Desautels_2014Parallelizing} and k-DPP for batch BO \cite{Kathuria_NIPS2016Batched}. The GP-BUCB is equivalent to Kriging believer \cite{Ginsbourger_2010Kriging} when the hallucinated observation value is set to the GP predictive mean. Therefore, we label them as GP-BUCB/KB. We also compare with the vanilla k-DPP (without using quality) \cite{Kathuria_NIPS2016Batched}.

% and $90$ iterations of batch size $B=5$
We allocate a maximum budget of $500$ queries including $50$ random initial architectures. The result in Fig. \ref{fig:batch_NAS} shows that our proposed k-DPP quality is the best among the considered approaches. We refer to the Appendix for additional experiments including varying batch sizes and more results on NB201.

Our sampling from k-DPP quality is advantageous against the existing batch BO approaches \cite{Ginsbourger_2010Kriging,Desautels_2014Parallelizing,Kathuria_NIPS2016Batched,Hernandez_2017Parallel} in which we can optimally select a batch of architectures without relying on the greedy selection strategy. In addition, our k-DPP quality can leverage the benefit of the GP in estimating the hyperparameters for the covariance matrix.

\begin{figure} [t]  
     \centering
    % % trim={<left> <lower> <right> <upper>}
    \includegraphics[trim=0cm 0.cm 0cm  0.cm, clip, width=0.49\linewidth]{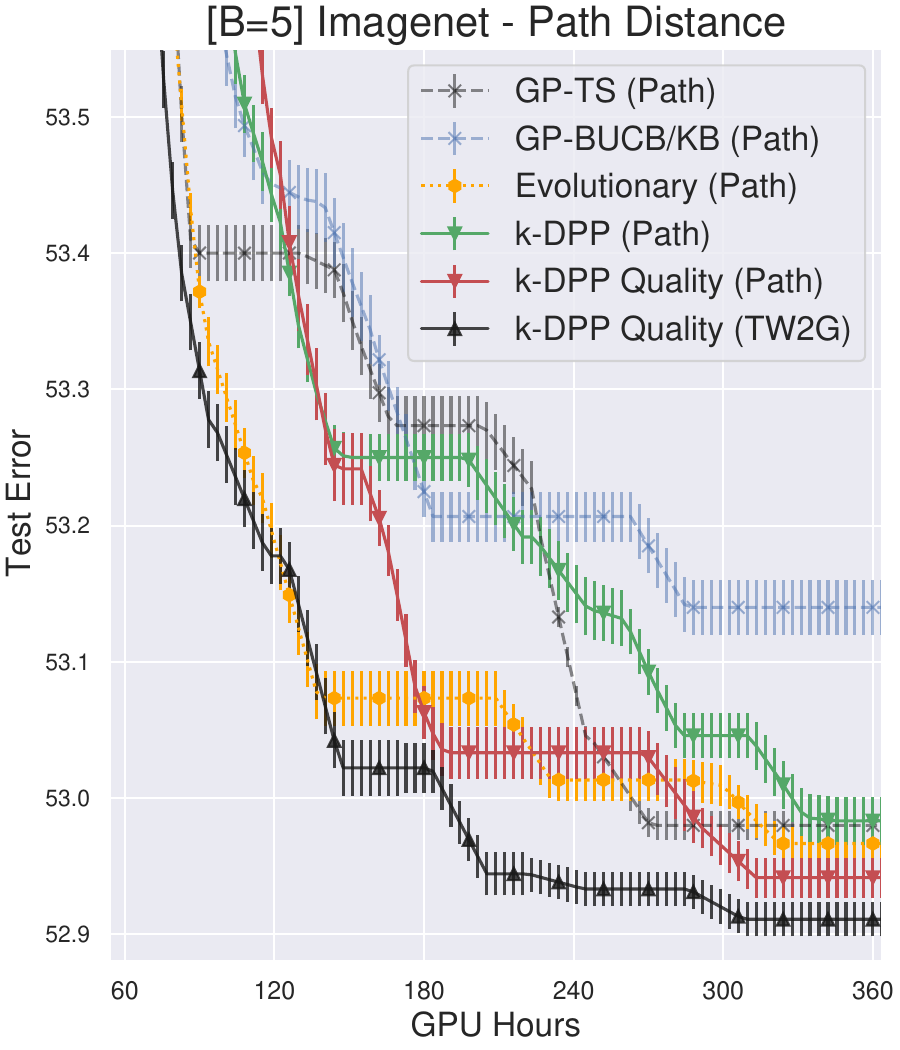}
    %\hspace{5pt}
        \includegraphics[trim=0cm 0.cm 0cm  0.cm, clip, width=0.49\linewidth]{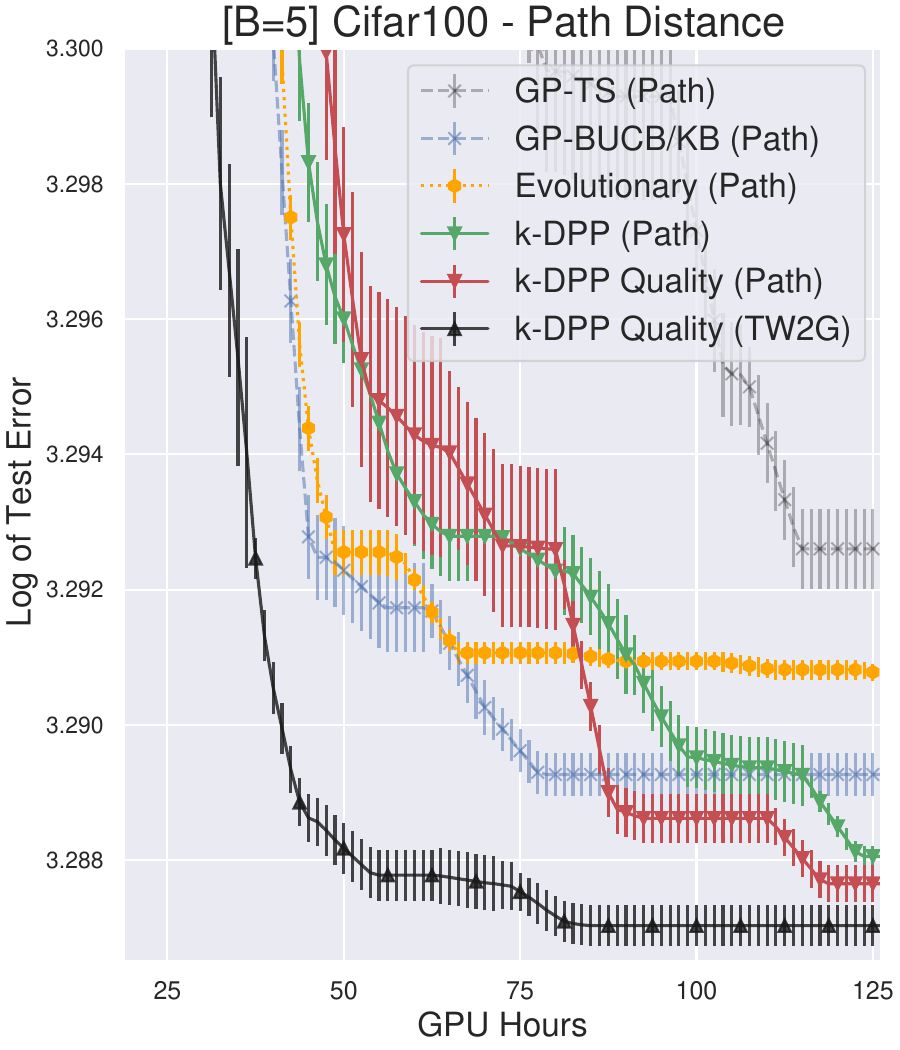}

    %  \begin{subfigure}[t]{0.23\textwidth}
    % \centering
    %     \includegraphics[width=\linewidth]{fig/batch_test_NB201_Imagenet_Path.pdf} 
    %     %\caption{Corresponding time consumption of kernel matrices for $\TDA$ and document classification.} \label{fg:Time}
    % \end{subfigure}
    % \hfill
    % \begin{subfigure}[t]{0.23\textwidth}
    %     \centering
    %     \includegraphics[width=\linewidth]{fig/batch_test_NB201_CF100_path.pdf} 
    %     %\caption{The KFDR graphs on granular packing system and SiO$_2$ datasets.} \label{fg:KFDR}
    % \end{subfigure}
        \vspace{-3pt}
    \caption{We compare different batch approaches using a path distance \cite{white2019bananas} and a batch size $B=5$ . We show that (i) the k-DPP quality outperforms the other batch approaches and (ii) the k-DPP using TW2G (a black curve) performs better than using path distance (a \textcolor{red}{red} curve).} \label{fig:batch_different_distance}
    \vspace{-3pt}
\end{figure}

\textbf{Ablation study of k-DPP quality with path distance.} In addition to the proposed tree-Wasserstein, we demonstrate the proposed k-DPP quality using path distance \cite{white2019bananas}. We show that our k-DPP quality is not restricted to TW-2G, but it can be generally used with different choices of kernel distances.

Particularly, we present in Fig. \ref{fig:batch_different_distance} the comparison using two datasets: Imagenet and Cifar100 in NB201. The results validate two following messages. First, our k-DPP quality is the best among the other baselines in selecting high-performing and diverse architectures. Second, our k-DPP quality with TW2G (a black curve) performs better than k-DPP quality using Path distance (a red curve). This demonstrates the key benefits of comparing two complex architectures as logistical supplier and receiver.

\section{Conclusion}
\label{sec:conclusion}

We have presented a new framework for sequential and parallel NAS. Our proposed framework constructs the similarity between architectures using tree-Wasserstein geometry. Then, it utilizes the Gaussian process surrogate for modeling and optimization. We draw the connection between GP predictive distribution to k-DPP quality for selecting diverse and high-performing architectures from discrete set. We demonstrate our model using Nasbench101 and Nasbench201 that our methods outperform the existing baselines in sequential and parallel settings.

%Future work will be twofolds. First, we can investigate multi-objective NAS to define and optimize a dual-objective function of loss and number of model parameters. Second, we can extend the current model to handle multi-fidelity across different of trained epochs.

% \begin{comment}
% \section*{Broader Impact}

% This paper presents a new machine learning approach that may have several societal impacts:

% \begin{itemize}
%     \item Our proposed technique in sequential and batch neural architecture search will be widely applicable to all deep neural networks - based approaches, including a wide range of machine learning algorithms (deep reinforcement learning, deep generative model, deep learning) and a tremendous range of applications (computer vision, natural language processing, manufacturing and more). 
%     \item Our NAS approaches automate the neural network design process that significantly saves cost and time for machine learning practitioners by taking them out of the tuning loop.
%     \item The tree-Wasserstein distance for neural networks can be of independent interest for different tasks in which utilizing the kernel or covariance matrix, such as neural network compression, neural network training and continual learning.
%     \item The k-DPP quality with Gaussian process can be of independent interest for other tasks such as selecting a diverse and high-quality set of points from a discrete set in document summarization, video summarization... etc.
    
% \end{itemize}
% \end{comment}
% -- this process is typically very expensive in time and cost

\section*{Acknowledgements}
%The authors also thank the anonymous ICML reviewers and the area chair for improving the paper.

We thank anonymous reviewers and area chairs for their comments to improve the paper. VN would like to thank NVIDIA for sponsoring GPU hardware and Google Cloud Platform for sponsoring computing resources in this project. TL acknowledges the support of JSPS KAKENHI Grant
number 20K19873. 

%\balance
\bibliographystyle{icml2021}
\bibliography{MLREF,vunguyen}

%%%%%%%%%%%%%%%%%%%%%%%%%%%%%%%%%%%%%%%%%%%%%%%%%%%%%%%%%%%%%%%%%%%%%%%%%%%%%%%%%

% \begin{center}
% \textbf{\Large{Appendix to ``Optimal Transport Kernels for Sequential and Parallel Neural Architecture Search"}}
% \end{center}

\newpage
\appendix

\onecolumn

\begin{center}
\textbf{\Large{\textit{Supplementary Material for:} Optimal Transport Kernels for Sequential and Parallel Neural Architecture Search}}
\end{center}

%\maketitle

In this supplementary, we first review the related works in sequential and batch neural architecture search in \S\ref{sec:app:related_work_NAS} and \S\ref{sec:app:batch} respectively. We next give further detailed information about the datasets in our experiments in \S\ref{sec:app:datasets}. Then, we derive detailed proofs for the results in the main text in \S\ref{sec:app:proofs}. After that, we give a brief discussion about the tree-Wasserstein geometry in \S\ref{sec:app:TW}, and follow with an example of TW computation for neural network architectures in \S\ref{sec:app:TW_Ex}. In \S\ref{app:sec:opt_hyper}, we provide additional details of the Bayesian optimization in estimating the hyperparameters. In \S\ref{sec:app:distance_property}, we show distance properties comparison. Finally, we illustrate further empirical comparisons and analysis for the model in \S\ref{sec:app:add_experiments}.

%%%%%%%%%%%%%%%%%%
\section{Related works in Neural architecture search} \label{sec:app:related_work_NAS}

%Reinforcement learning: This research area was revitalized when NASNet gained significant attention \cite{zoph2016neural}. NASNet is a reinforcement learning algorithm for NAS which achieves state-of-the-art results on CIFAR-10 and PTB; however, the algorithm requires 3000 GPU days to train. NASNet spurred a number of follow-up work, eventually pushing the required computation for NAS on CIFAR-10 and PTB to under 10 GPU days (Pham et al., 2018; Liu et al., 2018a;b; Ying et al., 2019; Kandasamy et al., 2018; Jin et al., 2018; Shah et al., 2018).

We refer the interested readers to the survey paper \cite{elsken2019neural} and the literature of neural architecture search\footnote{\url{https://www.automl.org/automl/literature-on-neural-architecture-search}} for the comprehensive survey on neural architecture search. Many different search strategies have been attempted to explore the space of neural architectures, including random search, evolutionary methods, reinforcement learning (RL), gradient-based methods and Bayesian optimization.

\paragraph{Evolutionary approaches.}
 \citet{real2017large,real2019regularized,suganuma2017genetic,liu2018hierarchical,shah2018amoebanet,xie2017genetic,elsken2019efficient}  have been extensively used for NAS. In the context of evolutionary, the mutation operations include adding a layer, removing a layer or changing the type of a layer (e.g., from convolution to pooling) from the neural network architecture. Then, the evolutionary approaches will update the population, e.g., tournament selection by removing the worst or oldest individual from a population.

\paragraph{Reinforcement learning.}
NASNet \cite{zoph2016neural} is a reinforcement learning algorithm for NAS which achieves state-of-the-art results on CIFAR-10 and PTB; however, the algorithm requires $3000$ GPU days to train. Efficient Neural Architecture Search (ENAS) \cite{pham2018efficient} proposes to use a controller that discovers architectures by learning to search for an optimal subgraph within a large graph. The controller is trained with policy gradient to select a subgraph that maximizes the validation set's expected reward. The model corresponding to the subgraph is trained to minimize a canonical cross-entropy loss. Multiple child models share parameters, ENAS requires fewer GPU-hours than other approaches and $1000$-fold less than "standard" NAS. Other reinforcement learning approaches for NAS have also proposed, such as MetaQNN \cite{baker2016designing} and BlockQNN \cite{zhong2018practical}.

%Many neuro-evolutionary approaches since then \cite{real2017large,real2019regularized,stanley2002evolving} (Angeline et al., 1994; Stanley and Miikkulainen, 2002; Stanley et al., 2009) use genetic algorithms to optimize both the neural architecture and its weights. However, when scaling to contemporary neural architectures with millions of weights for supervised learning tasks, SGD-based weight optimization methods currently outperform evolutionary ones.

%focus on how to change the landscape of the search space from a discrete to a differentiable one. 
\paragraph{Gradient-based approaches.}
 The works presented in \citet{luo2018neural,liu2019darts,dong2019searching,yao2020efficient}  represent the search space as a directed acyclic graph (DAG) containing billions of sub-graphs, each of which indicates a kind of neural architecture. To avoid traversing all the possibilities of the sub-graphs, they develop a differentiable sampler over the DAG.  The benefit of such an idea is that a differentiable space enables computation of gradient information, which could speed up the convergence of underneath optimization algorithm. Various techniques have been proposed, e.g., DARTS \cite{liu2019darts} %smooths design choices with softmax and trains an ensemble of networks
 , SNAS \cite{xie2018snas} 
 %enhances reinforcement learning with a smooth sampling scheme. 
 , and NAO \cite{luo2018neural}.
 %maps the search space into a new differentiable space with an auto-encoder. 
 While these approaches based on gradient-based learning can reduce the computational resources required for NAS, it is currently not well understood if an initial bias in exploring certain parts of the search space more than others might lead to the bias and thus result in premature convergence of NAS \cite{sciuto2019evaluating}. In addition, the gradient-based approach may be less appropriate for exploring different space (e.g., with a completely different number of layers), as opposed to the approach presented in this paper.
 
 % which biases they introduce into the search if the sampling distribution of architectures is optimized along with the one-shot model instead of fixing it. For instance, 
 %In general, a more systematic analysis of biases introduced by different performance estimators would be a desirable direction for future work.

% A recent algorithm, AlphaX, uses a meta neural network to perform NAS \cite{wang2019alphax}. This is a similar high-level idea to our work but with many key differences. Their search is progressive, and
% each iteration makes a small change to the current neural network, rather than choosing a completely new neural network. Therefore, their meta neural network is trained to optimize the selection of new
% actions as well as to predict the performance of the new progressive network. Furthermore, they use an adjacency matrix featurization for the inputs to the meta neural network, which we find to perform
% worse than our path-based encoding.

\paragraph{Bayesian optimization.}

%While a full comparison is lacking, there is preliminary evidence that BO approaches can outperform evolutionary algorithms for NAS. \cite{klein2018towards}.

BO  has been an emerging technique for black-box optimization when function evaluations are expensive \cite{Rana_ICML2017High,frazier2018tutorial,gopakumar2018algorithmic_NIPS, wan2021think}, and it has seen great success in hyperparameter optimization for deep learning \cite{li2018hyperband,boil,cocabo,pb2}. Recently, Bayesian optimization has been used for searching the best neural architecture \cite{kandasamy2018neural,jin2018auto,white2019bananas}. BO relies on a covariance function to represent the similarity between two data points. For such similarity representation, we can  (1) directly measure the similarity of the networks by optimal transport, then modeling with GP surrogate in \citet{kandasamy2018neural}; or (2) measure the graphs based on the path-based encodings, then modeling with neural network surrogate in \citet{white2019bananas}. OTMANN \cite{kandasamy2018neural} shares similarities with Wasserstein (earth mover’s) distances which also have an OT formulation. However, it is not a Wasserstein distance itself—in particular, the supports of the masses and the cost matrices change depending on the two networks being compared. One of the drawback of OTMANN is that it may not be negative definite for a p.s.d. kernel which is an important requirement for modeling with GP. This is the motivation for our proposed tree-Wasserstein.

%Drawback of NASBOT [to be filled in]

\textbf{Path-based encoding.}
Bananas \cite{white2019bananas} proposes the path-based encoding for neural network architectures. The drawback of path-based encoding is that we need to enumerate all possible paths from the input node to the output node, in terms of the operations. This can potentially raise although it can work well in NASBench dataset \cite{ying2019bench} which results in $ \sum^5_{i=0} 3^i = 364$ possible paths.

%The total number of paths is  $ \sum^n_{i=0} q^i = 364$ where $n$ denotes the number of nodes in the cell, and $q$ denotes the number of operations for each node. For example, the NASBench \cite{ying2019bench} dataset has $ \sum^5_{i=0} 3^i = 364$ possible paths.

%Architecture featurization via path-based encoding. Prior work aiming to encode or featurize neural networks have proposed using an adjacency matrix-based approach \cite{wang2019alphax,ying2019bench,deng2017peephole,baker2017accelerating}.
% A path encoding of a cell is created by enumerating all possible paths from the input node to the output node, in terms of the operations (see Figure 1). The total number of paths is  $ \sum^n_{i=0} q^i = 364$ where $n$ denotes the number of nodes in the cell, and $q$ denotes the number of operations for each node. For example, the NASBench \cite{ying2019bench} dataset has $\sum^5_{i=0} 3^i= 364$ possible paths.

\textbf{Kernel graph.}
Previous work considers the neural network architectures as the graphs, then  defining various distances and kernels on graphs \cite{messmer1998new,wallis2001graph,kondor2002diffusion,smola2003kernels,gao2010survey,ru_iclr2021}. However, they may not be ideal for our NAS setting because neural networks have additional complex properties in addition to graphical structure, such as the type of operations performed at each layer, the number of neurons, etc. Some methods do allow different vertex sets \cite{vishwanathan2010graph}, they cannot handle layer masses and layer similarities. 

%%%%%%%%%%%%%%%%%%%
\section{Related works in batch neural architecture search}\label{sec:app:batch}

They are several approaches in the literature which can be used to select multiple architectures for evaluation, including monte carlo tree search \cite{alphax}, evolutionary search \cite{real2019regularized} and most of the batch Bayesian optimization approaches, such as Krigging believer \cite{Ginsbourger_2010Kriging}, GP-BUCB \cite{Desautels_2014Parallelizing}, B3O \cite{Nguyen_ICDM2016Budgeted}, GP-Thompson Sampling \cite{Hernandez_2017Parallel}, and BOHB \cite{falkner2018bohb}.

Krigging believer (KB) \cite{Ginsbourger_2010Kriging} exploits an interesting fact about GPs: the predictive variance of GPs depends only on the input $\bx$,
but not the outcome values $y$. KB will iteratively construct a batch of points. First, it finds the maximum of the acquisition function, like the sequential setting.
Next, KB moves to the next maximum by suppressing
this point. This is done by inserting the outcome at
this point as a halucinated value. This process is repeated until the batch is filled.

GP-BUCB \cite{Desautels_2014Parallelizing} is related to the above Krigging believer in exploiting the GP predictive variance. Particularly, GP-BUCB is similar to KB when the halucinated value is set to the GP predictive mean.

%We also compare with the vanilla k-DPP (without using quality) \cite{Kathuria_NIPS2016Batched}.

GP-Thompson Sampling \cite{Hernandez_2017Parallel} generates a batch of points by drawing from the posterior distribution of the GP to fill in a batch. In the continuous setting, we can draw a GP sample using random Fourier feature \cite{Rahimi_2007Random}. In our discrete case of NAS, we can simply draw samples from the GP predictive mean.

%%%%%%%%%%%%%%%%%
\section{Datasets}\label{sec:app:datasets}

We summarize two benchmark datasets used in the paper. Neural architecture search (NAS) methods are notoriously difficult to reproduce and compare due to different search spaces, training procedures and computing cost. These make methods inaccessible to most researchers. Therefore, the below two benchmark datasets have been created.
\paragraph{NASBENCH101.}

 The NAS-Bench-101 dataset\footnote{\url{https://github.com/google-research/nasbench}} contains over $423,000$ neural architectures with precomputed training, validation, and test accuracy \cite{ying2019bench}. In NASBench dataset, the neural network architectures have been exhaustively trained and evaluated on CIFAR-10 to create a queryable dataset. Each architecture training takes approximately $0.7$ GPU hour.

\paragraph{NASBENCH201.}
NAS-Bench-201\footnote{\url{https://github.com/D-X-Y/NAS-Bench-201}} includes all possible architectures generated by $4$ nodes and $5$ associated operation options, which results in $15,625$ neural cell candidates in total. The Nasbench201 dataset includes the tabular results for three subdatasets including CIFAR-10, CIFAR-100 and ImageNet-16-120. Each architecture training for Cifar10 takes approximately $0.7$ GPU hours, Cifar100 takes $1.4$ GPU hours and Imagenet takes $4$ GPU hours.

%%%%%%%%%%%%%%%%%%%%
\section{Proofs}\label{sec:app:proofs}

\subsection{Proof for Lemma \ref{lem:GP_psd}}\label{sec:proof_lem:GP_psd}

\begin{proof}
We consider $X \sim GP(m(), k())$. 
If $k$ is not a p.s.d. kernel, then there is some set of $n$ points $\left(t_i \right) _{i=1}^n $ and corresponding weights $\alpha_i \in \mathcal{R}$ such that
\begin{align}
\sum_{i=1}^n \sum_{j=1}^n \alpha_i k(t_i,t_j) \alpha_j < 0.
\end{align}

%Now, consider the joint distribution of $(X(t_i))$.
By the GP assumption, $\cov \bigl( X(t_i),X(t_j) \bigr) = k(t_i,t_j)$, we show that the variance is now negative
\begin{align}
\var \left( \sum_{i=1}^n \alpha_i X(t_i) \right) = \sum_{i=1}^n \sum_{j=1}^n \alpha_i \cov \bigl( X(t_i), X(t_j) \bigr) \alpha_j <0.
\end{align}
The negative variance concludes our prove that the GP is no longer valid with non-p.s.d. kernel.

\end{proof}

\subsection{Proof for Proposition \ref{pro:dNN}}\label{sec:proof_pro:dNN}

\begin{proof}
We have that tree-Wasserstein (TW) is a metric and negative definite \cite{le2019tree}. Therefore, $\dOperation, \dIn, \dOut$ are also a metric and negative definite.

Moreover, the discrepancy $d_{\dNN}$ is a convex combination with positive weights for $\dOperation, \dIn, \dOut$. Therefore, it is easy to verify that for given neural networks $\bx_1, \bx_2, \bx_3$, we have: 
\begin{itemize}
\item $d_{\dNN}(\bx_1, \bx_1) = 0$,
\item $d_{\dNN}(\bx_1, \bx_2) = d_{\dNN}(\bx_2, \bx_1)$,
\item $d_{\dNN}(\bx_1, \bx_2) + d_{\dNN}(\bx_1, \bx_2) \ge d_{\dNN}(\bx_2, \bx_3)$. 
\end{itemize}
Thus, $d_{\dNN}$ is a pseudo-metric. Additionally, a convex combination with positive weights preserves the negative definiteness. Therefore, $d_{\dNN}$ is negative definite.
\end{proof}

\subsection{Tree-Wasserstein kernel for neural networks}

\begin{proposition}\label{prop:TWK_ID}
Given the scalar length-scale parameter $\sigma^2_l$, the tree-Wasserstein kernel for neural networks $k(x, z) = \exp(-\frac{d_{\dNN}(x, z)}{\sigma^2_l})$ is infinitely divisible.
\end{proposition}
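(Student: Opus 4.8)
The plan is to reduce the claim to Schoenberg's theorem, exactly the tool already used to pass from the negative-definite discrepancy $d_{\dNN}$ to the positive-definite kernel in Eq.~\eqref{equ:TWKernel_NN}. Recall that a positive definite kernel $k$ is \emph{infinitely divisible} precisely when, for every $n \in \NN_{+}$, the pointwise $n$-th root $(x,z) \mapsto k(x,z)^{1/n}$ is itself positive definite. So the whole proof amounts to checking this family of roots.

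First I would rewrite the roots explicitly: from Eq.~\eqref{equ:TWKernel_NN}, $k(x,z)^{1/n} = \exp\!\big(-d_{\dNN}(x,z)/(n\,\sigma_l^2)\big)$, which is nothing but the tree-Wasserstein kernel of the same form with the length-scale $\sigma_l^2$ replaced by $n\,\sigma_l^2$. Next I would observe that $d_{\dNN}$ is negative definite by Proposition~\ref{pro:dNN}, and that multiplying a negative definite function by the nonnegative scalar $1/(n\,\sigma_l^2)$ preserves negative definiteness; hence $d_{\dNN}/(n\,\sigma_l^2)$ is negative definite for every $n$. Finally I would invoke Theorem~3.2.2 in \cite{berg1984harmonic} (Schoenberg): if $\psi$ is negative definite then $\exp(-\psi)$ is positive definite. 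Applied to $\psi = d_{\dNN}/(n\,\sigma_l^2)$, this shows $k^{1/n}$ is positive definite for all $n$, i.e. $k$ is infinitely divisible.

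There is essentially no hard step here: the only point needing care is the bookkeeping that taking a fractional power keeps us inside the same exponential-of-a-negative-definite-function family, so that the argument establishing positive definiteness of Eq.~\eqref{equ:TWKernel_NN} applies verbatim to each root rather than just to the exponent $1/\sigma_l^2$. If a self-contained argument is preferred over citing Schoenberg, one can instead write $d_{\dNN}(x,z) = \|\varphi(x) - \varphi(z)\|^2$ for a feature map $\varphi$ into a Hilbert space (which exists because $d_{\dNN}$ is negative definite and vanishes on the diagonal), and then expand $\exp\!\big(-\|\varphi(x)-\varphi(z)\|^2/(n\,\sigma_l^2)\big)$ as the Gaussian kernel to exhibit it as an inner product; but invoking Schoenberg directly is the cleanest route.
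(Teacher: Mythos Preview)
Your proposal is correct and follows essentially the same route as the paper: the paper also introduces, for each $\gamma\in\NN^{*}$, the kernel $k_{\gamma}(\bx,\bz)=\exp\!\big(-d_{\dNN}(\bx,\bz)/(\gamma\,\sigma_l^2)\big)$, invokes Theorem~3.2.2 of \cite{berg1984harmonic} (Schoenberg) to get positive definiteness of each $k_{\gamma}$, observes $k=(k_{\gamma})^{\gamma}$, and then appeals to the definition of infinite divisibility. Your write-up is slightly more explicit about why scaling by $1/(n\sigma_l^2)$ preserves negative definiteness, but the argument is the same.
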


\begin{proof}
%Given two neural networks $\bx$ and $\bz$, we introduce new kernels $k_{\gamma}(\bx, \bz) = \exp(-\frac{d_{\dNN}(\bx, \bz)}{\gamma \sigma^2_l} )$ for $\gamma \in \NN^{*}$. Following \cite{berg1984harmonic} (Theorem 3.2.2, p.74), $k_{\gamma}(\cdot, \cdot)$ is also positive definite. Moreover, we also have $k(\bx, \bz) = \left(k_{\gamma}(\bx, \bz) \right)^{\gamma}$. Then, following \cite{berg1984harmonic} (Definition 2.6, p.76), we complete the proof.
Given two neural networks $\bx$ and $\bz$, we introduce new kernels $k_{\gamma}(\bx, \bz) = \exp \bigl(-\frac{d_{\dNN}(\bx, \bz)}{\gamma \sigma^2_l} \bigr)$ for $\gamma \in \NN^{*}$. Following \citet{berg1984harmonic} (Theorem 3.2.2, p.74), $k_{\gamma}(x, z)$ is also positive definite. Moreover, we also have $k(\bx, \bz) = \bigl(k_{\gamma}(\bx, \bz) \bigr)^{\gamma}$. Then, following \citet{berg1984harmonic} (Definition 2.6, p.76), we complete the proof.
\end{proof}

From Proposition~\ref{prop:TWK_ID}, one does not need to recompute the Gram matrix of the TW kernel for each choice of $\sigma^2_l$, since it suffices to compute it once.

%\section{Additional information on the tree-Wasserstein}

%General search space of directed acyclic graphs for cell-based NAS methods. Exhaustively trained and evaluated all models on CIFAR-10 to create a queryable dataset

% For a fair and reproducible experiments in the future (Li \& Talwalkar, 2019; Sciuto et al., 2019; Ying et al., 2019), the NASBench dataset was created, which contains over 400k neural architectures with precomputed training, validation, and test accuracy \cite{ying2019bench}.

%%%%%%%%%%%%%%%%%%%%%%
\section{A brief discussion on tree-Wasserstein geometry}\label{sec:app:TW}

Tree-Wasserstein is a special case of the optimal transport (OT) where the ground cost is the tree metric \cite{do2011sublinear, le2019tree}. Tree-(sliced)-Wasserstein is a generalized version of the sliced-Wassersttein (SW) which projects supports into a line (i.e., one dimensional space) and relies on the closed-form solution of the univariate OT.\footnote{When a tree is a chain, the tree-(sliced)-Wasserstein is equivalent to the sliced-Wasserstein.} Moreover, the tree-Wasserstein alleviates the curse of dimensionality for the sliced-Wasserstein.\footnote{SW projects supports into one dimensional space which limits its capacity to capture structures of distributions. The tree-Wasserstein alleviates this effect since a tree is more flexible and has a higher degree of freedom than a chain.} Moreover, the tree-Wasserstein has a closed-form for computation and is negative definite which supports to build positive definite kernels.\footnote{The standard OT is in general indefinite.}

The tree structure has been leveraged for various optimal transport problems, especially for large-scale settings such as the unbalanced OT problem where distributions have different total mass~\cite{le2021ept}, the Gromov-Wasserstein problem where supports of distributions are in different spaces~\cite{le2021fba, memoli2021ultrametric}, Gromov-Hausdoff problem for metric measure spaces~\cite{memoli2019gromov}, or the Wasserstein barycenter problem which finds the closest distribution to a given set of distributions~\cite{le2019wb}.

In our work, we propose tree-Wasserstein for neural network architectures by leveraging a novel design of tree structures on network operations, indegree and outdegree representation for network structure. Therefore, our tree-Wasserstein for neural network can capture not only the frequency of layer operations (i.e., $n$-gram representation for layer operations) but also the entire network structure (i.e., indegree and outdegree representation).

%%%%%%%%%%%%%%%%%%%%%%
\section{An example of TW computation for neural network architectures} \label{sec:app:TW_Ex}

We would like to recall the TW distance between probability measures in Equation~\eqref{equ:OT_LT}:
\begin{align}
W_{d_{\TM}}(\omega, \nu) = \sum_{e \in \Tt} w_e \bigl| \omega \bigl(\Gamma(v_e) \bigr) - \nu \bigl( \Gamma(v_e) \bigr) \bigr|.
\end{align}
We emphasize that tree Wasserstein (TW) is computed as \emph{a sum over edges} in a tree, but \emph{not the sum over supports} (e.g., in $1$-gram representation). We next explain all components in TW (Equation~\eqref{equ:OT_LT}) for computation in details.

\begin{figure*} [t]  
     \centering
\includegraphics[trim=0cm 0.cm 0cm  0.cm, clip, width=0.7\textwidth]{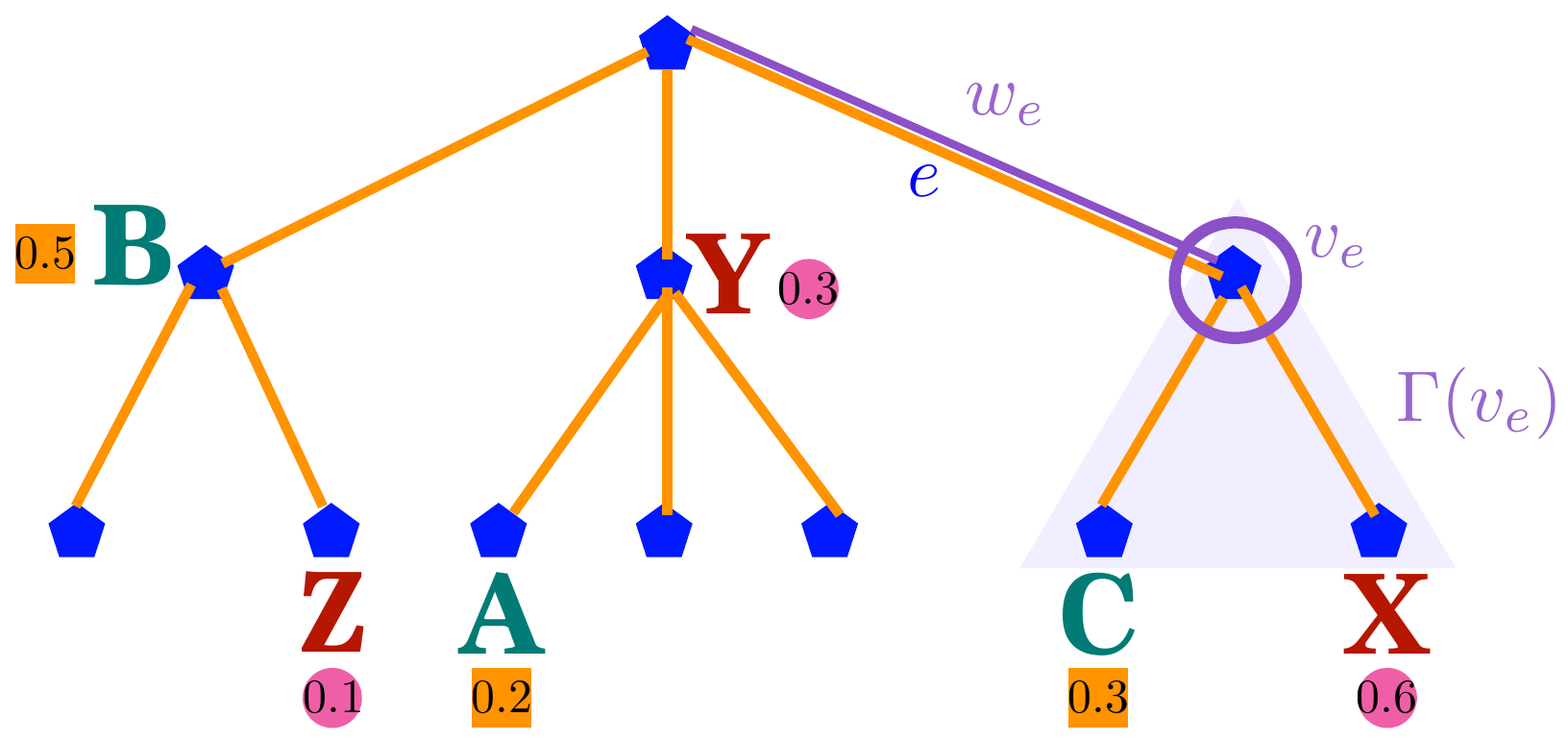}
    \caption{We illustrate a tree metric using a tree $\Tt$ which is used to explain the TW formula in Eq. \eqref{equ:OT_LT}. See text for more details.} \label{fig:example_TreeMetric}
\end{figure*}

Let consider the tree metric illustrated in Fig.~\ref{fig:example_TreeMetric}, and two probability measures $\omega = 0.2\delta_{A} + 0.5\delta_{B} + 0.3\delta_{C}$, and $\nu = 0.6\delta_{X} + 0.3\delta_{Y} + 0.1\delta_{Z}$. In this example, $A, B, C$ are supports of probability measure $\omega$ with corresponding weights $0.2, 0.5, 0.3$. In other words, the mass of the probability measure $\omega$ at each support $A, B, C$ is $0.2, 0.5, 0.3$ respectively. Similarly, $X, Y, Z$ are supports of probability measure $\nu$ with corresponding weights $0.6, 0.3, 0.1$. Next, let consider edge $e$ in Fig.~\ref{fig:example_TreeMetric} (the blue $e$), we have the following:
\begin{itemize}
    \item $w_e$ is the weight (or length) of the edge $e$ (the purple line to emphasize the edge blue $e$); 
    \item $v_e$ is one of the two nodes of the edge $e$ which is farther away from the tree root (the node with the purple circle); 
    \item $\Gamma(v_e)$ is the subtree rooted at $v_e$ (illustrated by the purple triangle blurred region); 
    \item For the probability measure $\omega$: $\omega \bigl(\Gamma(v_e) \bigr)$ is the total mass of the probability measure $\omega$ in the subtree $\Gamma(v_e)$; and only the support $C$ (with weight $0.3$) of $\omega$ is on the subtree $\Gamma(v_e)$. Hence, $\omega(\Gamma(v_e)) = 0.3$.
    \item For the probability measure $\nu$: $\nu \bigl(\Gamma(v_e) \bigr) = 0.6$. There is only the support $X$ with weight $0.6$ of $\nu$ is in the subtree $\Gamma(v_e)$.
    
\end{itemize}
% Similarly, for this considered edge $e$, we have $\nu(\Gamma(v_e)) = 0.6$ (since only the support $X$ with weight $0.6$ of $\nu$ is in the subtree $\Gamma(v_e)$. 

We do the same procedure for all other edges in the tree $\Tt$ for the TW computation. 

Intuitively, in case we consider the map $g: \omega \mapsto w_e\omega \bigl(\Gamma(e) \bigr) \mid_{e \in \Tt}$, we have $g(\omega) \in \RR^{m}$ where $m$ is the number of edges in the tree $\Tt$. The TW distance between two probability measures $\omega, \nu$ is equivalent to the $\ell_1$ distance between two $m$-dimensional (non-negative) vectors $g(\omega), g(\nu)$.

Now, we are ready to present an example of using TW for transporting two neural network architectures in Fig.~\ref{fig:example_TW101}. We consider a set $\SS$ of interest operations as follow $\SS = \left\{ \texttt{cv1, cv3, mp3} \right\}$.

\begin{figure*} [t]  
   
     \centering
  
\includegraphics[trim=0cm 0.cm 0cm  0.cm, clip, width=0.99\textwidth]{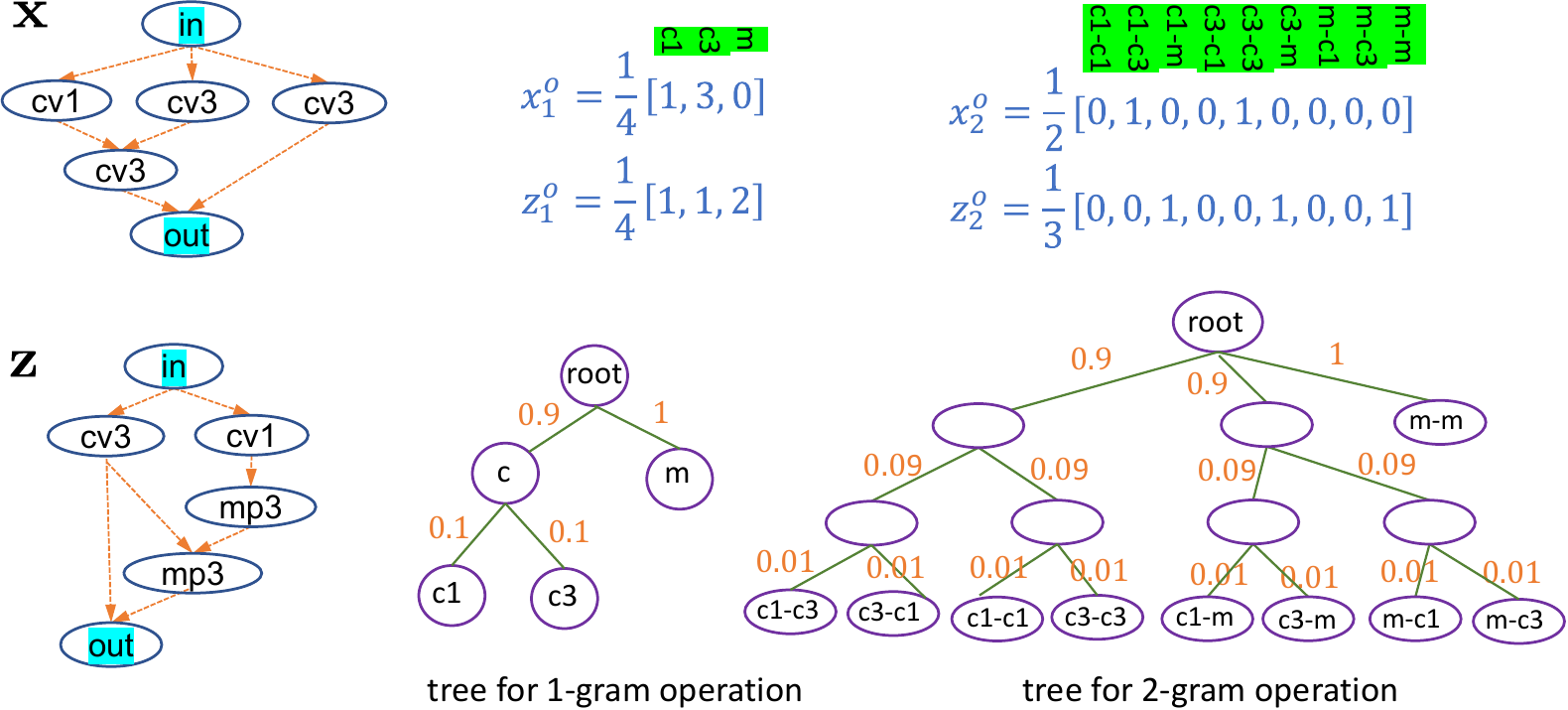}

    \caption{Example of TW used for calculating two architectures. The label of each histogram bin is highlighted in green. The distance between two nodes in a tree is sum of total cost if we travel between the two nodes, see Eq. (\ref{equ:OT_LT}). For example, the cost for moving  $\texttt{maxpool}$ ($m$) to  $\texttt{conv1}$ ($c1$) is $1+0.9+0.1=2$. We use similar analogy for computing in 2-gram (2g) representation. For the tree construction, let consider the the $1$-gram tree, we group \textit{c1} (\texttt{conv1}) and \textit{c3} (\texttt{conv3}) due to their similarity, and "\textit{c}" (\texttt{conv}) is an abstract label for the group of "convolution". We define the edge weights decreasing when the edge is far from the root, inspired by the partition-based tree metric sampling method (see the "$n$-gram representation for layer operations" part in $\S$2.2).} \label{fig:example_TW101}
\end{figure*}

% The tree weights is defined using the principle as follows: (i) identical operation (\texttt{cv1},\texttt{cv1}) has zero cost, (ii) similar operations (\texttt{cv1},\texttt{cv3}) have small cost, and (iii) very different operations (\texttt{cv1},\texttt{mp3})  have high cost.

%We consider two architectures in Fig. \ref{fig:example_TW101}.

\paragraph{Neural network information $\left( \Ss^{o}, A \right)$.} We use the order top-bottom and left-right for layers in $\Ss^{o}$.

\begin{itemize}
\item For neural network $\bx$, we have $\Ss^{o}_{\bx} = \left\{\texttt{in, cv1, cv3, cv3, cv3, out} \right\}$,

\begin{equation*}
A_{\bx} = 
\begin{pmatrix}
0 & 1 & 1 & 1 & 0 & 0 \\
0 & 0 & 0 & 0 & 1 & 0 \\
0 & 0 & 0 & 0 & 1 & 0 \\
0 & 0 & 0 & 0 & 0 & 1 \\
0 & 0 & 0 & 0 & 0 & 1 \\
0 & 0 & 0 & 0 & 0 & 0 
\end{pmatrix}
\end{equation*}

\item For neural network $\bz$, we have $\Ss^{o}_{\bz} = \left\{\texttt{in, cv3, cv1, mp3, mp3, out} \right\},$
\begin{equation*}
A_{\bz} = 
\begin{pmatrix}
0 & 1 & 1 & 0 & 0 & 0 \\
0 & 0 & 0 & 0 & 1 & 1 \\
0 & 0 & 0 & 1 & 0 & 0 \\
0 & 0 & 0 & 0 & 1 & 0 \\
0 & 0 & 0 & 0 & 0 & 1 \\
0 & 0 & 0 & 0 & 0 & 0 
\end{pmatrix}
\end{equation*}

\end{itemize}

We show how to calculate these three representations (layer operation, indegree and outdegree) using tree-Wasserstein.

\subsection{$n$-gram representation for layer operations} 

\paragraph{$\bullet$ $1$-gram representation.} The $1$-gram representations $\bx^{o}_1$ and $\bz^{o}_1$ for neural network $\bx$, and $\bz$ respectively are:
% \[
% \bx^{o}_1 = \bz^{o}_1 = \frac{1}{6} \left(1, 3, 1, 1, 0 \right), 
% \]
\begin{eqnarray*}
\bx^{o}_1 = \frac{1}{4}(1, 3, 0) \qquad
\bz^{o}_1 = \frac{1}{4}(1, 1, 2) 
\end{eqnarray*}
where we use the order (\texttt{1:cv1, 2:cv3, 3:mp3}) for the frequency of interest operations in the set $\SS$ for the $1$-gram representation of neural network.

\paragraph{$\bullet$ $2$-gram representation.} For the $2$-gram representations $\bx^{o}_2$ and $\bz^{o}_2$ for neural networks $\bx$ and $\bz$ respectively, we use the following order for $\SS \times \SS$: (\texttt{1:cv1-cv1, 2:cv1-cv3, 3:cv1-mp3, 4:cv3-cv1, 5:cv3-cv3, 6:cv3:mp3, 7:mp3-cv1, 8:mp3:cv3, 9:mp3-mp3}). Thus, we have
\begin{eqnarray}
 \bx^{o}_2 = \frac{1}{2}(0, 1, 0, 0, 1, 0, 0, 0, 0),
\qquad
 \bz^{o}_2 = \frac{1}{3}(0, 0, 1, 0, 0, 1, 0, 0, 1). \nonumber
\end{eqnarray}

Or, we can represent them as empirical measures
\begin{eqnarray}
& \omega_{\bx^{o}_2} = \frac{1}{2} \delta_{\texttt{cv1-cv3}} +  \frac{1}{2} \delta_{\texttt{cv3-cv3}}, \hspace{+47pt}
\qquad
& \omega_{\bz^{o}_2} = \frac{1}{3} \delta_{\texttt{cv1-mp3}} + \frac{1}{3} \delta_{\texttt{cv3-mp3}} + \frac{1}{3} \delta_{\texttt{mp3-mp3}}. \nonumber
\end{eqnarray}

\paragraph{$\bullet$ Tree metrics for $n$-gram representations for layer operations.} We can use the tree metric in Fig.~\ref{fig:example_TW101} for $1$-gram and $2$-gram representations. The tree metric for operations are summarized in Table~\ref{tb:TM_TW_Operation_NB101}. 

\begin{table}[]
\begin{center}
\caption{The tree-metric for operations in $\dOperation$ from our predefined tree in Fig.~\ref{fig:example_TW101} satisfying the following properties: (i) identical operation
(\texttt{cv1},\texttt{cv1}) has zero cost, (ii) similar operations (\texttt{cv1},\texttt{cv3}) have small cost, and (iii) very different operations (\texttt{cv1},\texttt{mp3})
 have high cost. Then, the similarity score
 is normalized in Eq. (\ref{equ:dNN}) and Eq. (\ref{equ:TWKernel_NN}) in which the weighting parameters $\alpha$ are learnt directly from the data. The utility score is further standardized $\mathcal{N}(0, 1)$ for robustness. Therefore, our model is robust to the choice of the predefined tree for the cost in this table. (Recall that the cost in the table is computed by tree metric (i.e., the length of the shortest path) between the pair-wise operations in the predefined tree in Fig.~\ref{fig:example_TW101}.)}
\label{tb:TM_TW_Operation_NB101}
\vspace{5pt}
\begin{tabular}{|c|c|c|c|}
\hline
    & \texttt{cv1} & \texttt{cv3} & \texttt{mp3} \\ \hline
\texttt{cv1} & $0$   & $0.2$ & $2$   \\ \hline
\texttt{cv3} & $0.2$ & $0$   & $2$   \\ \hline
\texttt{mp3} & $2$   & $2$   & $0$   \\ \hline
\end{tabular}
\end{center}
\end{table}

% These rules are intuitive and understandable for the deep learning community. 

Using the closed-form computation of tree-Wasserstein presented in Eq.~\eqref{equ:OT_LT} in the main text, we can compute  $\dOperation(\bx^{o}_1,\bz^{o}_1)$ for $1$-gram representation and $\dOperation(\bx^{o}_2,\bz^{o}_2)$ for $2$-gram representation.

For $1$-gram representation, we have %\vcom{check}
\begin{equation}
\dOperation(\bx^{o}_1,\bz^{o}_1) = 0.1\left|\frac{1}{4} - \frac{1}{4} \right| + 0.1\left|\frac{3}{4} - \frac{1}{4} \right| + 0.9\left|1 - \frac{2}{4} \right| + 1\left|0 - \frac{2}{4} \right| = 1.
\end{equation}

For $2$-gram representation, we have
\begin{align}
\dOperation(\bx^{o}_2,\bz^{o}_2) = & 
0.1\left|\frac{1}{2} - 0 \right| + 0.1\left|\frac{1}{2} - 0 \right| + 0.9\left|1 - 0 \right| \\ \nonumber
& +0.01\left|0 - \frac{1}{3}\right| + 0.01\left|0 - \frac{1}{3}\right| + 0.99\left|0 - \frac{2}{3}\right| + 1\left|0 - \frac{1}{3}\right|= 2. 
\end{align}

\subsection{Indegree and outdegree representations for network structure}

%\Tam{@Vu: please derive this example following your new formula (it is not equivalent with the one I wrote before. So, maybe it is better if you do this part. I am not sure I understand your ideas clearly yet.}

The indegree and outdegree empirical measures $(\omega^{d^-}_{\bx}, \omega^{d^+}_{\bx})$ and $(\omega^{d^-}_{\bz}, \omega^{d^+}_{\bz})$ for neural networks $\bx$ and $\bz$ respectively are:
\begin{eqnarray}
&& \omega^{d^-}_{\bx} = \sum_{i=1}^{6} \bx^{d^-}_i \delta_{\frac{\eta_{\bx, i} + 1}{M_{\bx}+ 1}}, \qquad \omega^{d^+}_{\bx} = \sum_{i=1}^{6} \bx^{d^+}_i \delta_{\frac{\eta_{\bx, i} + 1}{M_{\bx}+ 1}} \\
&& \omega^{d^-}_{\bz} = \sum_{i=1}^{6} \bz^{d^-}_i \delta_{\frac{\eta_{\bz, i} + 1}{M_{\bz}+ 1}}, \qquad \omega^{d^+}_{\bz} = \sum_{i=1}^{6} \bz^{d^+}_i \delta_{\frac{\eta_{\bz, i} + 1}{M_{\bz}+ 1}},
\end{eqnarray}
where
\begin{eqnarray}
&& \bx^{d^-} = \left(0, \frac{1}{7}, \frac{1}{7}, \frac{1}{7}, \frac{2}{7},  \frac{2}{7} \right), \qquad \bx^{d^+} = \left(\frac{3}{7}, \frac{1}{7}, \frac{1}{7}, \frac{1}{7}, \frac{1}{7},  0 \right), \\
&& \bz^{d^-} = \left(0, \frac{1}{7}, \frac{1}{7}, \frac{1}{7}, \frac{2}{7},  \frac{2}{7} \right), \qquad \bz^{d^+} = \left(\frac{2}{7}, \frac{2}{7}, \frac{1}{7}, \frac{1}{7}, \frac{1}{7},  0 \right), \\
&& \eta_{\bx} = \left(0, 1, 1, 1, 2, 3\right), \qquad \qquad \eta_{\bz} = \left(0, 1, 1, 2, 3, 4\right),
\end{eqnarray}
$M_{\bx} = 3$, $M_{\bz} = 4$, and $\bx^{d^-}_i, \bx^{d^+}_i, \eta_{\bx, i}$ are the $i^{th}$ elements of $\bx^{d^-}, \bx^{d^+}, \eta_{\bx}$ respectively. Consequently, one can leverage the indegree and outdegree for network structures to distinguish between $\bx$ and $\bz$.

%\Tam{@Vu: please double-check and rewrite according to your new formula for the indegree and outdegree part}

 We demonstrate in Fig. \ref{fig:TW_InOut} how to calculate the tree-Wasserstein for indegree and outdegree.
The supports of empirical measures $\omega^{d^{-}}_{\bx}$ and $\omega^{d^{-}}_{\bz}$ are in a line. For simplicity, we choose a tree as a chain of real values for the tree-Wasserstein distance. The tree becomes a chain of increasing real values, i.e., $\frac{1}{4} \rightarrow \frac{2}{4} \rightarrow \frac{3}{4} \rightarrow 1$, the weight in each edge is the $\ell_1$ distance between two nodes of that edge. Particularly, the tree-Wasserstein is equivalent to the univariate optimal transport. It is similar for empirical measures $\omega^{d^{+}}_{\bx}$ and $\omega^{d^{+}}_{\bz}$.

\begin{figure*} [t]  
     \centering
    % trim={<left> <lower> <right> <upper>}
    \includegraphics[trim=0cm 0.cm 0cm  0.cm, clip, width=0.8\linewidth]{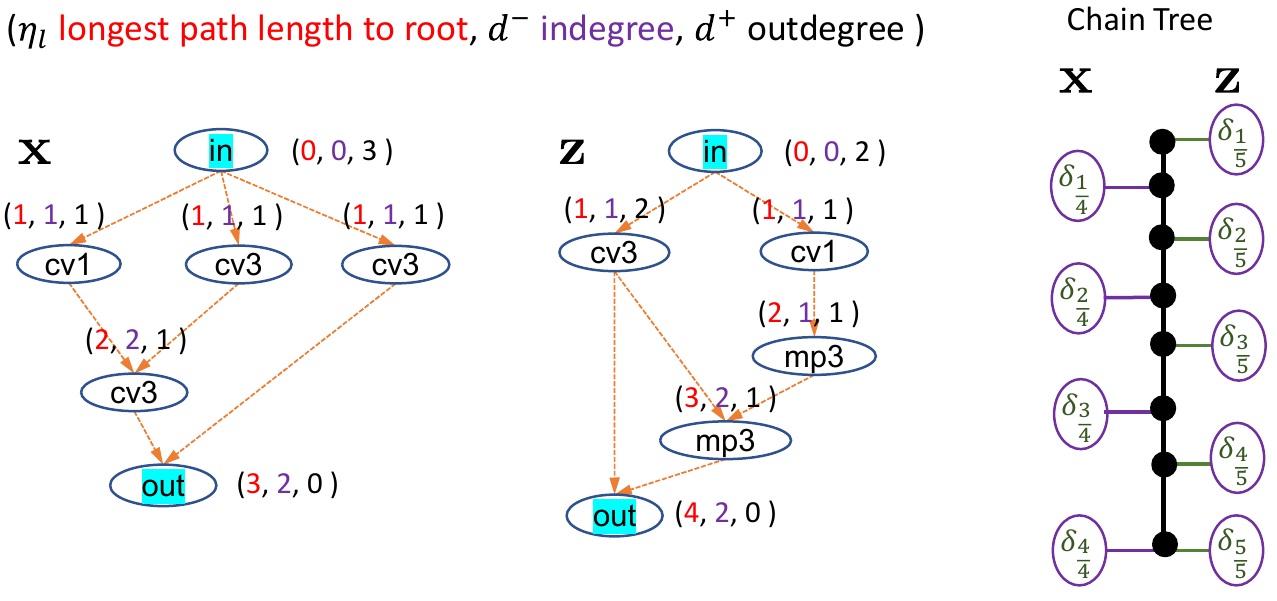}

    \caption{Illustration of indegree and outdegree used in TW. Let $\eta_{x,\ell}$ and $M_x$ be lengths of the longest paths from an input layer to a layer $\ell$ and to an output layer respectively. We can represent the empirical measure as $\omega^{d^-}_{\bx} = \sum_{\ell \in L_{\bx}} \bx^{d^{-}}_{\ell} \delta_{\frac{\eta_{x,\ell}+1}{M_x+1}}= \frac{3}{7}\delta_{\frac{2}{4}}+\frac{2}{7}\delta_{\frac{3}{4}} + \frac{2}{7}\delta_{\frac{4}{4}}$ and $\omega^{d^+}_{\bx} = \sum_{\ell \in L_{\bx}} \bx^{d^{+}}_{\ell} \delta_{\frac{\eta_{x,\ell}+1}{M_x+1}}= \frac{3}{7}\delta_{\frac{1}{4}}+\frac{3}{7}\delta_{\frac{2}{4}}+\frac{1}{7}\delta_{\frac{3}{4}}$ and for $\bz$ as $\omega^{d^-}_{\bz} = \sum_{\ell \in L_{\bz}} \bz^{d^{-}}_{\ell} \delta_{\frac{\eta_{z,\ell}+1}{M_z+1}}= \frac{2}{7}\delta_{\frac{2}{5}}+\frac{1}{7}\delta_{\frac{3}{5}}+\frac{2}{7}\delta_{\frac{4}{5}}+\frac{2}{7}\delta_{\frac{5}{5}}$ and $\omega^{d^+}_{\bx} = \sum_{\ell \in L_{\bz}} \bz^{d^{+}}_{\ell} \delta_{\frac{\eta_{z,\ell}+1}{M_z+1}}= \frac{2}{7}\delta_{\frac{1}{5}}+\frac{3}{7}\delta_{\frac{2}{5}}+\frac{1}{7}\delta_{\frac{3}{5}}+\frac{1}{7}\delta_{\frac{4}{5}}$. The tree, which is a chain (1/5 $\rightarrow$ 1/4 $\rightarrow$ 2/5 $\rightarrow$ 2/4 $\rightarrow$ 3/5 $\rightarrow$ 3/4 $\rightarrow$ 4/5 $\rightarrow$ 1), is used to compute the distance. } \label{fig:TW_InOut}
\end{figure*}

\paragraph{$\bullet$ $\dIn(\omega_{\bx}^{d^{-}},\omega_{\bz}^{d^{-}})$ for indegree representation.}

Using Eq. (\ref{equ:OT_LT}) we have
% \begin{equation}
% \dIn(\omega_{\bx}^{d^{-}},\omega_{\bz}^{d^{-}}) = \left|\frac{1}{3} - \frac{2}{3} \right| \frac{1}{7} = \frac{1}{21} = 0.0476. 
% \end{equation}
% \begin{equation}
% \dIn(\omega_{\bx}^{d^{-}},\omega_{\bz}^{d^{-}}) = \frac{1}{3}\underbrace{  \left|\frac{4}{7} - \frac{5}{7} \right|}_{ \omega_{\cdot}^{d-}\left(\Gamma\left(\delta_\frac{3}{3}\right)\right)} +   \frac{1}{3} \underbrace{\left|\frac{3}{7} - \frac{2}{7} +  \underbrace{ \frac{4}{7} - \frac{5}{7}}_{ \omega_{\cdot}^{d-}\left(\Gamma\left(\delta_\frac{3}{3}\right)\right)}\right|}_{\omega_{\cdot}^{d-}\left(\Gamma\left(\delta_\frac{2}{3}\right)\right)}  = \frac{1}{21} = 0.0476. 
% \end{equation}
% \begin{equation}
% \dIn(\omega_{\bx}^{d^{-}},\omega_{\bz}^{d^{-}}) = \frac{1}{3}\underbrace{  \left|\frac{4}{7} - \frac{5}{7} \right|}_{ \omega_{\cdot}^{d-}\left(\Gamma\left(\delta_\frac{3}{3}\right)\right)} +   \frac{1}{3} \underbrace{\left| \left(\frac{3}{7} +  \frac{4}{7}\right) - \left(\frac{2}{7} + \frac{5}{7}\right) \right|}_{\omega_{\cdot}^{d-}\left(\Gamma\left(\delta_\frac{2}{3}\right)\right)}  = \frac{1}{21} = 0.0476. 
% \end{equation}
% \frac{1}{3}\underbrace{  \left|\frac{4}{7} - \frac{5}{7} \right|}_{ \Gamma\left(\delta_\frac{3}{3}\right)}
\begin{eqnarray}
\dIn(\omega_{\bx}^{d^{-}},\omega_{\bz}^{d^{-}}) & = \left(\frac{1}{4} - \frac{1}{5}\right)\underbrace{  \left|\frac{7}{7} - \frac{7}{7} \right|}_{ \Gamma\left(\delta_\frac{1}{4}\right)} + \left(\frac{2}{5} - \frac{1}{4}\right)\underbrace{  \left|\frac{7}{7} - \frac{7}{7} \right|}_{ \Gamma\left(\delta_\frac{2}{5}\right)} + \left(\frac{2}{4} - \frac{2}{5}\right)\underbrace{  \left|\frac{7}{7} - \frac{5}{7} \right|}_{ \Gamma\left(\delta_\frac{2}{4}\right)} + \left(\frac{3}{5} - \frac{2}{4}\right)\underbrace{  \left|\frac{4}{7} - \frac{5}{7} \right|}_{ \Gamma\left(\delta_\frac{3}{5}\right)} \nonumber \\
& + \left(\frac{3}{5} - \frac{2}{4}\right)\underbrace{  \left|\frac{4}{7} - \frac{4}{7} \right|}_{ \Gamma\left(\delta_\frac{3}{4}\right)} + \left(\frac{3}{5} - \frac{2}{4}\right)\underbrace{  \left|\frac{2}{7} - \frac{4}{7} \right|}_{ \Gamma\left(\delta_\frac{4}{5}\right)} + \left(\frac{3}{5} - \frac{2}{4}\right)\underbrace{  \left|\frac{2}{7} - \frac{2}{7} \right|}_{ \Gamma\left(\delta_1\right)} = \frac{4}{70} = 0.0571
\end{eqnarray}

where for each value $\left( t_1 - t_2 \right)\left| a_1 - a_2 \right|$, the value in the parenthesis, e.g., $\left( t_1 - t_2 \right)$, is defined as the edge weights from $\delta_{t_1}$ to $\delta_{t_2}$ in Fig. \ref{fig:TW_InOut} and the values in the absolute difference, e.g., $\left| a_1 - a_2 \right|$ is the total mass of empirical measures in the subtrees rooted as the deeper node (i.e., $\delta_{t_2}$) of corresponding edge (from $\delta_{t_1}$ to $\delta_{t_2}$) as defined in Eq. (\ref{equ:OT_LT}).

%It is special in the second bracket that the cost of $\delta_{\frac{2}{3}}$ will involve an additional cost to the child node $\delta_{\frac{3}{3}}$, as the property of tree-Wasserstein.

\paragraph{$\bullet$ $\dOut(\omega_{\bx}^{d^{+}},\omega_{\bz}^{d^+})$ for outdegree representation.}
Similarly, for outdegree representation, we have
% \begin{equation}
% \dOut(\omega_{\bx}^{d^{+}},\omega_{\bz}^{d^{+}}) = \left|\frac{1}{3} - 1 \right| \frac{1}{7} = \frac{2}{21} = 0.0952. 
% \end{equation}

% \begin{align}
% \dOut(\omega_{\bx}^{d^{+}},\omega_{\bz}^{d^{+}}) &= \frac{1}{3} \underbrace{\left|\frac{1}{7} - \frac{2}{7} \right|}_{\omega_{\cdot}^{d+}\left(\Gamma\left(\delta_\frac{3}{3}\right)\right)} +  \frac{1}{3} \underbrace{\left| \left(\frac{3}{7} +  \frac{1}{7}\right) + \left(\frac{3}{7} - \frac{2}{7} \right)\right|}_{\omega_{\cdot}^{d+}\left(\Gamma\left(\delta_\frac{2}{3}\right)\right)} = \frac{2}{21} = 0.0952. 
% \end{align}
\begin{eqnarray}
\dOut(\omega_{\bx}^{d^{+}},\omega_{\bz}^{d^{+}}) & = \left(\frac{1}{4} - \frac{1}{5}\right)\underbrace{  \left|\frac{7}{7} - \frac{5}{7} \right|}_{ \Gamma\left(\delta_\frac{1}{4}\right)} + \left(\frac{2}{5} - \frac{1}{4}\right)\underbrace{  \left|\frac{4}{7} - \frac{5}{7} \right|}_{ \Gamma\left(\delta_\frac{2}{5}\right)} + \left(\frac{2}{4} - \frac{2}{5}\right)\underbrace{  \left|\frac{4}{7} - \frac{2}{7} \right|}_{ \Gamma\left(\delta_\frac{2}{4}\right)} + \left(\frac{3}{5} - \frac{2}{4}\right)\underbrace{  \left|\frac{1}{7} - \frac{2}{7} \right|}_{ \Gamma\left(\delta_\frac{3}{5}\right)} \nonumber \\
& + \left(\frac{3}{5} - \frac{2}{4}\right)\underbrace{  \left|\frac{1}{7} - \frac{1}{7} \right|}_{ \Gamma\left(\delta_\frac{3}{4}\right)} + \left(\frac{3}{5} - \frac{2}{4}\right)\underbrace{  \left|\frac{0}{7} - \frac{1}{7} \right|}_{ \Gamma\left(\delta_\frac{4}{5}\right)} + \left(\frac{3}{5} - \frac{2}{4}\right)\underbrace{  \left|\frac{0}{7} - \frac{0}{7} \right|}_{ \Gamma\left(\delta_1\right)} = \frac{6}{70} = 0.0857.
\end{eqnarray}

%\\ + \frac{1}{3} \underbrace{ \left| \left(\frac{3}{7} + \frac{3}{7} +   \frac{1}{7}\right) - \left(\frac{2}{7} +  \frac{3}{7}  + \frac{2}{7} \right) \right|}_{\omega_{\cdot}^{d+}\left(\Gamma\left(\delta_\frac{1}{3}\right)\right)}

%The cost of $\delta_{\frac{2}{3}}$ will involve an additional cost to the child node $\delta_{\frac{3}{3}}$ and the cost of $\delta_{\frac{1}{3}}$ will involve all child costs of $\delta_{\frac{2}{3}}$ and $\delta_{\frac{3}{3}}$.

From $\dOperation$, $\dIn$ and $\dOut$, we can obtain the discrepancy $d_{\dNN}$ between neural networks $\bx$ and $\bz$ as in Eq.~\eqref{equ:dNN} with predefined values $\alpha_1, \alpha_2, \alpha_3$.

%\vcom{remark on computing distance across different architecture size}

%\begin{remark}
%In case one represents indegree and outdegree as vectors for neural networks, popular distances such as $\ell_1$ distance can only be applied when input neural networks have the same number of layers. In our work, we represent indegree and outdegree as probability measures, and leverage tree-Wasserstein distance to compare them. Our approach can be applied for neural networks regardless its number of layers.
%\end{remark}

%In this special example, there is a unique tree for both indegree and outdegree. But there could be different trees for other examples.

% \begin{align}
% \omega^{d^-}_{\bx} = \sum_{\ell \in L_{\bx}} \bx^{d^{-}}_{\ell} \delta_{\frac{\eta_{x,\ell}+1}{M_x+1}}, \qquad \omega^{d^+}_{\bx} = \sum_{\ell \in L_{\bx}} \bx^{d^{+}}_{\ell} \delta_{\frac{\eta_{x,\ell}+1}{M_x+1}},
% \end{align}

% Python implementation for computing the chain for $\dOut(\omega_{\bx}^{d^{+}},\omega_{\bz}^{d^{+}})$ and $\dIn(\omega_{\bx}^{d^{-}},\omega_{\bz}^{d^{-}})$ is illustrated in Fig. \ref{fig:TW_Chain_Python} .

% \begin{figure} [t]  
%      \centering
%     % trim={<left> <lower> <right> <upper>}
%     \includegraphics[trim=0cm 0.cm 0cm  0.cm, clip, width=0.6\linewidth]{NeurIPS_2020/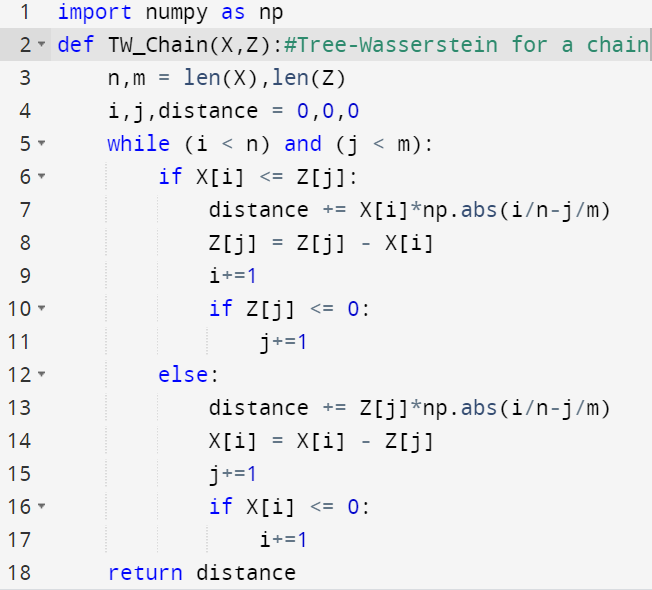}

%     \caption{Python code for computing TW Chain -- an important step in computing $\dIn$ and $\dOut$.} \label{fig:TW_Chain_Python}
% \end{figure}

%%%%%%%%%%%%%%%%%%%%%%%
\section{Optimizing hyperparameters in TW and GP}\label{app:sec:opt_hyper}

As equivalence, we consider $\lambda_1=\frac{\alpha_1}{\sigma^2_l}$, $\lambda_2=\frac{\alpha_2}{\sigma^2_l}$ and $\lambda_3=\frac{1-\alpha_1-\alpha_2}{\sigma^2_l}$ in Eq. (\ref{equ:TWKernel_NN}) and present the derivative for estimating the variable $\lambda$ in our kernel.
\begin{align}
\label{eq:app-CoCaBO-kernel}
    k(\mathbf{u}, \mathbf{v}) = \exp \left(-\lambda_1 \dOperation(\mathbf{u}, \mathbf{v})
    -\lambda_2 \dIn(\mathbf{u}, \mathbf{v}) -\lambda_3 \dOut(\mathbf{u}, \mathbf{v}) \right)   .
\end{align}
The hyperparameters of the kernel are optimised by maximising the log marginal likelihood (LML) of the GP surrogate
%\vcom{we should present this LML from Rasmussen book. Next, we show that taking derivaties of LML will be depending on the derivative of k. Then, we have our formula for derivatives of k.}
\begin{equation}
\label{eq:app-LML-1}
    \theta^* = \arg \max_\theta \mathcal{L}(\theta, \mathcal{D}),
\end{equation}
where we collected the hyperparameters into $\theta = \{\lambda_1, \lambda_2, \lambda_3, \sigma^2_n \}$. 
The LML \cite{Rasmussen_2006gaussian} and its derivative are defined as 
\begin{equation}
    \label{eq:app:LML}
    \mathcal{L}(\theta) = -\frac{1}{2}\mathbf{y}^\intercal \mathbf{K}^{-1}\mathbf{y} 
    - \frac{1}{2}\log |\mathbf{K}| + \text{constant}
\end{equation}
\begin{equation}
    \label{eq:app:LMLderiv}
    \frac{\partial\mathcal{L}}{\partial \theta} = \frac{1}{2}\left(\mathbf{y}^\intercal \mathbf{K}^{-1}\frac{\partial\mathbf{K}}{\partial\theta}\mathbf{K}^{-1}\mathbf{y} 
    -\text{tr}\left( \mathbf{K}^{-1} \frac{\partial\mathbf{K}}{\partial\theta} \right)\right),
\end{equation}
where $\mathbf{y}$ are the function values at sample locations and $\mathbf{K}$ is the covariance matrix of $k(\mathbf{x}, \mathbf{x}')$ evaluated on the training data.
Optimization of the LML was performed via multi-started gradient descent. 
The gradient in Eq. (\ref{eq:app:LMLderiv}) relies on the gradient of the kernel $k$ w.r.t. each of its parameters:
\begin{align}
    \frac{\partial k(\mathbf{u}, \mathbf{v}) }{\partial\lambda_1} &=
    -\dOperation(\mathbf{u}, \mathbf{v}) \times k(\mathbf{u}, \mathbf{v}) \\ 
   \frac{\partial k(\mathbf{u}, \mathbf{v}) }{\partial\lambda_2} &=
    -\dIn(\mathbf{u}, \mathbf{v}) \times k(\mathbf{u}, \mathbf{v}) \\ 
    \frac{\partial k(\mathbf{u}, \mathbf{v}) }{\partial\lambda_3} &=
    -\dOut(\mathbf{u}, \mathbf{v}) \times k(\mathbf{u}, \mathbf{v}) .
\end{align}

%As opposed to NASBOT with 11 hyper-parameters of its own, our three hyperparameters are less vulnerable to overfitting when optimizing with the GP marginal likelihood.

\subsection{Proof for Proposition \ref{pro:kDPP}}\label{sec:proof_pro:kDPP}

\begin{proof}
Let $A$ and $B$ be the training and test set respectively, we utilize the Schur complement to have
$K_{A \cup B} = K_A \times \left[ K_B - K_{BA} K_A^{-1} K_{AB} \right]$  and the probability of selecting $B$ is
\begin{align}
P \left( B \subset \mathcal{P} \mid A \right) = \frac{ \det \left( K_{A \cup B} \right) } { \det(K_A) } = \det \left(  K_B - K_{BA} K_A^{-1} K_{AB} \right) = \det \left( \sigma(B \mid A) \right).   \label{eq:cond_kdpp_gpvar} 
\end{align}
This shows that the conditioning of k-DPP is equivalent to the GP predictive variance $\sigma(B \mid A)$ in Eq. (\ref{eq:GPvar}).

\end{proof}

%%%%%%%%%%%%%%%%%%%
\section{Distance Properties Comparison} \label{sec:app:distance_property}

We summarize the key benefits of using tree-Wasserstein (n-gram) as the main distance with GP for sequential NAS and k-DPP for batch NAS in Table \ref{tab:property_comparison}. Tree-Wasserstein offers close-form computation and positive semi-definite covariance matrix which is critical for GP and k-DPP modeling.

%\vcom{Tam Le please comment more if we can write more here @Answer: I will think about this.}

\paragraph{Comparison with graph kernel.}
Besides the adjacency matrix representation, each architecture includes layer masses and operation type. We note that two different architectures may share the same adjacency matrix while they are different in operation type and layer mass. 

\paragraph{Comparison with path-based encoding.}
TW can scale well to more nodes, layers while the path-based encoding is limited to.

\paragraph{Comparison with OT approaches in computational complexity.}
In general, OT is formulated as a linear programming problem and its computational complexity is super cubic in the size of probability measures \cite{burkard1999} (e.g., using network simplex). On the other hand, TW has a closed-form computation in Eq.~\eqref{equ:OT_LT}, and its computational complexity is linear to the number of edges in the tree. Therefore, TW is much faster than OT in applications \cite{le2019tree}, and especially useful for large-scale settings where the computation of OT becomes prohibited.

%Comparing to edit distance is not suitable for different sizes of architectures.

\begin{table*}
\caption{Properties comparison across different distances for using with GP-BO and k-DPP. GW is Gromov-Wasserstein. TW is tree-Wasserstein. OT is optimal transport.}
 \centering
\begin{tabular}{lllllllll}

\toprule
 Representation &  Matrix/Graph & Path-Encoding & OT (or W) & GW &  TW \\
 % Distance &  XX & Edit \cite{navarro2001guided} & L1 & NASBOT & TW \\

\midrule
Closed-form estimation &  \cmark  & \cmark  & \xmark  & \xmark & \cmark  \\
Positive semi definite &  \cmark  & \cmark  & \xmark  & \xmark & \cmark  \\
Different architecture sizes  & \cmark , \xmark  & \xmark  & \cmark  & \cmark  & \cmark  \\
Scaling with architecture size   & \cmark   & \xmark  & \cmark   & \cmark  & \cmark \\
% Operation type and layer mass   & \xmark   & \xmark  & \cmark & \cmark  & \cmark   \\
\midrule
\bottomrule
\end{tabular}
 \label{tab:property_comparison}
\end{table*}

\begin{figure*} [t]  
   
     \centering
    % trim={<left> <lower> <right> <upper>}
    \includegraphics[trim=0cm 0.cm 0cm  0.cm, clip, width=0.32\textwidth]{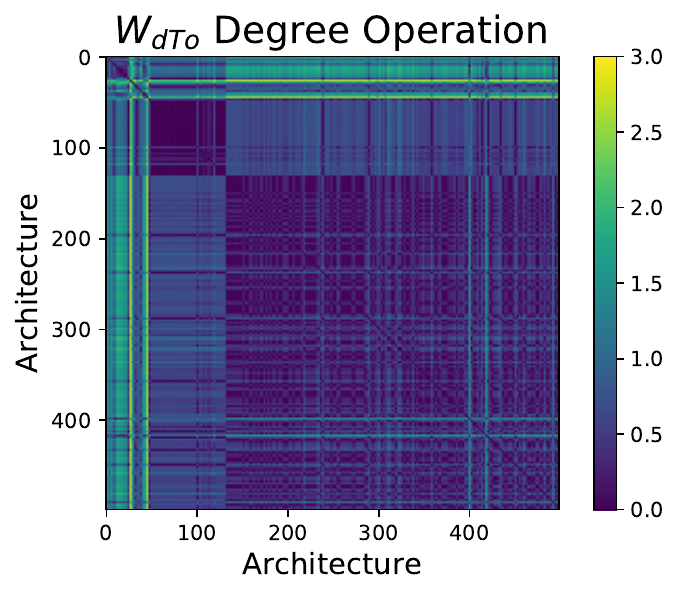}   
\includegraphics[trim=0cm 0.cm 0cm  0.cm, clip, width=0.32\textwidth]{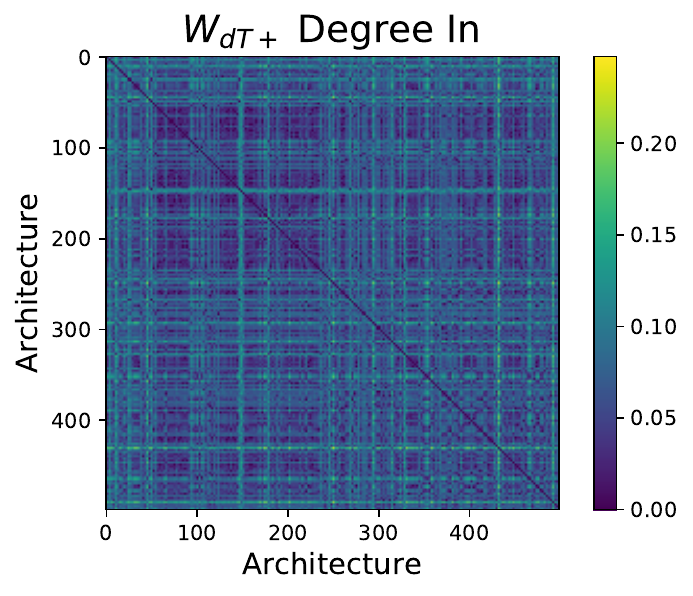}   
\includegraphics[trim=0cm 0.cm 0cm  0.cm, clip, width=0.32\textwidth]{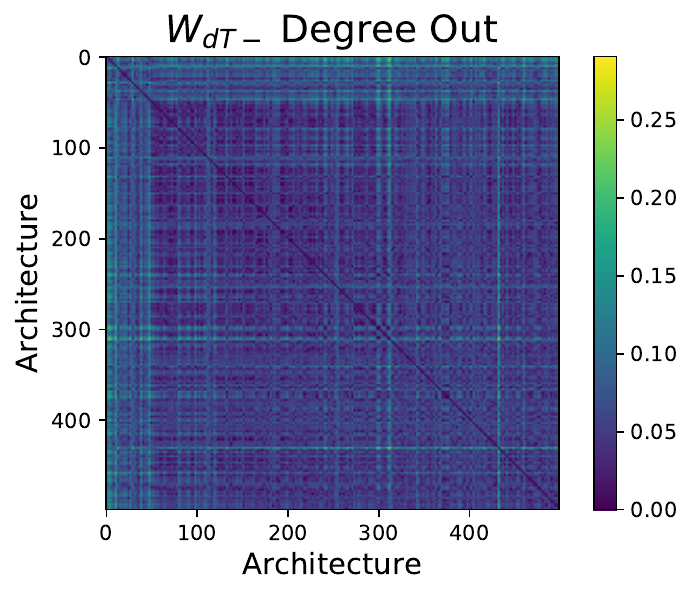}

    \caption{Tree-Wasserstein distances over 500 architectures on NASBENCH101.} \label{fig:TW_distances}
\end{figure*}

\begin{figure*} [t]  
     \centering
    % trim={<left> <lower> <right> <upper>}
\includegraphics[trim=0cm 0.cm 0cm  0.cm, clip, width=0.32\linewidth]{fig/seq_test_imagenet.pdf}
\includegraphics[trim=0cm 0.cm 0cm  0.cm, clip, width=0.32\linewidth]{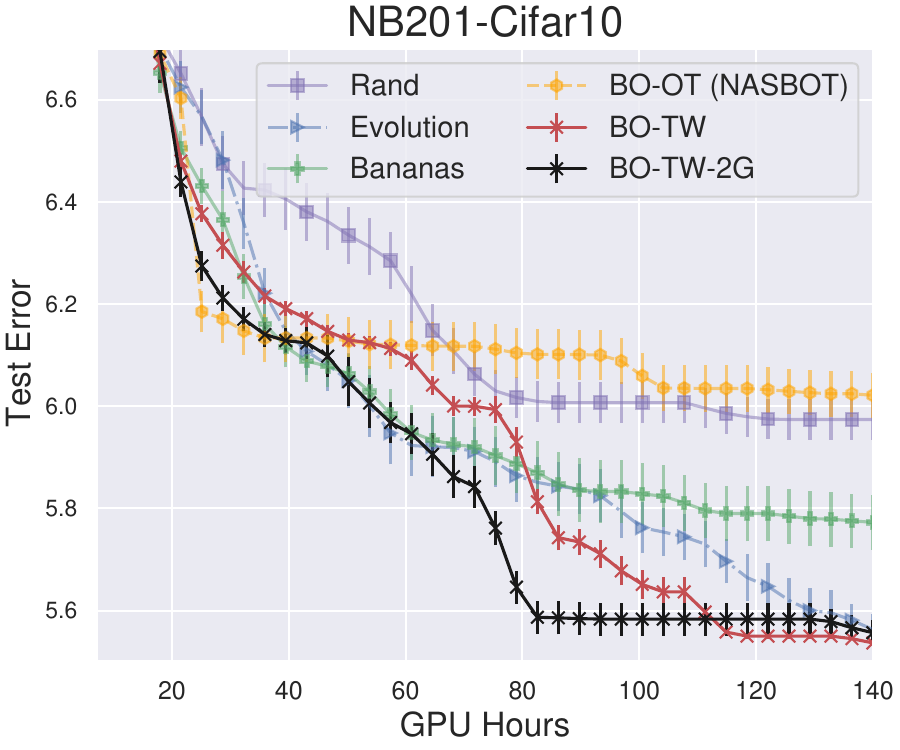}
\includegraphics[trim=0cm 0.cm 0cm  0.cm, clip, width=0.32\linewidth]{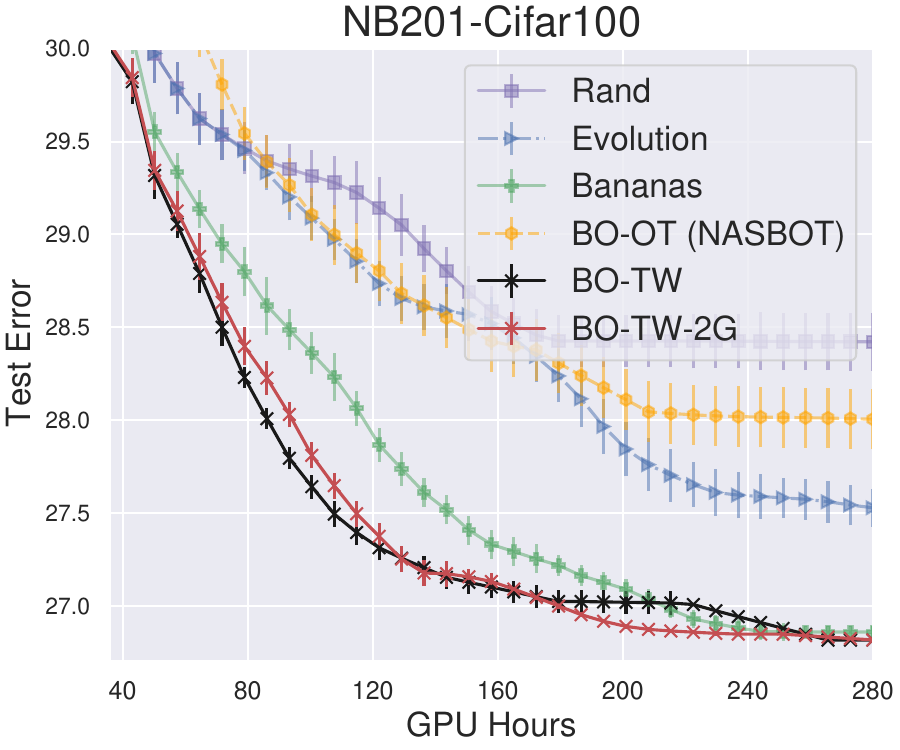}
    \caption{Additional sequential NAS comparison on NASBENCH201.} \label{fig:NASBENCH201}
\end{figure*}

%%%%%%%%%%%%%%%%%%%%%%%%
\section{Additional Experiments and Illustrations}\label{sec:app:add_experiments}

\subsection{Model Analysis} \label{sec:model_analysis}

We illustrate three internal distances of our tree-Wasserstein including $\dOperation$, $\dOut$, $\dIn$ in Fig. \ref{fig:TW_distances}. Each internal distance captures different aspects of the networks. The zero diagonal matrix indicates the correct estimation of the same neural architectures. %Given these distances, we also plot the covariance matrix $k(X,X)$  in Fig. \ref{fig:cov_mat}

% We illustrate that TW computes reasonable distances on neural network architectures via a
% two-dimensional t-SNE visualisation \cite{tsne} of the network architectures based. Given a distance matrix
% between $50$ random architectures, t-SNE embeds them in a $2$-dimensional space so that objects with small distances
% are placed closer to those that have larger distances. Fig. \ref{fig:tsne} shows the t-SNE embedding using
% the TW distance. We have indexed $3$ similar networks {A,B,C} and $3$ different networks {D,E,F} in the Bottom Fig. \ref{fig:tsne}. Similar networks A,B,C are placed close to each other
% indicating that TW induces a meaningful topology among neural network architectures.

%\begin{figure*} [t]  
%     \centering
% \includegraphics[trim=0cm 0.cm 0cm  0.cm, clip, %width=0.49\linewidth]{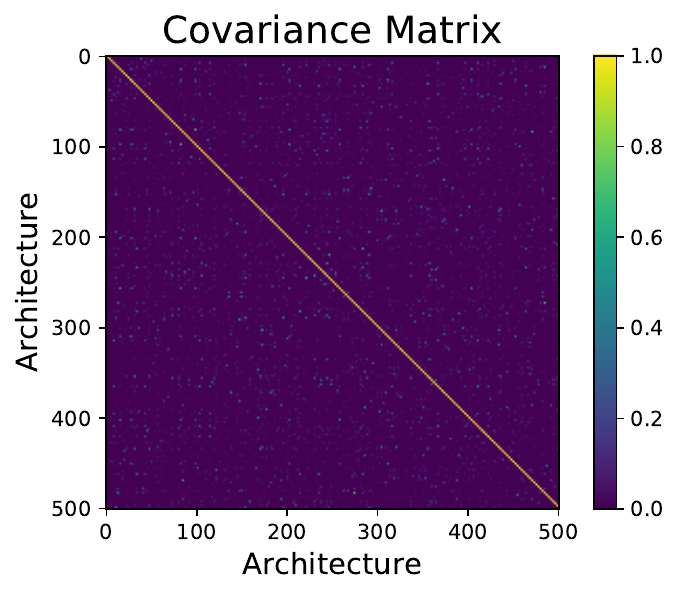}
%    \caption{Covariance matrix over 500 architectures on %NASBENCH101.} \label{fig:cov_mat}
%\end{figure*}

%\begin{wrapfigure} {r}{0.45\textwidth}
\begin{figure} %{r}{0.45\textwidth}
    \vspace{-1pt}
     \centering
    % trim={<left> <lower> <right> <upper>}
    \includegraphics[trim=0cm 0.cm 0cm  0.cm, clip, width=0.4\linewidth]{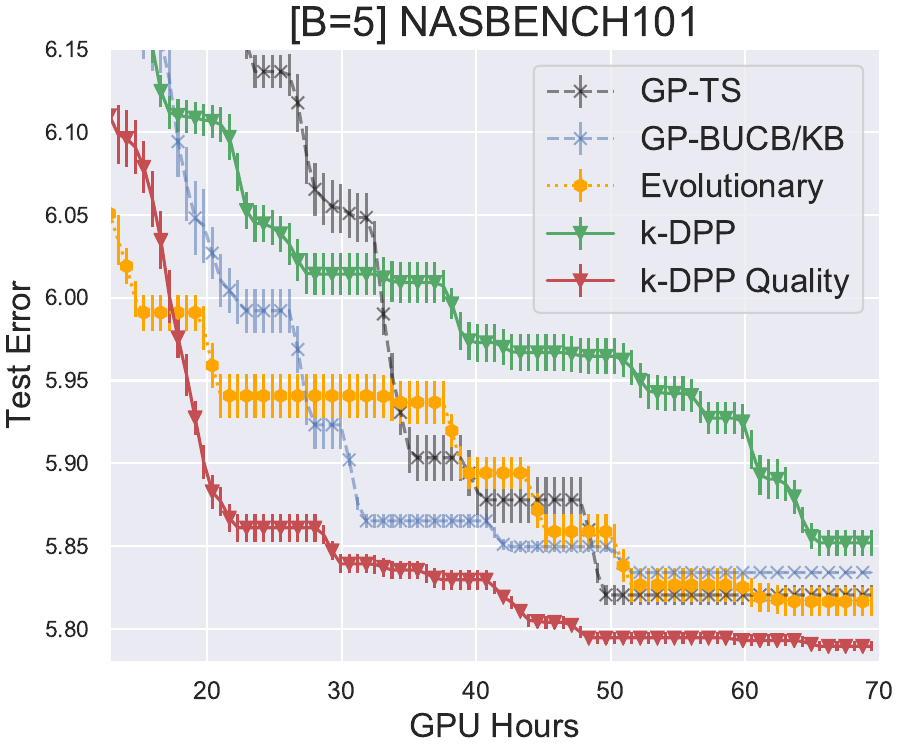}

    \caption{Additional result of batch NAS on NB101. We use TW-2G and a batch size $B=5$} \label{fig:batch_NAS_val_201}
    \vspace{-1pt}
\end{figure}

% \begin{figure*} [t]  
   
%      \centering
%     % trim={<left> <lower> <right> <upper>}
%     \includegraphics[trim=0cm 0.cm 0cm  0.cm, clip, width=0.97\textwidth]{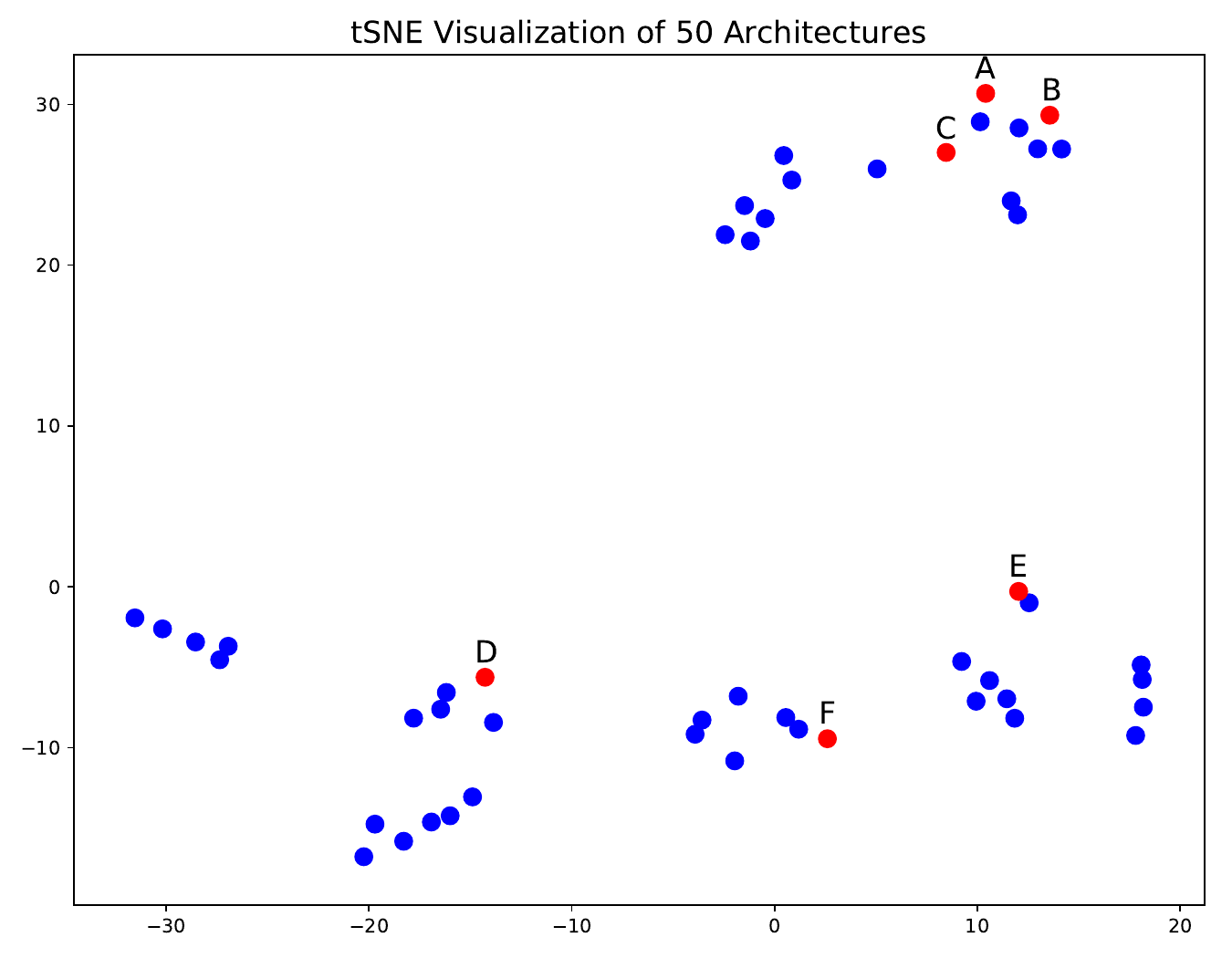}   
%      \centering
%     % trim={<left> <lower> <right> <upper>}
    
%  \includegraphics[trim=0cm 0.cm 0cm  0.cm, clip, width=0.99\textwidth]{NeurIPS_2020/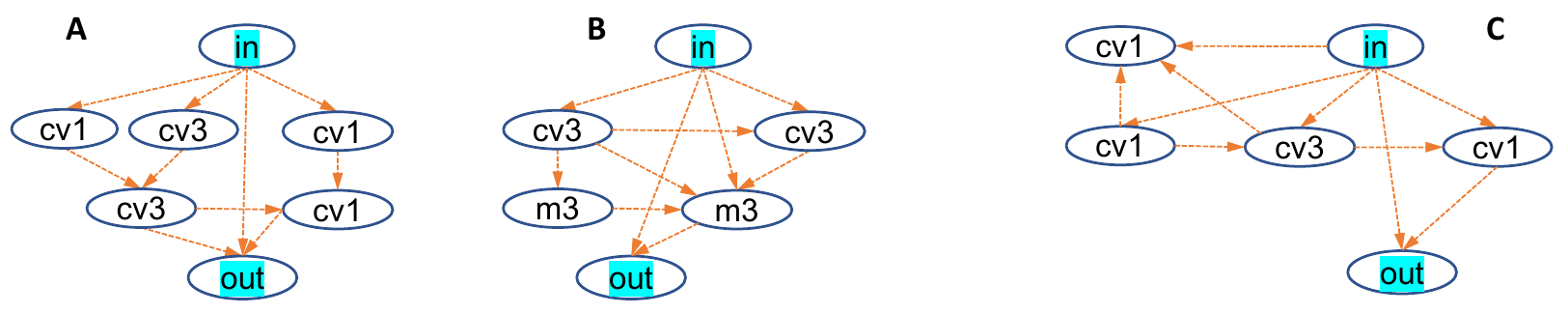}  
%     \includegraphics[trim=0cm 0.cm 0cm  0.cm, clip, width=0.32\textwidth]{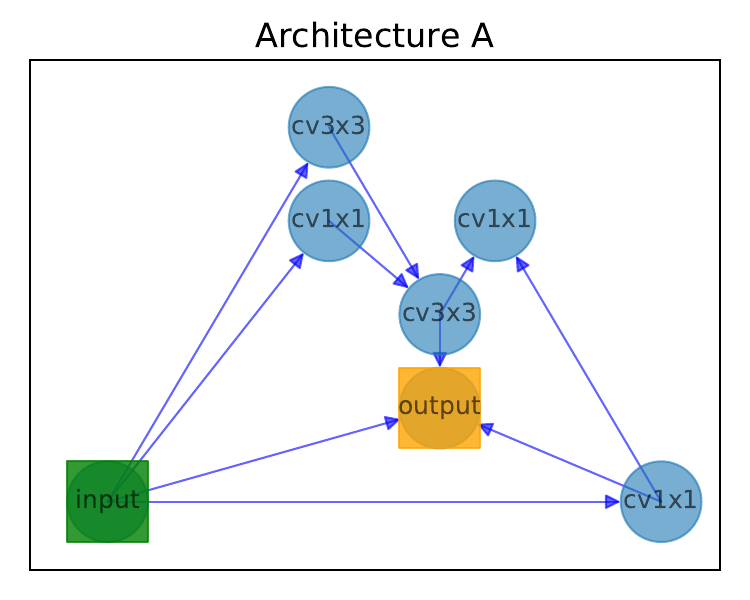}   
% \includegraphics[trim=0cm 0.cm 0cm  0.cm, clip, width=0.32\textwidth]{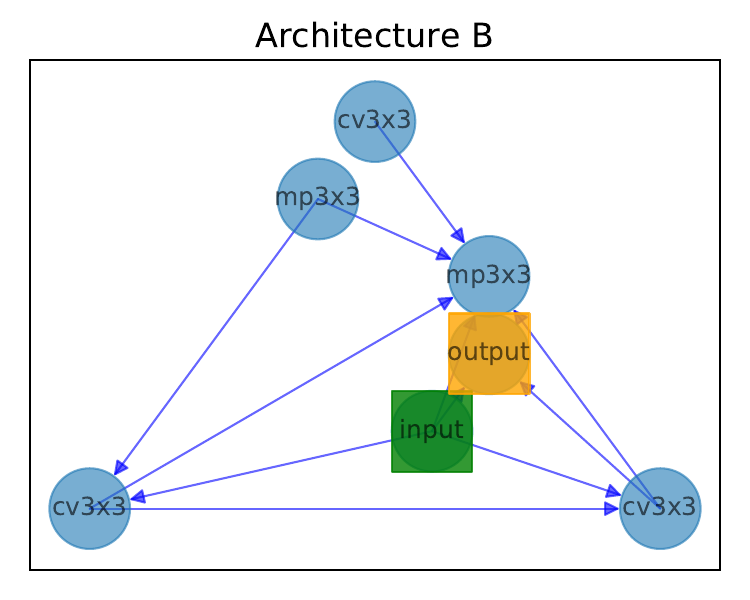}   
% \includegraphics[trim=0cm 0.cm 0cm  0.cm, clip, width=0.32\textwidth]{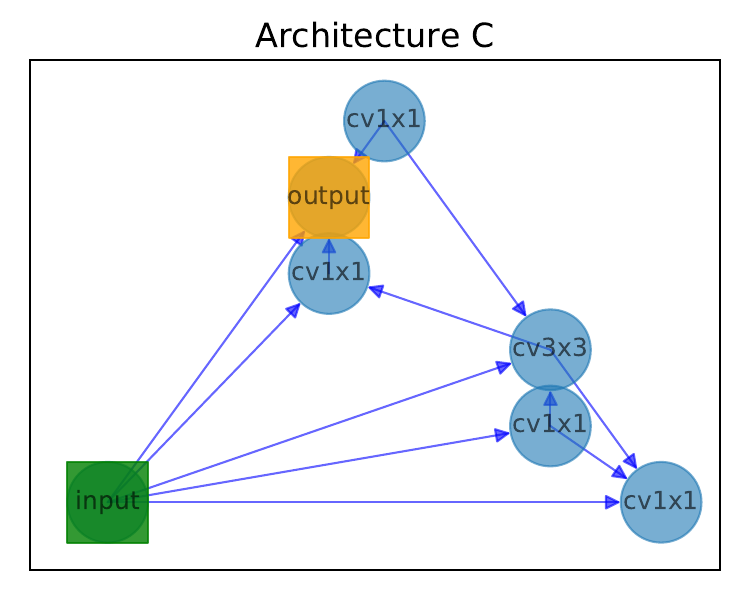}
%  %\includegraphics[trim=0cm 0.cm 0cm  0.cm, clip, width=0.49\linewidth]{fig/covariance_mat.pdf}
 
%   \vspace{15pt}
  
%  \includegraphics[trim=0cm 0.cm 0cm  0.cm, clip, width=0.99\textwidth]{NeurIPS_2020/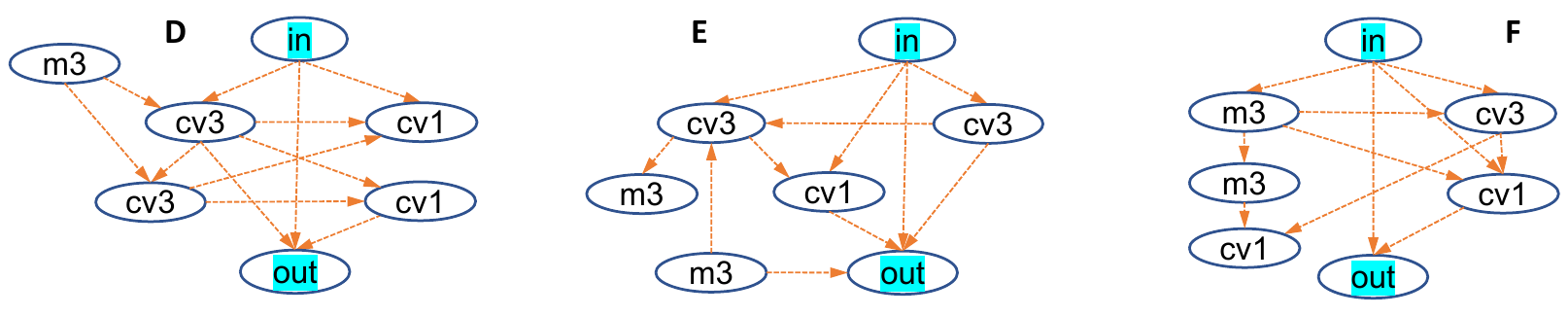}   
% %     \includegraphics[trim=0cm 0.cm 0cm  0.cm, clip, width=0.32\textwidth]{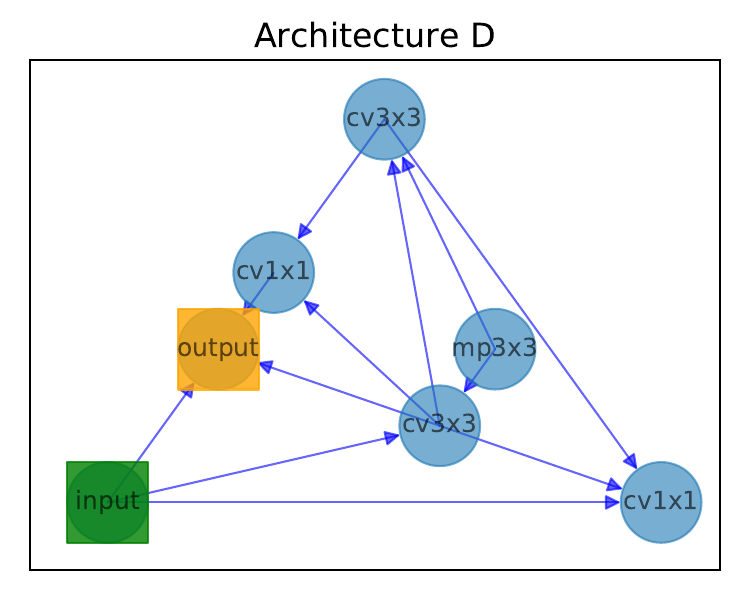}   
% % \includegraphics[trim=0cm 0.cm 0cm  0.cm, clip, width=0.32\textwidth]{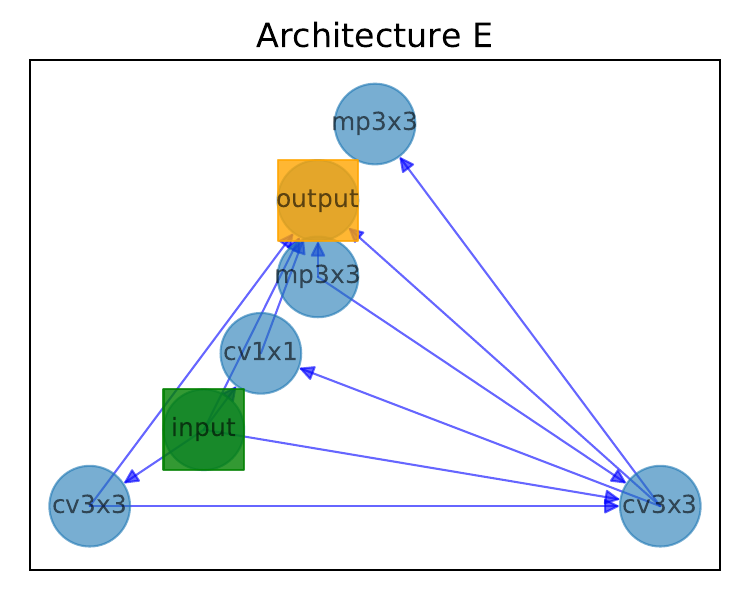}   
% % \includegraphics[trim=0cm 0.cm 0cm  0.cm, clip, width=0.32\textwidth]{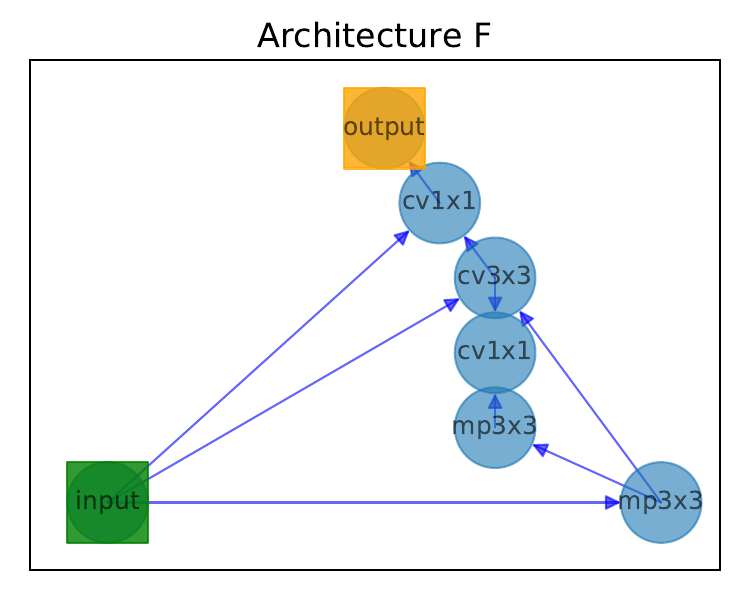}

%     \caption{Top: tSNE visualization of $50$ random architectures from NASBENCH101 dataset using Tree-Wasserstein. We highlight three \textbf{similar} architectures in A, B, C and three \textbf{different} architectures in D, E, F. In the bottom, we plot the architectures of A, B, C, D, E, F in details.} \label{fig:tsne}
% \end{figure*}

% We plot the best found architectures by our algorithm in Fig. \ref{fig:BestArchitectures}.

\subsection{Further sequential and batch NAS experiments}
To complement the results presented in the main paper, we present additional experiments on both sequential and batch NAS setting using NB101 and NB201 dataset in Fig. \ref{fig:NASBENCH201}. 
In addition, we present experiments on batch NAS settings in Fig. \ref{fig:batch_NAS_val_201} that the proposed k-DPP quality achieves  the best performance consistently.

% \begin{figure*} [t]  
   
%      \centering
%     % trim={<left> <lower> <right> <upper>}
%     \includegraphics[trim=0cm 0.cm 0cm  0.cm, clip, width=0.45\textwidth]{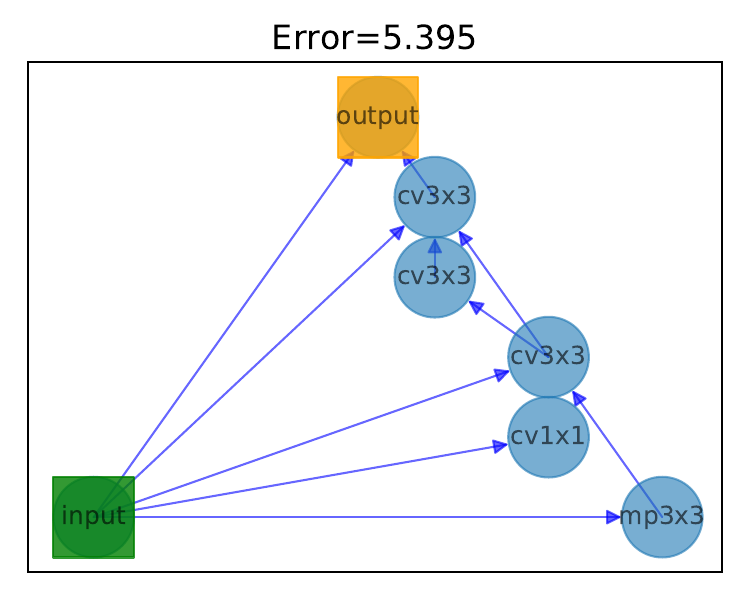}   
%      \centering
%     % trim={<left> <lower> <right> <upper>}
%     \includegraphics[trim=0cm 0.cm 0cm  0.cm, clip, width=0.45\textwidth]{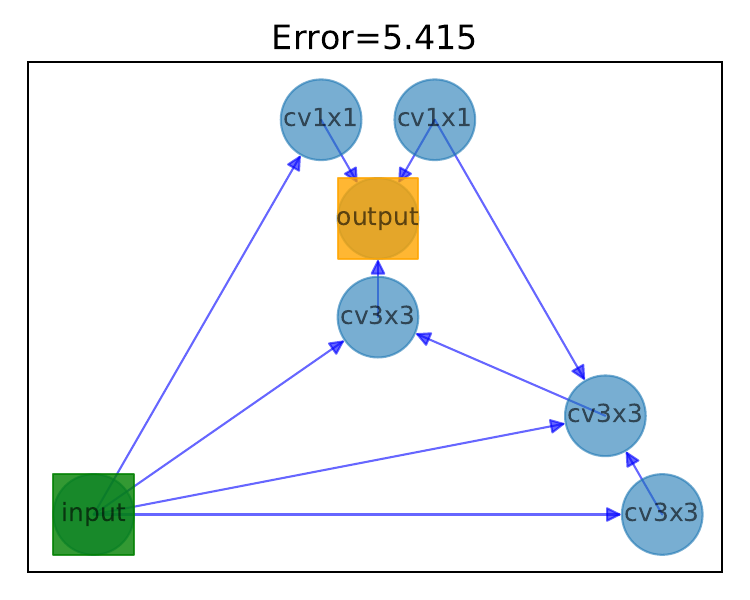}   
    
% \includegraphics[trim=0cm 0.cm 0cm  0.cm, clip, width=0.45\textwidth]{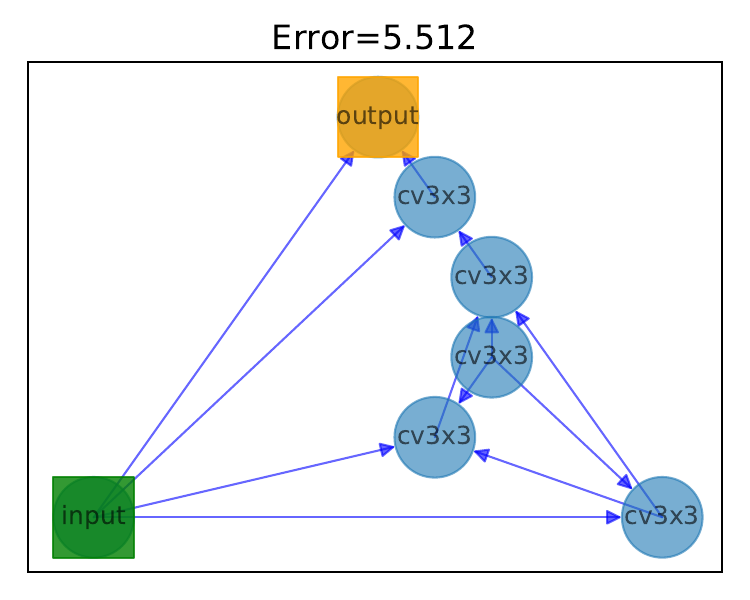}   
% \includegraphics[trim=0cm 0.cm 0cm  0.cm, clip, width=0.45\textwidth]{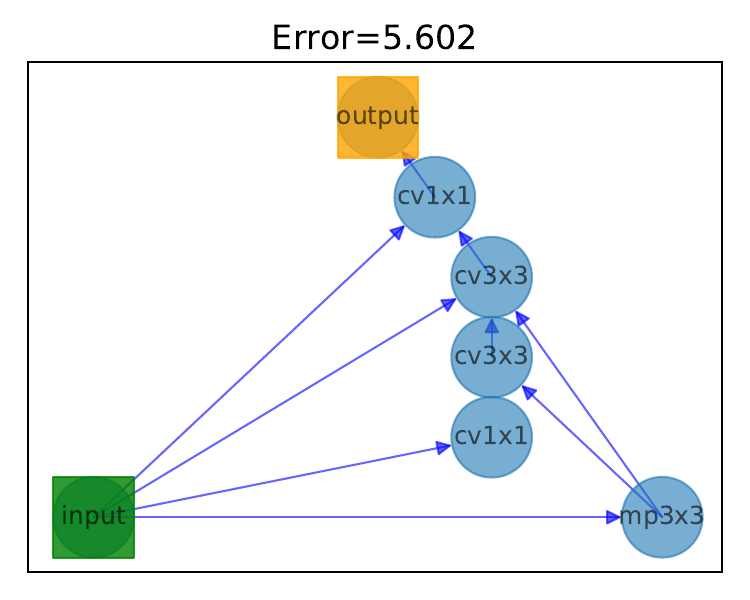}
%  %\includegraphics[trim=0cm 0.cm 0cm  0.cm, clip, width=0.49\linewidth]{fig/covariance_mat.pdf}

%     \caption{Visualization of the best found architectures on NASBENCH101.} \label{fig:BestArchitectures}
% \end{figure*}

\subsection{Ablation study using different acquisition functions}
We evaluate our proposed model using two comon acquisition functions including UCB and EI. The result suggests that UCB tends to perform much better than EI for our NAS setting. This result is consistent with the comparison presented in Bananas \cite{white2019bananas}.

\begin{figure} %{r}{0.5\textwidth}  
     \centering
\includegraphics[trim=0cm 0.cm 0cm  0.cm, clip, width=0.48\textwidth]{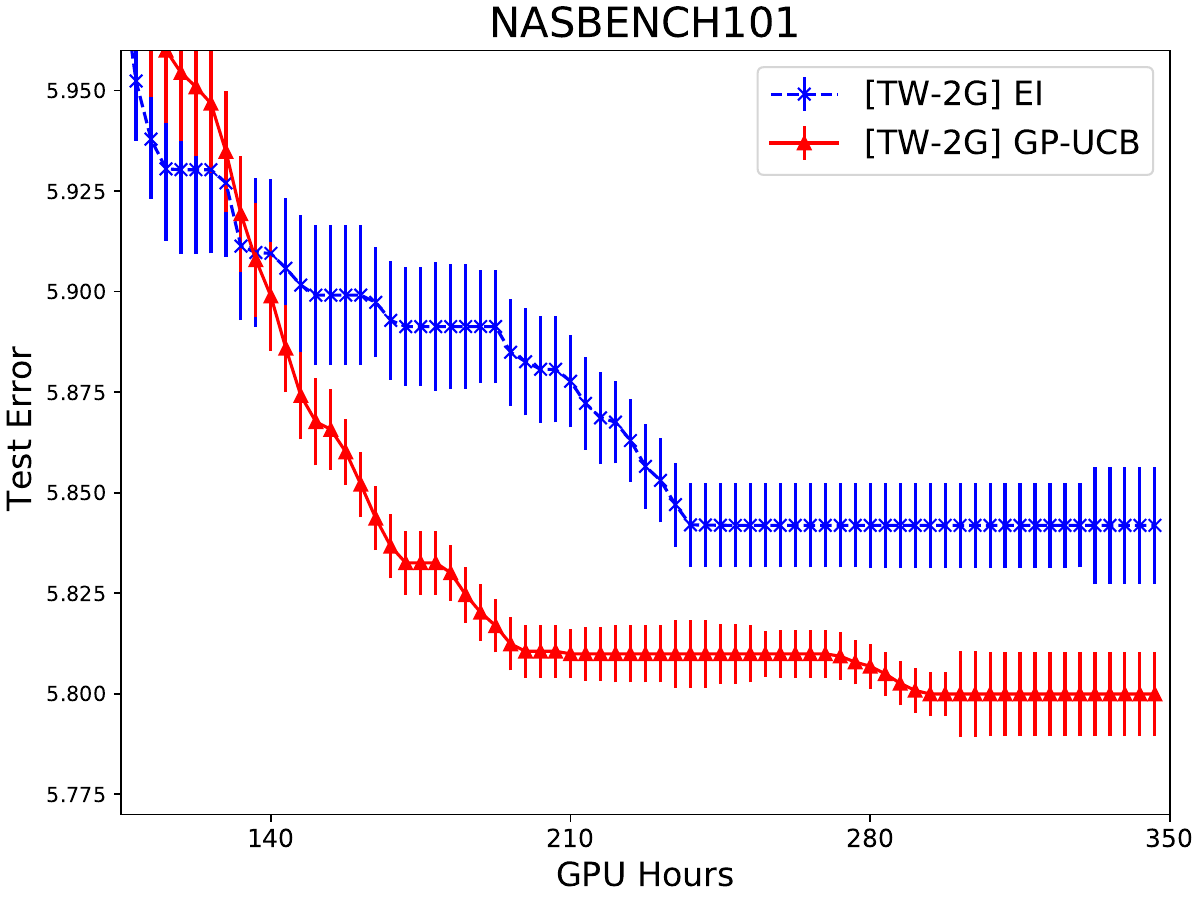}

    \caption{Optimizing the acquisition function using GP-UCB and EI on NB101. The results suggest that using GP-UCB will lead to better performance overall.} \label{fig:BestArchitectures}
    \vspace{0pt}
\end{figure}
%We provide further illustrations of the proposed TW distance.

\subsection{Ablation study with different batch size $B$}

Finally, we study the performance with different choices of batch size $B$ in Fig. \ref{fig:performance_batchB} which naturally confirms that the performance increases with larger batch size $B$.

\begin{figure}%{r}{0.33\textwidth}
    %\vspace{-20pt}
  \begin{center}
    \includegraphics[width=0.5\textwidth]{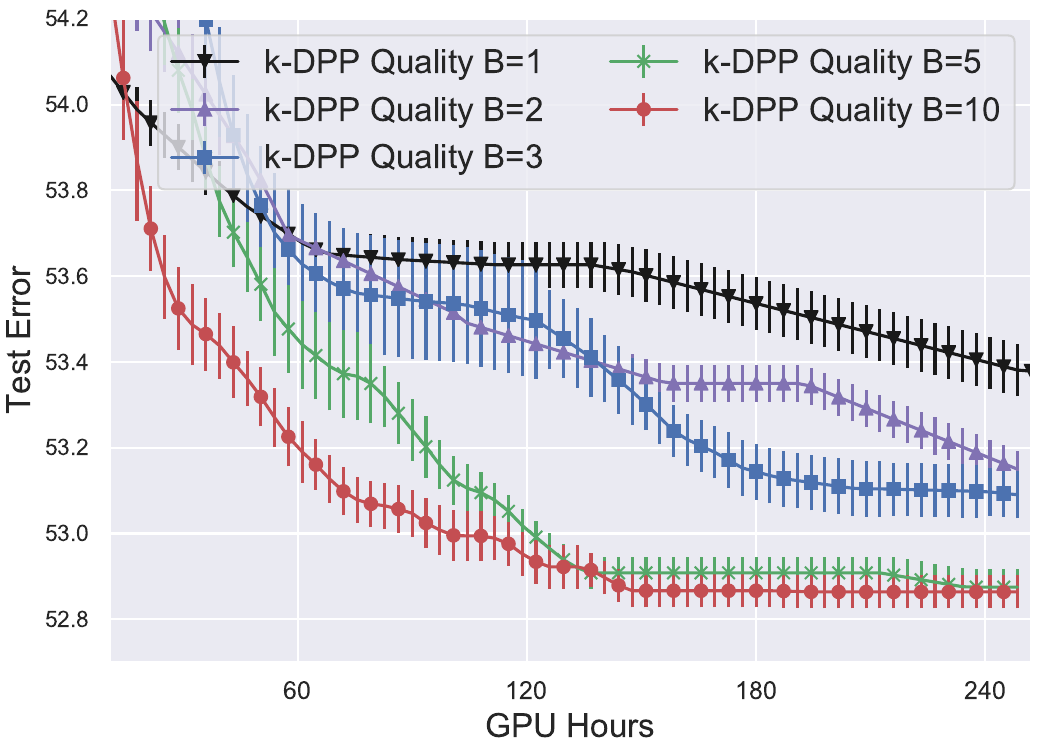}
  \end{center}
    \caption{Performance with different batch sizes $B$ on Imagenet. The result shows that the performance increases with larger batch size $B$, given the same wall-clock time budget (or batch iteration). } \label{fig:performance_batchB}
   % \vspace{-15pt}
\end{figure}
\end{document}